\newcounter{ToDo}
\newtheorem{thm}{Theorem} 
\newtheorem{lem}{Lemma}
\newtheorem{prop}{Proposition}
\newtheorem{defn}{Definition}
\newtheorem{rem}{Remark}
\pgfplotsset{compat=1.15}
\date{}
\newcounter{Note}
\definecolor{blue-violet}{rgb}{0.54, 0.17, 0.89}
\definecolor{mygreen}{rgb}{0.0, 0.5, 0.0}
\definecolor{awesome}{rgb}{1.0, 0.13, 0.32}
\definecolor{bostonuniversityred}{rgb}{0.8, 0.0, 0.0}
\newcounter{guocomm}
\title{A Discussion On the Validity of Manifold Learning}
\author{%
  Dai Shi\footnote{School of Computer, Data and Mathematical Sciences, Western Sydney University, 20195423@student.westernsydney.edu.au  } \quad Andi Han\footnote{Discipline of Business Analytics, University of Sydney. Email: \{andi.han, junbin.gao\}@sydney.edu.au. \protect\label{usyd}} \quad  Yi Guo\footnote{Corresponding Author, School of Computer, Data and Mathematical Sciences, Y.Guo@westernsydney.edu.au} \quad  Junbin Gao\footref{usyd}
  
}
\begin{document}

\maketitle

\begin{abstract}
Dimensionality reduction (DR) and manifold learning (ManL) have been applied extensively in many machine learning tasks, including signal processing, speech recognition, and neuroinformatics.However, the understanding of whether DR and ManL models can generate valid learning results remains unclear. In this work, we investigate the validity of learning results of some widely used DR and ManL methods through the chart mapping function of a manifold. We identify a fundamental problem of these methods: the mapping functions induced by these methods violate the basic settings of manifolds, and hence they are not learning manifold in the mathematical sense. To address this problem, we provide a provably correct algorithm called fixed points Laplacian mapping (FPLM), that has the geometric guarantee to find a valid manifold representation (up to a homeomorphism). Combining one additional condition (orientation preserving), we discuss a sufficient condition for an algorithm to be bijective for any $d$-simplex decomposition result on a $d$-manifold.  However,  constructing such a mapping function and its computational method satisfying these conditions is still an open problem in mathematics.
\end{abstract}

\section{Introduction}\label{introduction}

    

Dimensionality reduction (DR) and manifold learning (ManL) have been widely applied in many machine learning tasks, such as signal processing \cite{rui2016dimensionality}, speech recognition \cite{van2009dimensionality}, neuroinformatics \cite{mwangi2014review} and bioinformatics \cite{zou2016novel}. The reason is the well-accepted manifold assumption: \textit{the observed data distribute in a non-linear low dimensional manifold embedded in a high dimensional ambient space}. The main focus of various ManL is to learn the manifold structure from the sampled data along with DR methods to obtain the latent space representation of the manifold. Many methods have been proposed in recent decades. For example, if the manifold is linear, some classic methods such as Principal Component Analysis (PCA) \cite{wold1987principal} and Multi-Dimensional Scaling (MDS) \cite{cox2008multidimensional} can be efficiently applied. When the manifold is non-linear, several embedding methods can be used such as Isomap \cite{balasubramanian2002isomap}, Local Linear Embedding (LLE) \cite{roweis2000nonlinear}, Laplacian Eigenmap (LE) \cite{belkin2003laplacian}, Local Tangent Space Alignment (LTSA) \cite{zhang2004principal} and  Hessian Eigenmaps \cite{donoho2003hessian} to name a few. 
Although ManL and DR methods are important pre-processing in machine learning and widely used, unfortunately, the understanding of what results they produce is largely missing. It is not clear whether these methods generate valid latent space representation when examined under the mathematical definition of a manifold. As subsequent learning process is built on ManL and DR, it is crucial to investigate the mathematical validity of the outputs generated from these methods so that the entire learning algorithm can be well understood and interpreted.

\paragraph{Manifold Structure}
Given a $d$-dimensional manifold $\mathcal{M}$ embedded in $\mathbb R^l$ ($l >d$) covered by a set of open sets $\mathcal{M} \subset \bigcup_{\alpha} U_{\alpha}$. For each set $ U_{\alpha}$, there is a homeoomorphism $\psi_{\alpha}: U_{\alpha} \rightarrow \mathbb R^d$. The pair $(U_{\alpha},\psi_{\alpha})$ forms a chart. The image of the chart map is deemed to present the manifold strucuture \cite{guillemin2010differential}. So by definition of homeomorphism, the chart map has to be bijective, i.e. one-to-one and onto, mapping a neighbourhood of a manifold to latent space. On top of that, one can require chart map to preserve other geometric aspects of manifold such as angles \cite{courant2005dirichlet} and distance \cite{maceachren1987sampling} if it is possible. To obtain manifold local structures, one common way used in most of ManL/DR algorithms is applying K nearest neighborhood (KNN) given the observed data is discretely sampled from the manifold. The estimated local structure is then used to infer the global coordinates of all data points by minimizing some loss function. A more advanced method is to approximate manifold by simplex decomposition \cite{boissonnat2018delaunay}, such as surface triangulation for 2-manifolds (surfaces) and tetrahedralization for 3-manifolds. These methods generate a piece-wise linear approximation of $\mathcal M$ from the sample so that, under appropriate sampling conditions, the manifold approximation quality can be guaranteed. Many achievements have been made along with this line \cite{cazals2006delaunay} \cite{boissonnat2018delaunay} \cite{maglo2012progressive}.
For example, Boissonnat et al. \cite{boissonnat2018delaunay} developed an algorithm named tangential complex (TC) to decompose manifold with $d$-simplices.  The reconstructed manifold generated from TC is isotopic to the original manifold if the density of data points is higher enough. 
In this paper, we use these methods to generate manifold structures.




\paragraph{Observation and Motivation}
Any chart map bijectively maps between the manifold to a latent space locally. Given DR/ManL methods claim to learn the latent representation, therefore, it is vital to ensure that those learnt representations are bijective or at least one-to-one, i.e. injective globally. Now the central problem is how to check a map between sets of discrete points is bijective/injective. For 1D-manifold, there is natural ordering if it is considered as a curve and hence the bijectivity is easily checked by looking at the order of the representations in $\mathbb R$. However, when $d>1$, there is no such ordering. Therefore, the crux is the implementation of such ordering for any $d$. Our solution to this is simplex preserving. Given a $d$-simplex decomposition on $\mathcal M$ in $\mathbb R^l$,  a bijective mapping $\mathbb R^l \rightarrow \mathbb R^d$ must preserve simplex structure, i.e. their integrity, connectivity and neighboring relations. For example, given a triangulation on a 2-manifold, a bijective map over the entire triangulation should be one-to-one over each triangle, edge, and point, no degeneracy and overlapping of triangles and edges. Existence of any degeneracy or overlap violates the required bijectivity/injectivity and hence indicates deviation from learning manifold properly.  

Following the above line of thoughts, we investigate the learning performance of commonly used DR/ManL algorithms and models. We observe that none of those methods could preserve the simplex structure even though we directly provide adjacency information from the simplex decomposition rather than generating it by their own. For example, the results of these methods on 2-manifolds usually contain a large number of edge crosses, indicating that the map is not one-to-one. Figure~\ref{failed_result} shows a simple example of embedding result with/without line segments intersections.




\begin{figure}[H]
     \centering
     \subfloat[Paraboloid  scatter]{\includegraphics[width = 0.2\textwidth, height = 0.18\textwidth]{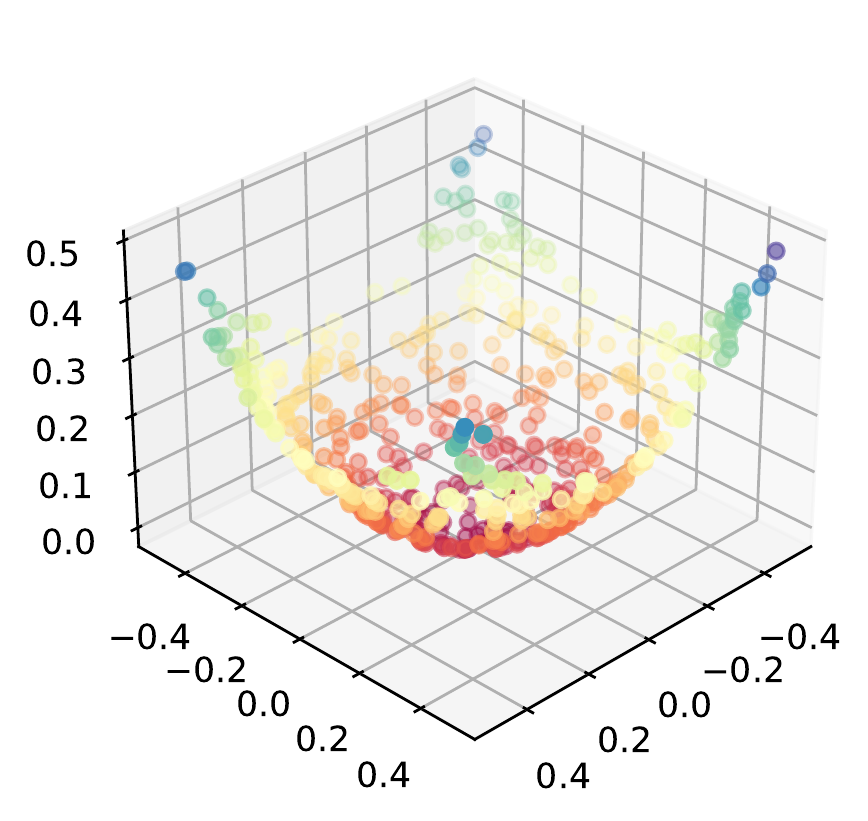}} \;\;\;
     \subfloat[Triangulation on paraboloid]{\includegraphics[width =0.2\textwidth, height=0.18\textwidth]{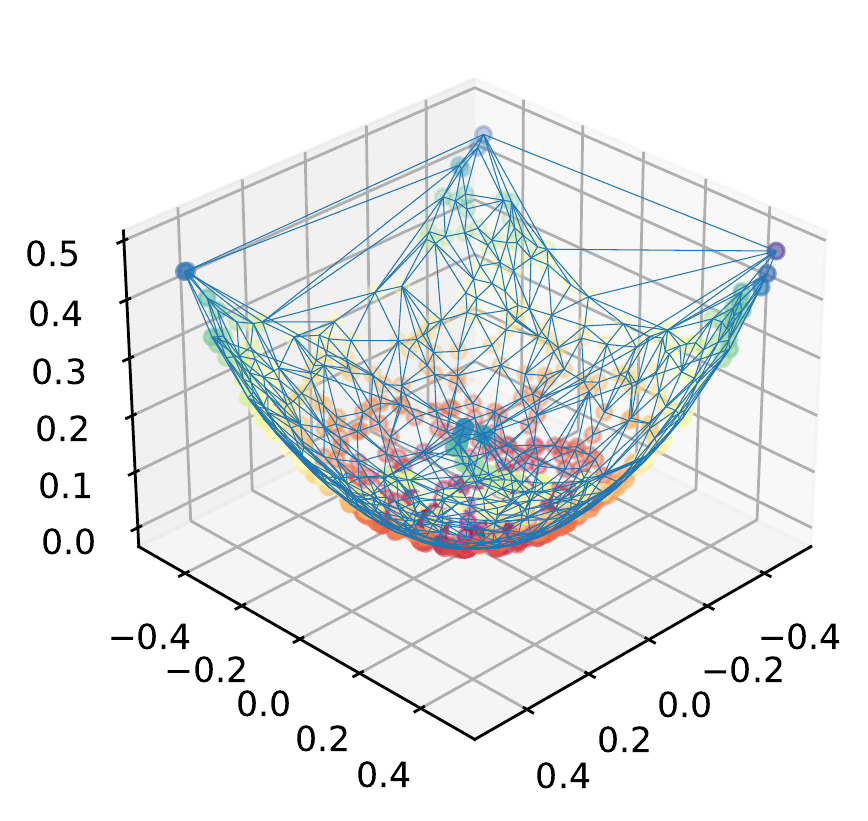}}\;\;\;
     \subfloat[Failed embedding]{\includegraphics[width =0.2\textwidth, height = 0.18\textwidth]{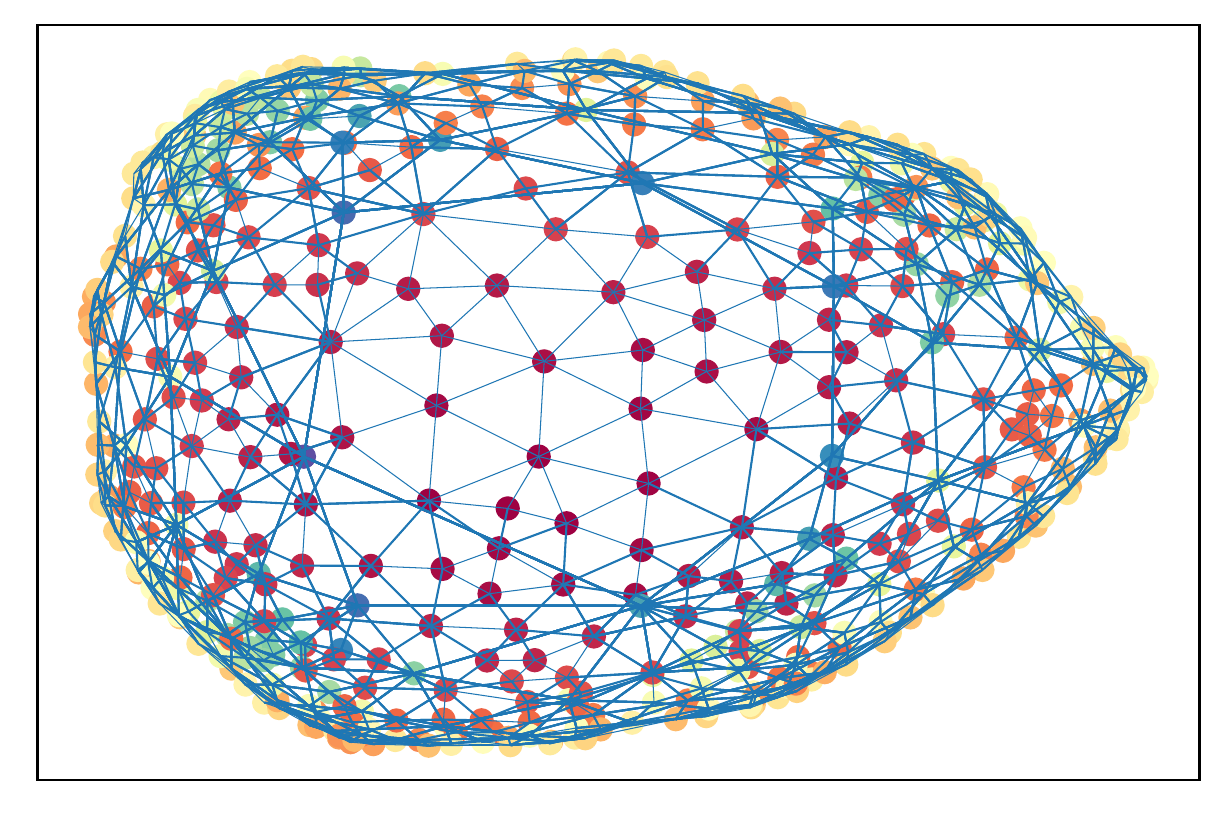}}\;\;\;
     \subfloat[Successful embedding]{\includegraphics[width = 0.2\textwidth, height = 0.18\textwidth]{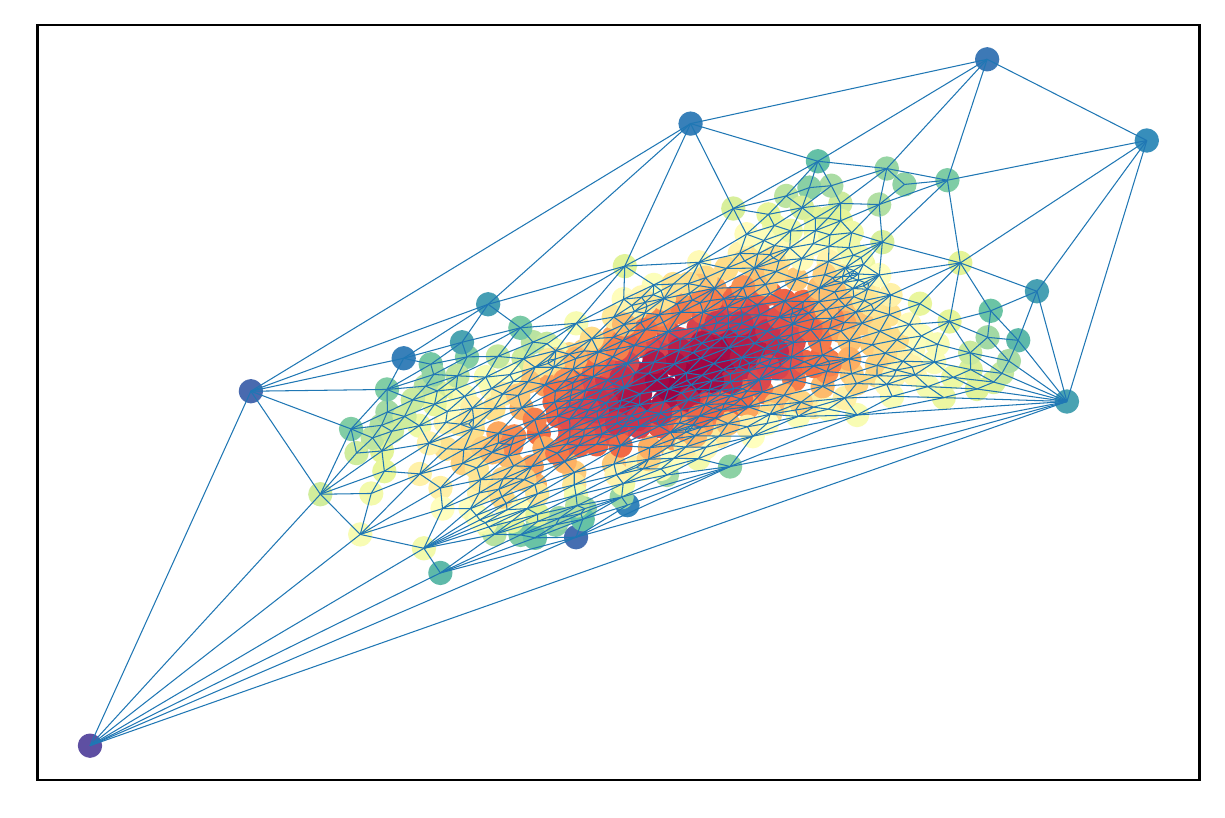}} \;\;\;
\caption{Two embedding algorithms on Paraboloid. One failed (c), Another one succeed(d).}   
\label{failed_result}
\end{figure}



We discover that these methods failed the bijectivity test that is essential in manifold definition. Nonetheless they may work for particular type of manifolds. For example, Isomap provides a correct embedding only if the manifold itself is isometric to an open set in the latent space, and LLE only reconstructs topological balls \cite{chen2011locally}. While the type of manifold is not known beforehand, it is important to have a geometrically correct algorithm which guarantees bijectivity. To address this problem, we propose a new method that is proved to be bijective (when restricted to codomain of the mapping) at least for 2-manifolds. The theory we use is Tutte's planar embedding theorem, which states that every planar graph has a convex representation in $\mathbb R^2$. The theorem was later generalized in  \cite{floater2003one} that proves under some additional requirements, a convex combination map that maps a triangulation from $\mathbb R^2$ to $\mathbb R^2$ is one-to-one. Our algorithm is applicable to any $d$-manifolds (without genus) with facet-to-facet tessellation. Most importantly, it has theoretical guarantee (refer to the Appendix) to produce a valid latent space representation for 2-manifolds deferring from the true latent space representation by a homeomorphism. 




\paragraph{Contributions.}
In summary, the main contributions of this paper is listed below.  
\begin{enumerate}
    \item We propose a method to validate manifold learning algorithms from the viewpoint of the definition of manifolds. Instead of examining the cluster quality, we focus on the bijectivity of the induced mapping function, which has to be homeomorphic to be the chart map of the manifold. 
    
    \item By using this method, we identify a fundamental problem in some prominent methods: the mapping function induced by these methods is not bijective, and in turn, violates the basic settings of manifolds.
    
    \item We offer a provably correct algorithm called fixed point Laplacian mapping (FPLM) to learn the manifold. This method has geometric guarantee to find a valid latent space representation (up to a homeomorphism).
    
    
    \item By generalizing the previous embedding theorem, we make our algorithm adaptive to any  non-degenerate edge-to-edge tessellation of 2-manifolds and most of 3-manifolds with and without boundary. We also discuss a sufficient condition to ensure a mapping that is always bijective to any manifold mesh generated by $d$-simplex decomposition.


\end{enumerate}

\paragraph{Organization.}
The remainder of the paper is organized as follows. In Section 2, we introduce the basic concepts used in this paper. FPLM will be introduced in Section 3 with the analysis and geometric guarantees discussed in Section 4. Section 5 provides the
experimental results to validate our claims. We discuss the strength and limitation of our methods in Section 6, and conclude this paper in Section 7 with some possible extensions.

\section{Definitions and Preliminaries} \label{Definition and Preliminaries}



In this paper, $\mathcal M$ is an orientable connected manifold with intrinsic dimension $d$  embedded in $\mathbb R^l$ and a finite sample $\mathbf X =\{\mathbf x_1, \mathbf x_2,..\mathbf x_N\} \subset \mathcal M$.  
The chart map $\psi$ of the manifold is a continuous bijective function that maps a neighbourhood of the manifold $\mathcal M$ to a subset in $\mathbb R^d$.  Its inverse, $\psi^{-1}$, is deemed to generate the structure of $\mathcal M$ \cite{guillemin2010differential}. 
We also assume that $\mathcal M$ is embedded in the ambient space with dimension at least $d+1$ {\em without self-intersection}. It is known that the Klein bottle in $\mathbb R^3$ is not an embedding \cite{whitney1944singularities}, merely an immersion, and hence will not be considered in this paper.

\subsection{d-Simplex decomposition of $\mathcal M$}\label{Triangulation and connectivity}
Let $\mathcal M$ be a $d$-manifold with boundary embedded in $\mathbb R^l$ with discrete sample $\mathbf X$. By a $d$-simplex, we mean a $d$-dimensional polytope that is the convex body formed by its $d+1$ vertices. For example, a 0,1,2-simplex stands for a point, line segment, and triangle. Given a $d$-simplex, we call this simplex \textit{degenerate} if it is less than $d$-dimension. We will further assume that all data points sampled in $\mathcal M$ are in \textit{general position}, i.e. no colinearity among points, or in other words, no extra point inside a simplex. For example, there will be no point inside a triangle or edge in triangulation.
\begin{defn}[d-simplex decomposition of $\mathcal M$]
Let $\mathcal S$ be a finite set of non-degenerate $d$-simplex and let $D_{\mathcal{S}} = \bigcup_{S\in \mathcal S} S$ we will call $\mathcal S$ a $d$-simplex decomposition of $\mathcal M$ if: 

\begin{enumerate}
    \item The intersection of any pair of $d$-simplex can either be empty or a common $\{d-1,d-2,..0\}$ simplex, and $\mathbf X$ are the vertices. 
    
    \item The boundary of $D_{\mathcal S}$, a closed polytope written as $\partial D_{\mathcal S}$, is formed by those ($d-1$)-simplices in $\mathcal S$ that are not shared. 
    \item $D_{\mathcal S}$ is homeomorphic to $\mathcal M$. 
\end{enumerate}
\end{defn}

This $d$-simplex decomposition of $\mathcal M$ is actually the best piece-wise linear manifold approximating $\mathcal M$ one could possibly have given a discrete sample from $\mathcal M$. Note that this differs from normal simplical complex on $\mathbf X$ in $\mathbb R^l$, which would be an $l$-simplex decomposition whose convex hull circumscribes $\mathbf X$. For example, for 2-manifold, 2-simplex decomposition (triangulation) should be applied and tetrahedronization for $d=3$. We now take triangulation as an example to provide a few more definitions. 
Let $D_{\mathcal{T}} = \bigcup_{T\in \mathcal T} T$ be a triangulation $\mathcal T$ in $\mathbb{R}^2$ ($T$ stands for a triangle). Following \cite{floater2003one}, based on the second requirement of simplex decomposition, $D_\mathcal{T}$ will be simply connected with boundary $\partial D_{\mathcal{T}}$.  For the vertices and edges contained in the boundary $\partial D_{\mathcal{T}}$, we call them \textit{boundary} vertices and edges, and otherwise the \textit{interior} vertices and edges.
If an edge with both ends on boundary vertices, we will call it a \textit{dividing edge}. For example, in Figure~\ref{dividing_edge}, the edge $[V,W]$ is a dividing edge.





%

\begin{figure}[H]
\centering
\includegraphics[width =0.3\textwidth, height = 0.18\textwidth]{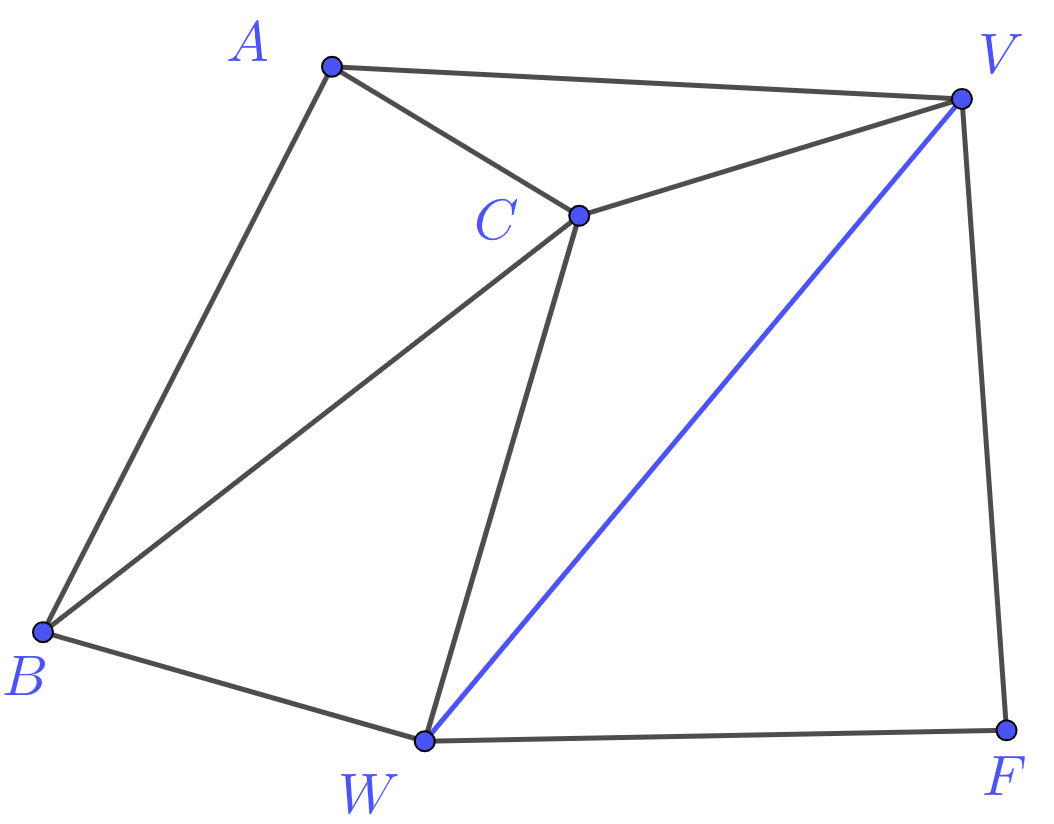}
\caption{Connectivity between triangles and dividing edges}\label{dividing_edge}
\end{figure}

\begin{defn}
We say that a triangulation is strongly connected if it contains no dividing edges.
\end{defn}

As we mentioned in the previous sections, if a mapping bijectively maps the manifold simplex decomposition result to the latent space, then the connectivity between simplices must be preserved. It is easy to see that if there is overlap from simplices, the points inside the overlap will certainly have more than 1 preimage, and hence not injective/bijective. In terms of simplex decomposition, for unknown manifold one can apply Tangential Complex algorithm \cite{boissonnat2014manifold} with required conditions and consistency test. For test purpose, or validity checking for a given DR/ManL algorithm, one can generate those manifolds such that the simplex mesh in latent space is preserved, for example, graph of some function, i.e. $(x_1,\ldots,x_d,f(x_1,\ldots,x_d))$, where $x_i\in \mathbb R$ are latent variables and $f$ is the function, which is a $d$-manifold, a hypersurface in $\mathbb R^{d+1}$.

\section{Fixed Point Laplacian Mapping}\label{Fixed Point Laplacian Eigenmap: FPLM}

By using the validity checking method mentioned above, e.g. the example in Section 1 and many more in experiment section, we realised that those mostly used DR/ManL methods we tested are not bijective, and hence do not really learn a manifold. The question is then, is it possible to design such an algorithm which has bijectivity guarantee, at least for some manifolds? The answer is positive.  

\subsection{Settings}\label{settings}
Based on the previous observation, one necessary condition for bijectivity is that the simplex structure in $\mathcal M$, a graph written as  $\mathcal{G}_{\mathcal{S}}$, is preserved by the mapping. Unfortunately, this is highly nontrivial. Normal neighborhood preserving and alignment ideas often seen in many DR/ManL methods do not work as they lack "hard" enforcement to ensure the preserving results, which is also the reason they fail bijectivity test. We need geometry inspired constraints and/or procedures with bijectivity embedded naturally. 

We start from constructing a (weighted) adjacency matrix $\mathbf A \in \mathbb{R}^{N \times N}$ derived from a simplex decomposition of $\mathcal M$, with the associated degree matrix and Laplacian denoted by $\mathbf D$ and $\mathbf L$. 
The algorithm that we will show below is a two-round procedure with the same optimization performed twice with different constraints each time. We call this optimization fixed-point Laplacian mapping (FPLM), where the fixed points are the constraints. We denote these fixed points as $\mathbf{C}=[\mathbf {c}_1,...,\mathbf{c}_p]^T \in \mathbb{R}^{p\times d}$. We write $\mathbf{P}(\mathbf{C})$ as the simple polytope formed by joining fixed points in $\mathbf C$ as vertices.

\subsection{Fixed-point Laplacian Mapping (FPLM)}
\label{FPLM_algorithm}
FPLM is formulated as follows: 
\begin{equation}\label{foc_FPLM1}
    \min_{\mathbf Y \in \mathbb R^{N \times d}} \, \text{tr}(\mathbf{Y}^T\mathbf{LY}), \qquad \text{ subject to } \, \mathbf y_{i} = \mathbf c_i, i \in [1,p]
\end{equation}
where $\mathbf  c_i\in\mathbb R^d, i\in [1,p]$ are the fixed points. We firstly determine whether $\mathcal{G}_{\mathcal{S}}$ is strongly connected or not (i.e., whether there is a dividing edge). If $\mathcal{G}_{\mathcal{S}}$ is strongly connected, the fixed points in the first round, collected in $\mathbf C_1$, are the images of the vertices from a randomly selected $d$-simplex after reducing its dimensionality. Therefore, $\mathbf C=\mathbf C_1$ in FPLM and $p=d+1$. Note that this step is lossless as $d$-simplex in $\mathbb R^l$ is intrinsical $d$-dimensional and linear. After the first round of FPLM, the boundary of the simplex decomposition, i.e., $\partial D_{\mathcal S}$, will be mapped inside $\mathbf P(\mathbf C_1)$ in $\mathbb R^d$. Recall that the boundary of a $d$-simplex decomposition is the $d-1$ simplicial complex that is not shared. It is straightforward to use the boundary of the simplex decomposition as the boundary to conduct the second round of FPLM. We collect these $p$ vertices in the boundary polytope in $\mathbf C_2$. When $\mathcal{G}_{\mathcal{S}}$ is not strongly connected, we let $n$ be the number of boundary vertices detected from simplex decomposition in $\mathbb R^l$, and construct a $p$-face ($p=n$) convex polytope in $\mathbb R^d$. One example of such convex polytope is the regular $p$-face polytope.

We now summarize two rounds of FPLM below.
\begin{algorithm}[H]
\caption{Two Rounds of FPLM}\label{2sFPLM_algorithm}
\begin{algorithmic}[1]
\STATE \textbf{Input:} Simplex decomposition graph $\mathcal{G}_{\mathcal{S}}$, first round fixed points $\mathbf C_1$. 
\STATE Construct weighted adjacency matrix $\mathbf A$ and its Laplacian $\mathbf L$ from $\mathcal{G}_{\mathcal{S}}$.
\IF{No dividing edge in $\mathcal{G}_{\mathcal{S}}$}
\STATE Obtain first-step $\mathbf Y_1$ by \eqref{foc_FPLM1} using $\mathbf C=\mathbf C_1$. 
\IF{No boundary detected inside $\mathbf P(\mathbf C_1)$}
\RETURN{$\mathbf Y_1$}
\ELSE
\STATE Use the boundary detected as $\mathbf C_2$.
\STATE Obtain second-step $\mathbf Y_2$ by \eqref{foc_FPLM1} using $\mathbf C=\mathbf C_2$.
\RETURN{$\mathbf Y_2$}
\ENDIF
\ELSE
\STATE Find the number of boundary points of $\mathcal{G}_{\mathcal{S}}$ as $p$ and construct a $p$-face convex polytope as $\mathbf C_1$.
\STATE Obtain first step $\mathbf Y_1$ by \eqref{foc_FPLM1} using $\mathbf C=\mathbf C_1$.
\RETURN{$\mathbf Y_1$}
\ENDIF
\end{algorithmic}
\end{algorithm}

\section{Analysis of Algorithm and Geometric Guarantees} \label{Analysis of Algorithm and Geometric Guarantees}
We now justify that the about procedure results in a bijective mapping. A summary of the line of proofs is the following. We first show that the mapping induced from FPLM is a convex combination mapping over simplex decomposition. Then taking 2-manifolds as an example, we prove that the convex combination mapping is one-to-one over entire triangulation. Restricting the mapping to the codomain, the mapping is bijective. We further prove that the procedure is applicable to any 2-manifold which structure is estimated from a non-degenerate edge-to-edge tessellation of polygons. Due to page limit, we only present the central theorems here; see Appendix for detailed proofs and derivations.

\paragraph{Algebraic solution of FPLM and Convex Combination Mapping}
The global minimizer of FPLM $\tilde{\mathbf Y}^*$ is: 
\begin{equation}
    {\mathbf y}_i^* = \sum_{j =1}^{n} \frac{\mathbf A_{ij}}{\mathbf D_{ii}} {\mathbf y}_j^*  = \sum_{j =1}^{n} \lambda_{ij} {\mathbf y}_j^*, \quad \forall{i = 1, ..., n-p}, \label{barycenter_mapping1}
\end{equation} 
Where $\mathbf{D}_{ii}$ is the diagonal of the degree matrix. By the definition of degree matrix we have $\sum_{j=1}^n \lambda_{ij} = 1$, $\forall i$. This shows that every optimal non-fixed point is a convex combination of its neighbours. 
As we mentioned earlier, the connectivity between simplices should remain the same in both image and pre-image of a bijective function over the entire simplex decomposition. Given two simplex decomposition $\mathcal S$ and $\mathcal S'$ of some subsets in $\mathbb R^d$, with some abuse of notation, we call a function $f: \mathcal S \rightarrow \mathcal S'$ a \textit{piece-wise linear function} if it is continuous over entire $D_{\mathcal S'}$ and linear over each simplex. Similarly, we have \textit{piece-wise linear mapping } $\phi : \mathcal G_{\mathcal S} \rightarrow \mathcal S$ to be a mapping taking from $\mathcal M$ simplex decomposition to its latent space where the simplex structure remains. A typical DR/ManL method learns $\phi$ such that $\mathbf y_i=\phi(\mathbf x_i)$. If $\phi$ satisfies \eqref{barycenter_mapping1}, we call $\phi$ a convex combination mapping \cite{floater2003one}. Clearly, FPLM generates a convex combination mapping over $d$-simplex decomposition of the manifold. 

\paragraph{Geometric Guarantees of FPLM}
We present our central theorems for the geometric guarantees of FPLM here for 2-manifolds. 
\begin{thm}\label{thm:2dmfdtriangulationplanar}
For any 2-manifold without genus, the graph induced from any valid triangulation on the manifold is planar. 
\end{thm}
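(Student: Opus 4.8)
The plan is to reduce the statement to the classical equivalence that a finite graph is planar if and only if it admits a crossing-free embedding in the $2$-sphere $S^2$. Thus it suffices to exhibit such an embedding of the graph $\mathcal G_{\mathcal S}$ induced by a valid triangulation $\mathcal T$ of $\mathcal M$. The key observation is that this graph is \emph{already} embedded somewhere useful: by requirement~3 of the $d$-simplex decomposition, $D_{\mathcal T}$ is homeomorphic to $\mathcal M$, and $\mathcal G_{\mathcal S}$ is precisely the $1$-skeleton of the simplicial complex $D_{\mathcal T}$, so its edges are geometric arcs meeting only at shared vertices. Hence $\mathcal G_{\mathcal S}$ embeds without crossings in $\mathcal M$, and the whole argument comes down to embedding $\mathcal M$ (or a closed surface containing it) in $S^2$.

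First I would treat the closed case. A connected orientable closed $2$-manifold ``without genus'' means genus $0$, which by the classification of compact surfaces is homeomorphic to $S^2$; composing the embedding $\mathcal G_{\mathcal S}\hookrightarrow D_{\mathcal T}$ with a homeomorphism $D_{\mathcal T}\cong S^2$ gives an embedding of $\mathcal G_{\mathcal S}$ into $S^2$. For the boundary case (recall the triangulation setting makes $D_{\mathcal T}$ simply connected, so it is a disk; more generally a genus-$0$ surface with boundary is a sphere with finitely many open disks removed), each boundary component is a cycle in $\mathcal G_{\mathcal S}$; I would cone off each such cycle by adding one new apex vertex joined to all vertices of that cycle. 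This produces a triangulation of the closed genus-$0$ surface, i.e.\ of $S^2$, whose $1$-skeleton $\mathcal G'$ therefore embeds in $S^2$ by the previous paragraph, and $\mathcal G_{\mathcal S}$ is a subgraph of $\mathcal G'$, hence also embeds in $S^2$ (a subgraph of a sphere-embeddable graph is sphere-embeddable).

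Finally, to pass from $S^2$ to the plane I would invoke stereographic projection: given a crossing-free embedding of a finite graph in $S^2$, choose a point of $S^2$ lying in the interior of some face (avoiding all vertices and edges, which is possible since the image of the graph is nowhere dense), and project from that point; this yields a crossing-free embedding of the graph in $\mathbb R^2$, i.e.\ a planar embedding. Therefore $\mathcal G_{\mathcal S}$ is planar.

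The only genuinely delicate point — everything else is bookkeeping — is pinning down the precise meaning of ``$2$-manifold without genus'' and the corresponding input from surface topology: that genus $0$ forces $\mathcal M$ to sit inside $S^2$, in both the closed and the bounded (possibly multi-component boundary) cases. This is exactly where the classification of surfaces, or equivalently the coning-off reduction described above, does the real work; one should also check carefully that each boundary component of a triangulated surface-with-boundary is a simple cycle in $\mathcal G_{\mathcal S}$, so that coning it off stays within the class of simplicial complexes. I would explicitly warn that Euler's formula $V-E+F=2$ alone is \emph{not} enough to conclude planarity (it is necessary, not sufficient), which is why the argument is routed through an actual topological embedding rather than a counting identity.
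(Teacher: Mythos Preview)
Your argument is correct and takes a genuinely different route from the paper. The paper proves Theorem~\ref{thm:2dmfdtriangulationplanar} via Kuratowski's theorem: it argues, in two separate propositions, that the $1$-skeleton of a valid triangulation cannot contain $K_5$ or $K_{3,3}$ as a subgraph, by showing that either configuration would force an edge to be shared by more than two triangles or would witness a self-intersection of the embedded manifold. Your approach is instead purely topological: you use that the $1$-skeleton is already realized as a crossing-free graph inside $D_{\mathcal T}\cong\mathcal M$, invoke the classification of compact orientable surfaces to place $\mathcal M$ (or its cone-off) inside $S^2$, and finish with stereographic projection.

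What each buys: your route uses the genus-$0$ hypothesis in the most transparent way possible and is essentially a one-line reduction to standard surface topology; it also sidesteps the need to analyze specific forbidden subgraphs case by case, and it avoids the subtlety that Kuratowski's criterion is really about \emph{subdivisions} of $K_5$ and $K_{3,3}$, not subgraphs. The paper's approach is more combinatorial and stays closer to the local simplicial structure (shared edges, face incidences), which ties in with how the rest of the paper reasons about meshes; but its case analysis of $K_5$ and $K_{3,3}$ in $\mathbb R^l$ is more delicate to make fully rigorous than your direct embedding argument. One small simplification on your side: in the bounded case you do not strictly need to cone off, since a genus-$0$ orientable compact surface with boundary already embeds in $S^2$ as a sphere with open disks removed, and any subgraph of a sphere-embedded graph is planar.
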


The idea is to prove that the graph induced from a triangulation does not contain Kuratowski subgraph $K_5$ and $K_{3,3}$. We now show the features of $\phi$, which is the convex combination mapping induced from FPLM.

\begin{prop}
FPLM maps all non-fixed points inside the convex hull formed by the fixed points ($\mathbf{P}(\mathbf{C})$).
\end{prop}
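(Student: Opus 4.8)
The plan is to establish this via a discrete maximum principle applied to the convex-combination identity \eqref{barycenter_mapping1}. Recall that FPLM produces a convex combination mapping: for every non-fixed index $i$ one has $\mathbf y_i^* = \sum_{j} \lambda_{ij}\mathbf y_j^*$ with weights $\lambda_{ij} = \mathbf A_{ij}/\mathbf D_{ii} \ge 0$ and $\sum_{j}\lambda_{ij} = 1$; the non-negativity is the point that makes the argument work, and it holds because the adjacency weights $\mathbf A_{ij}$ are non-negative. Write $K := \mathbf P(\mathbf C)$ for the hull of the fixed points, which is compact and convex, and suppose for contradiction that some non-fixed image $\mathbf y_i^*$ lies outside $K$.

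First I would separate $\mathbf y_i^*$ from $K$. Since $K$ is closed and convex and $\mathbf y_i^* \notin K$, there is a linear functional $\ell$ on $\mathbb R^d$ with $\ell(\mathbf y_i^*) > \max_{x \in K}\ell(x) =: \mu$. Among the finitely many non-fixed points pick an index $k$ maximizing $\ell(\mathbf y_j^*)$, so that $M := \ell(\mathbf y_k^*) \ge \ell(\mathbf y_i^*) > \mu$. Applying $\ell$ to \eqref{barycenter_mapping1} at $k$ gives $M = \sum_{j}\lambda_{kj}\,\ell(\mathbf y_j^*) \le \sum_{j}\lambda_{kj} M = M$, so equality holds throughout, which forces $\ell(\mathbf y_j^*) = M$ for every neighbour $j$ of $k$, i.e. for every $j$ with $\lambda_{kj} > 0$.

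Next I would propagate this equality across the whole graph. Set $S := \{\, j : \ell(\mathbf y_j^*) = M \,\}$. It contains $k$, and by the previous step, whenever a \emph{non-fixed} vertex of $S$ is present, all of its neighbours lie in $S$ as well. Since $\mathcal M$ is connected its simplex decomposition graph $\mathcal G_{\mathcal S}$ is connected, so if $S$ contained no fixed point, iterating this closure property would force $S$ to be all of $\mathbf X$, and in particular it would contain the fixed points — a contradiction. Hence $S$ contains some fixed point $m$, and then $\mu \ge \ell(\mathbf c_m) = \ell(\mathbf y_m^*) = M > \mu$, the desired contradiction. Therefore no non-fixed image lies outside $K$, proving the claim.

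The only delicate part is the propagation step, where one must track that the equality set is closed under passing from a non-fixed member to \emph{all} its neighbours and then invoke connectedness of $\mathcal G_{\mathcal S}$ to reach a fixed vertex; everything else is routine separating-hyperplane and convexity bookkeeping. It is worth flagging explicitly that the argument relies on $\mathbf A_{ij} \ge 0$ (so that the $\lambda_{ij}$ are genuine convex weights); the same maximum-principle reasoning also shows that a non-fixed point can sit on $\partial K$ only if an entire connected component of its reachable vertices maps into a single face of $K$, so generically the non-fixed images in fact land in the interior of $\mathbf P(\mathbf C)$, which is what later bijectivity arguments will use.
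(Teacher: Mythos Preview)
Your argument is correct and follows essentially the same strategy as the paper's proof: assume a non-fixed image escapes $\mathbf P(\mathbf C)$, pass to an extremal point, and invoke the supporting/separating hyperplane together with the convex-combination identity \eqref{barycenter_mapping1} to obtain a contradiction. Your formalization via a linear functional and the discrete maximum principle (propagating the equality set through the connected graph $\mathcal G_{\mathcal S}$) is a cleaner and more rigorous rendering of the paper's informal ``find the out-most one and exhaust'' sketch, but the underlying idea is the same.
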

We only present a sketch of the proof here. If on contrary, there is a point outside the convex hull of $\mathbf{P}(\mathbf{C})$, then there must be more points outside too due to \eqref{barycenter_mapping1}. For those outside points, find the one on the edge of the convex hull, then it must have more points surrounding it too. Continue this process until all non-fixed points are exhausted. The out-most one will not have a convex hull formed by its neighbors according to supporting hyperplane theorem \cite{boyd2004convex} against the fact that every non-fixed point has to be convex combination of its neighbors. Therefore the assumption is incorrect. Another way to prove this is by direct observation of the minimization from FPLM. 



By applying the conclusion from previous works \cite{kneser1926losung} \cite{floater2003one}, we proved that the first round of FPLM is one-to-one over any strongly connected triangulation (Appendix One). We then explored the convexity of the boundary polygon of $\mathcal{G}_{\mathcal{S}}$ after the first round of FPLM and concluded the following lemma: 



\begin{lem}
Given a strongly connected triangulation $\mathcal T$, $\partial D_{\mathcal S}$ is mapped as a convex hull after the first round of FPLM, and hence the results from algorithm \ref{2sFPLM_algorithm} is one-to-one. 
\end{lem}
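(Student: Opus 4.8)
The plan is to prove the lemma in two stages: \textbf{(i)} after the first round of FPLM the image of the boundary cycle $\partial D_{\mathcal S}$ is a simple convex polygon, and \textbf{(ii)} granting (i), the round that Algorithm~\ref{2sFPLM_algorithm} returns for a strongly connected $\mathcal T$ is one-to-one over all of $D_{\mathcal T}$, hence bijective onto its image.

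I would dispatch (ii) first, as the soft half. By the algebraic form of the FPLM minimizer, equation~\eqref{barycenter_mapping1}, the piecewise-linear map $\phi$ produced by either round sends every non-fixed vertex to a convex combination of its neighbours in $\mathcal G_{\mathcal S}$ with strictly positive weights summing to one; that is, $\phi$ is a convex combination mapping over the triangulation. By Theorem~\ref{thm:2dmfdtriangulationplanar} the underlying complex is planar, strong connectedness means $\mathcal T$ has no dividing edges, and the simplex-decomposition axioms make $D_{\mathcal T}$ a simply connected polygonal region with boundary $\partial D_{\mathcal S}$ (for $\mathcal M$ closed of genus $0$ one first deletes the chosen triangle $T_0$, whose three edges then constitute the boundary). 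In the branch where a boundary is detected, the second-round fixed set $\mathbf C_2$ lists the first-round images of the boundary vertices in cyclic order along $\partial D_{\mathcal S}$; by (i) these are in convex position, so $\phi_2$ maps $\partial D_{\mathcal T}$ homeomorphically onto a convex polygon, and the convex-combination version of Tutte's planar embedding theorem (Floater~\cite{floater2003one}, building on Kneser~\cite{kneser1926losung}) yields that $\phi_2$ is one-to-one. In the branch where no boundary is detected ($\mathcal M$ closed), the same theorem applies directly to the first round, whose fixed polygon $\mathbf P(\mathbf C_1)$ is the triangle $T_0$, trivially convex; this is exactly the case returning $\mathbf Y_1$.

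For (i) the situation splits again. When $\mathcal M$ is closed there is nothing to prove, since $\phi_1(\partial D_{\mathcal S})=\partial\mathbf P(\mathbf C_1)$ is a triangle. When $\mathcal M$ has a boundary, $T_0$ is an interior triangle, $\phi_1(T_0)=\mathbf P(\mathbf C_1)$ as an affine image, and the preceding proposition confines all non-fixed images to $\mathbf P(\mathbf C_1)$; hence $\phi_1(D_{\mathcal T})=\mathbf P(\mathbf C_1)$ and $\phi_1(\partial D_{\mathcal S})$ is a closed polygonal curve lying inside this triangle. To show this curve is a simple convex polygon I would walk the boundary cycle and run a discrete maximum-principle / supporting-hyperplane argument in the spirit of the proof of the preceding proposition: a boundary vertex whose image were a reflex (inward-turning) vertex of the curve cannot reconcile its convex-combination relation with the positions of its two cyclic neighbours on $\partial D_{\mathcal S}$ and of the interior vertices of its star, which all lie on the ``inner'' side; the same chasing should also rule out self-crossings.

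The load-bearing step, and the one I expect to be hard, is precisely this convexity of the first-round boundary image in the manifold-with-boundary case: having every vertex of a polygon on the convex hull of its vertex set is not implied by the proposition alone, so the argument must genuinely use strong connectedness (the absence of dividing edges is exactly what forbids the local chord configuration along which a reflex fold could persist) and must keep careful track of which boundary vertices are adjacent to the fixed corners. If this proves more delicate than sketched, a fallback that sidesteps the convexity claim is to establish one-to-one-ness of $\phi_2$ directly by a winding-number / degree argument over its image region, showing that every triangle of $\mathcal T$ is mapped non-degenerately and with consistent orientation under $\phi_2$, so that a convex first-round boundary becomes a convenient sufficient condition rather than a necessity.
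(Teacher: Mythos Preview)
Your proposal is correct and follows essentially the same route as the paper. The paper likewise splits the argument into (i) convexity of the first-round image of $\partial D_{\mathcal S}$ and (ii) one-to-one of the returned round via Floater's convex-combination theorem~\cite{floater2003one}; for (i) it argues (Proposition~\ref{prop:innerconvex}) that a reflex boundary vertex, through the convex-combination relation~\eqref{barycenter_mapping1}, would force an edge from that vertex to another boundary vertex---a dividing edge---contradicting strong connectedness, which is precisely the mechanism you isolate in your final paragraph.
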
 

The conclusion from the above Lemma is proved by virtual of Tutte's embedding theorm \cite{tutte1963draw} after we show the convexity of the image of $\partial D_{\mathcal S}$. 
However, when the triangulation is not strongly connected, the first round of FPLM is no longer injective because the dividing edge will be mapped as the boundary edge of the manifold inside the selected triangle in the first round of FPLM. Therefore, we have to directly detect the boundary from $\mathcal{G}_{\mathcal{S}}$ and generate a $p$-side convex polygon in $\mathbb R^2$ so that all dividing edges remain inside the boundary and none of the boundary vertices are then collinear. The following theorem justifies this part in algorithm \ref{2sFPLM_algorithm}.  


\begin{thm}\label{thm:withdividingedge}
Given a triangulation $\mathcal T$ with dividing edges, FPLM with fixed points $\mathbf C$ as vertices of a p-side convex polygon is one-to-one, where $p$ is the number of boundary vertices.
\end{thm}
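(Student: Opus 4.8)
The plan is to reduce Theorem~\ref{thm:withdividingedge} to the case already handled by the previous Lemma, namely a convex-combination mapping whose boundary is mapped homeomorphically onto a convex polygon, and then invoke the generalized Tutte/Floater one-to-one result \cite{tutte1963draw,floater2003one}. The key point to exploit is that the construction in Algorithm~\ref{2sFPLM_algorithm} \emph{forces} the boundary vertices of $\mathcal{G}_{\mathcal S}$ onto the vertices of a prescribed $p$-side convex polygon $\mathbf P(\mathbf C)$, in cyclic order along $\partial D_{\mathcal T}$, with no three of them collinear. So unlike the strongly connected case, we do not have to \emph{prove} the boundary image is convex — we have built it that way. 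What remains is to check that the hypotheses of Floater's theorem (a convex combination mapping on a triangulation of a simply connected domain, with the boundary mapped homeomorphically onto the boundary of a convex polygon) are all met in the presence of dividing edges.

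First I would recall from the algebraic analysis that the FPLM minimizer satisfies \eqref{barycenter_mapping1}, so every non-fixed (i.e.\ interior) vertex image is a convex combination of its neighbours with strictly positive weights $\lambda_{ij}=\mathbf A_{ij}/\mathbf D_{ii}$; this is exactly the convex combination mapping hypothesis. Second, by the third requirement in the definition of $d$-simplex decomposition, $D_{\mathcal T}$ is homeomorphic to $\mathcal M$, and since $\mathcal M$ is a 2-manifold without genus (the ambient setting of Theorem~\ref{thm:2dmfdtriangulationplanar}), $D_{\mathcal T}$ is simply connected with a single polygonal boundary cycle $\partial D_{\mathcal T}$; the induced graph is planar by Theorem~\ref{thm:2dmfdtriangulationplanar}. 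Third, I would argue that mapping the $p$ boundary vertices to the vertices of a convex $p$-gon, respecting the cyclic order of $\partial D_{\mathcal T}$, gives a homeomorphism of the boundary curve onto the boundary of $\mathbf P(\mathbf C)$: the $p$ boundary edges go to the $p$ straight sides bijectively, and because $\mathbf P(\mathbf C)$ is a \emph{simple convex} polygon these sides only meet at shared endpoints. With these three facts, Floater's theorem yields that $\phi$ restricted to $D_{\mathcal T}$ is a homeomorphism onto $\mathbf P(\mathbf C)$, hence in particular one-to-one on every vertex, edge and triangle.

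The last — and I expect main — obstacle is the role of the dividing edges themselves. A dividing edge is an \emph{interior} edge both of whose endpoints are boundary vertices; after the first round of FPLM such an edge could be forced onto a segment of the triangle boundary, causing collapse. The whole reason the algorithm switches to a $p$-gon with $p$ equal to the number of boundary vertices is to give each boundary vertex its \emph{own} corner of a strictly convex polygon, so that no two of them coincide and no dividing edge degenerates to a boundary segment. I would make this precise by: (i) noting that distinct boundary vertices receive distinct, non-collinear positions, so a dividing edge maps to a genuine chord of $\mathbf P(\mathbf C)$ lying strictly inside except at its endpoints; (ii) observing that, since the whole triangulation sits inside $\mathbf P(\mathbf C)$ by the Proposition and the boundary is mapped homeomorphically onto $\partial\mathbf P(\mathbf C)$, no interior vertex can land on $\partial\mathbf P(\mathbf C)$ — otherwise \eqref{barycenter_mapping1} would force it to be a convex combination of neighbours all on the same side of a supporting line, contradicting the supporting hyperplane argument used in the Proposition unless all those neighbours are also boundary vertices on that same side, which the strong-connectivity failure is precisely engineered to rule out once $p$ is taken this large. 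Once degeneracy of dividing edges is excluded, the situation is exactly a Floater convex-combination mapping with convex boundary polygon, and injectivity follows. Restricting $\phi$ to its image then gives the bijection, completing the proof.
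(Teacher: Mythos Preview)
Your proposal is correct and follows essentially the same route as the paper: the paper's argument is simply that Floater's criterion (Theorem~\ref{no_dividing_edge_thm}) applies directly once the $p$ boundary vertices are pinned to the $p$ corners of a strictly convex polygon, since then every dividing edge---an interior edge joining two non-adjacent boundary vertices---is sent to a proper chord of $\mathbf P(\mathbf C)$ and hence not into $\partial\Omega$. Your point~(ii) about interior vertices landing on $\partial\mathbf P(\mathbf C)$ is superfluous (Floater's condition concerns only dividing edges, whose endpoints are both boundary vertices), so you may drop that part without loss.
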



The above conclusions make FPLM applicable to any 2-manifold (orientable and connected) without genus. However, when $d \geq 3$ we need an extra condition,  orientation preserving property to ensure injectivity. We discuss this in the appendix. 

\section{Experiments} \label{Experiments}
In this section, we investigate the learning performance of widely used state-of-the-art DR/ManL algorithms. The structure of every 2-manifold was generated by applying either Tangential Complex (TC) algorithm \cite{boissonnat2014manifold} or Delaunary/Surface triangulation. The structure of 3-manifold is generated by using Delaunay tetrahedralization algorithm \cite{si2015tetgen} included in Tetgen and TC.
To have a fair comparison between FPLM and other prominent methods, the adjacency information obtained from the simplex decomposition will be used as the input as manifold structure. The number of the line segments crosses will be counted as a measure to evaluate the learning performance for all included models.

\textbf{Experiment setup.} 
The 2-manifolds included in the experiment are: Monkey saddle, Swiss roll, Paraboloid, Twin peaks and Sphere. 
We construct a weighted adjacency matrix from triangulation via rbf kernel function. That is, $ A_{ij} = \exp(-\gamma d_m(\mathbf x_i, \mathbf x_j))$ if $\mathbf x_i$ is connected to $\mathbf x_j$, where we use $l_2$ distance $d_m(\mathbf x, \mathbf y) = \sqrt{\sum_{i=1}^d (x_i - y_i)^2}$ for $\mathbf x, \mathbf y \in \mathbb{R}^d$. For all experiments, we fix $\gamma = 0.1$. 

The settings of other learning algorithms are as follows: for LE and LTSA, 
we use pre-computed $\mathbf A$ as input; for Local Linear Embedding, we use adjacency matrix constructed from the simplex  decomposed graph as input to replace the neighborhood graph; for Isomap, we construct the distance matrix from the simplex decomposed  graph and  distance; for MDS and tSNE, we use default settings. Finally, for Manifold Autoencoder, we construct a neural network with layer $3 \times 64 \times 2 \times 64 \times 3$. Activation function is Relu; dropout layer is considered with $p = 0.2$. Batch normalization is applied to the bottleneck layer. The optimizer is chosen to be ADAM with a learning rate of $0.1$. For every experiment, we run 1000 epochs. All experiments are carried out on a laptop computer running on a 64-bit operating system with Intel Core i5-8350U 1.90GHz CPU and 16G RAM with Python 3.36.

For the manifold with boundary, the second round output of FPLM will be compared with other learning algorithms. Due to the space limit, we will only present the investigation results of swiss roll for the manifold with boundary and 2-Sphere for manifold without boundary. For the rest of the result, please see Appendix. Figures below show the comparison results:

\begin{figure}[H]
     \centering
     \subfloat[]{\includegraphics[width = 0.13\textwidth, height = 0.13\textwidth]{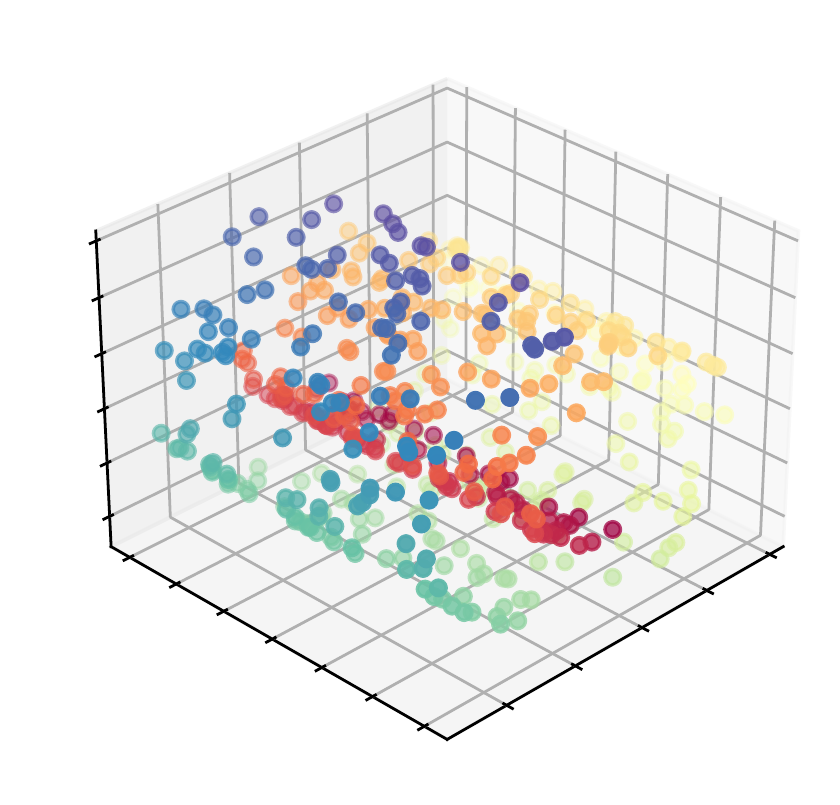}} \;\;\;
     \subfloat[]{\includegraphics[width =0.13\textwidth, height = 0.13\textwidth]{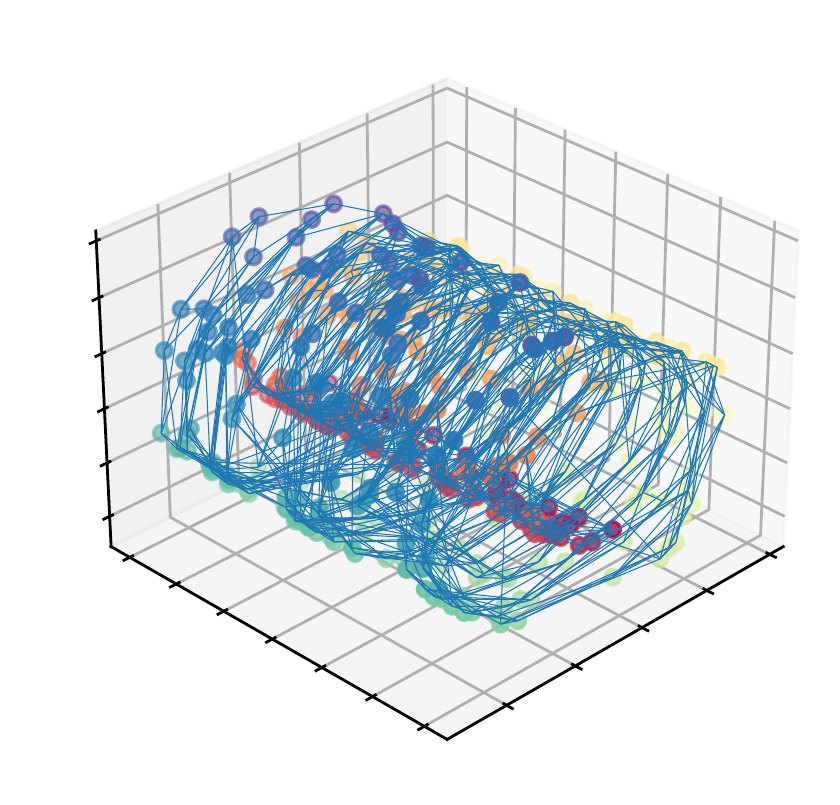}}\;\;\;
     \subfloat[]{\includegraphics[width =0.13\textwidth, height = 0.13\textwidth]{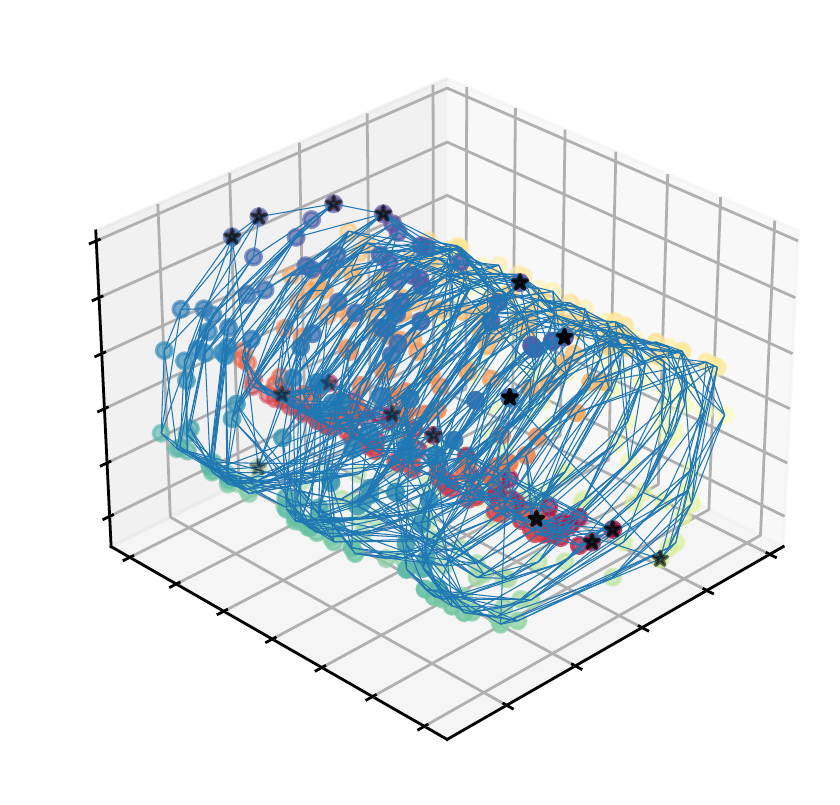}}\;\;\;
     \subfloat[]{\includegraphics[width =0.13\textwidth, height = 0.13\textwidth]{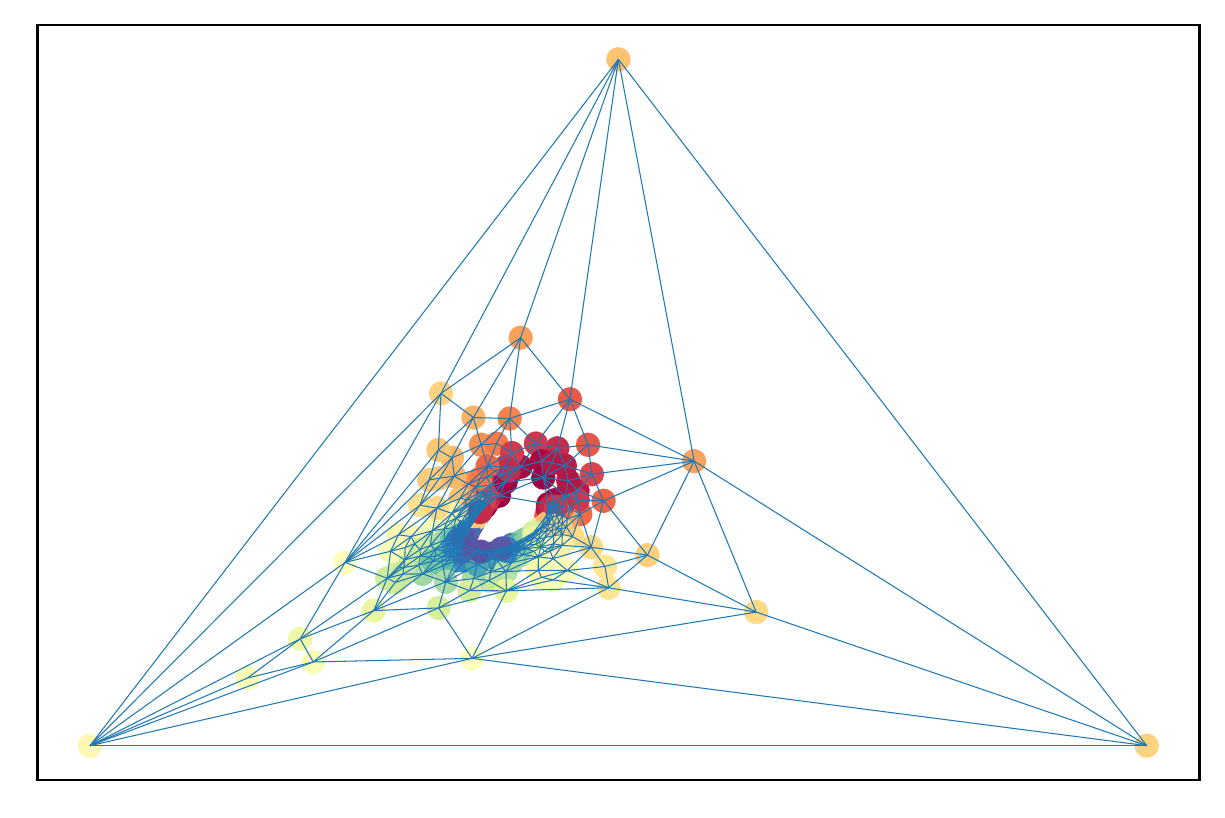}}\;\;\;
     \subfloat[]{\includegraphics[width =0.13\textwidth, height = 0.13\textwidth]{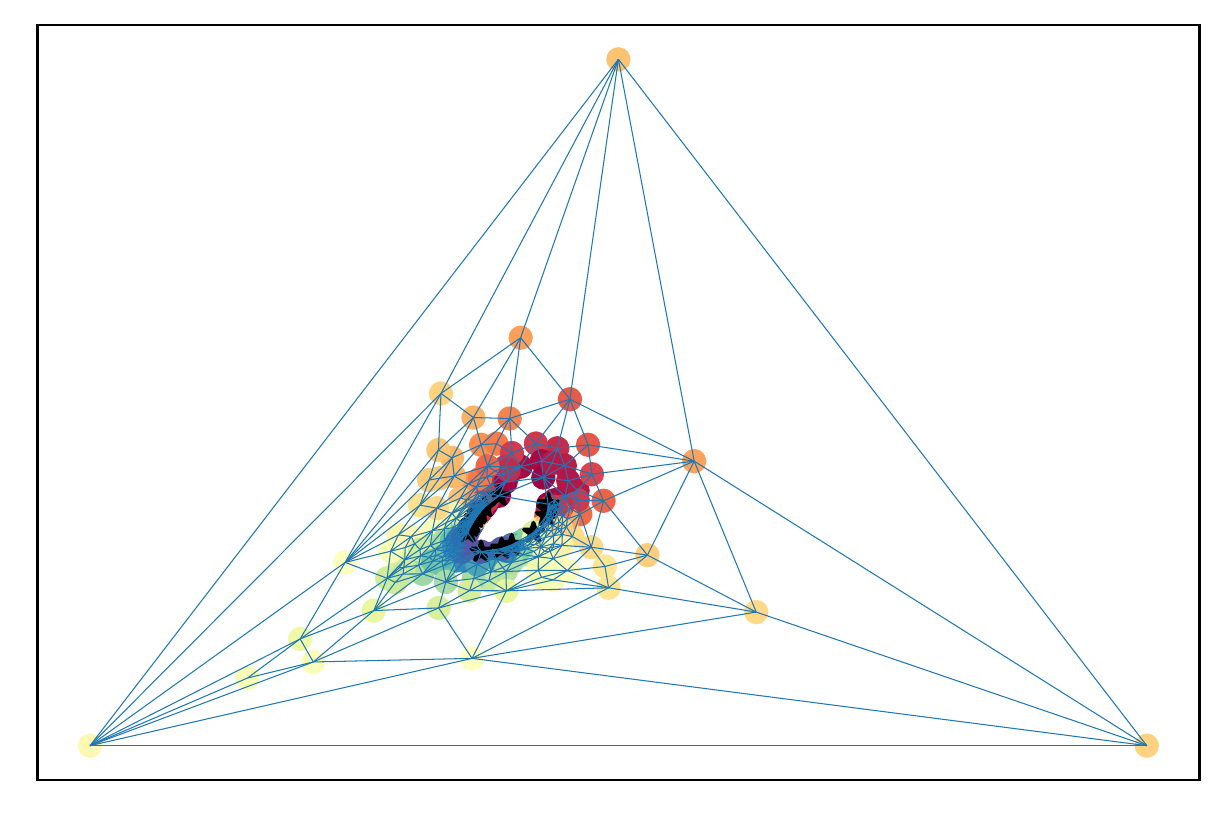}}\;\;\;
     \subfloat[]{\includegraphics[width =0.13\textwidth, height = 0.13\textwidth]{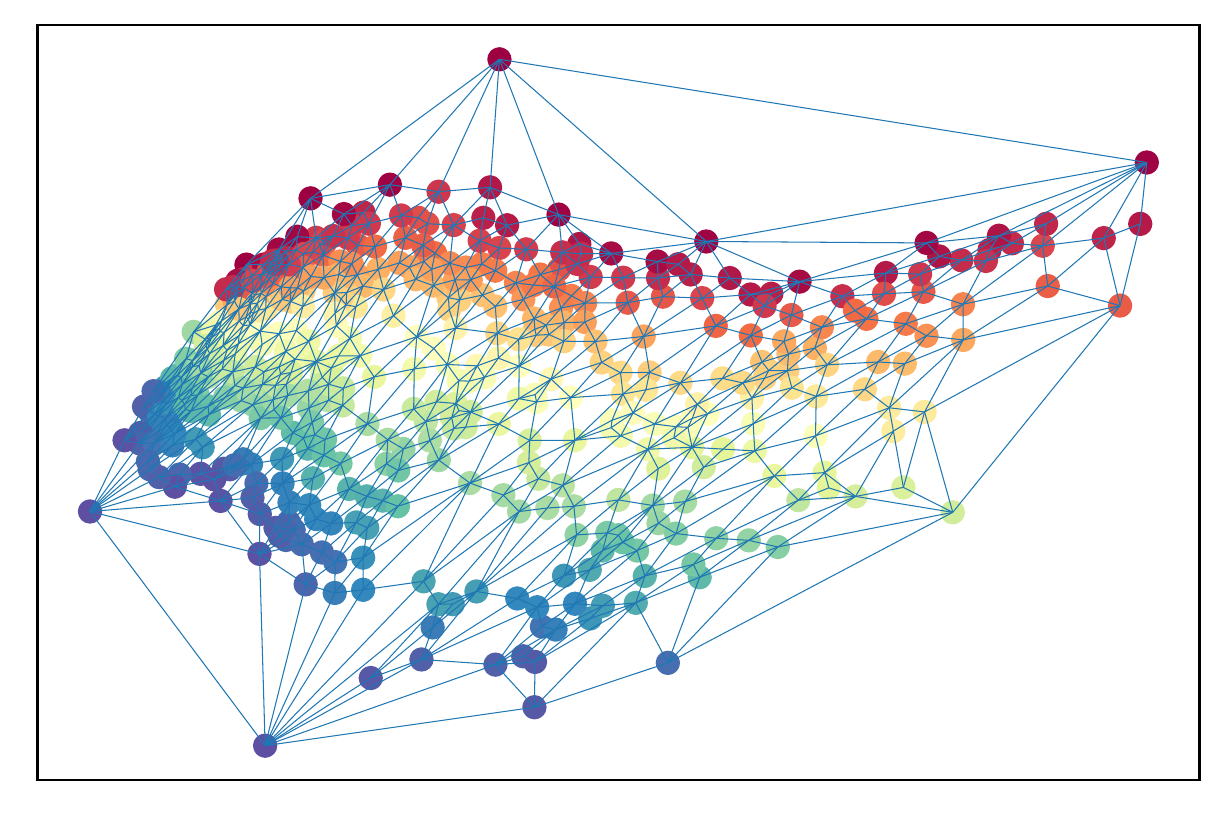}}\;\;\;
\caption{FPLM on Swiss roll:(a) Manifold scatters,(b) Triangulation on manifold, (c) Boundary detection (d) First round FPLM, (e) Boundary detection for the first round FPLM, (f) Final result.}  
\label{FPLM_swiss_roll}
\end{figure}

\begin{figure}[H]
     \centering
     \subfloat[AE]{\includegraphics[width = 0.11\textwidth, height = 0.13\textwidth]{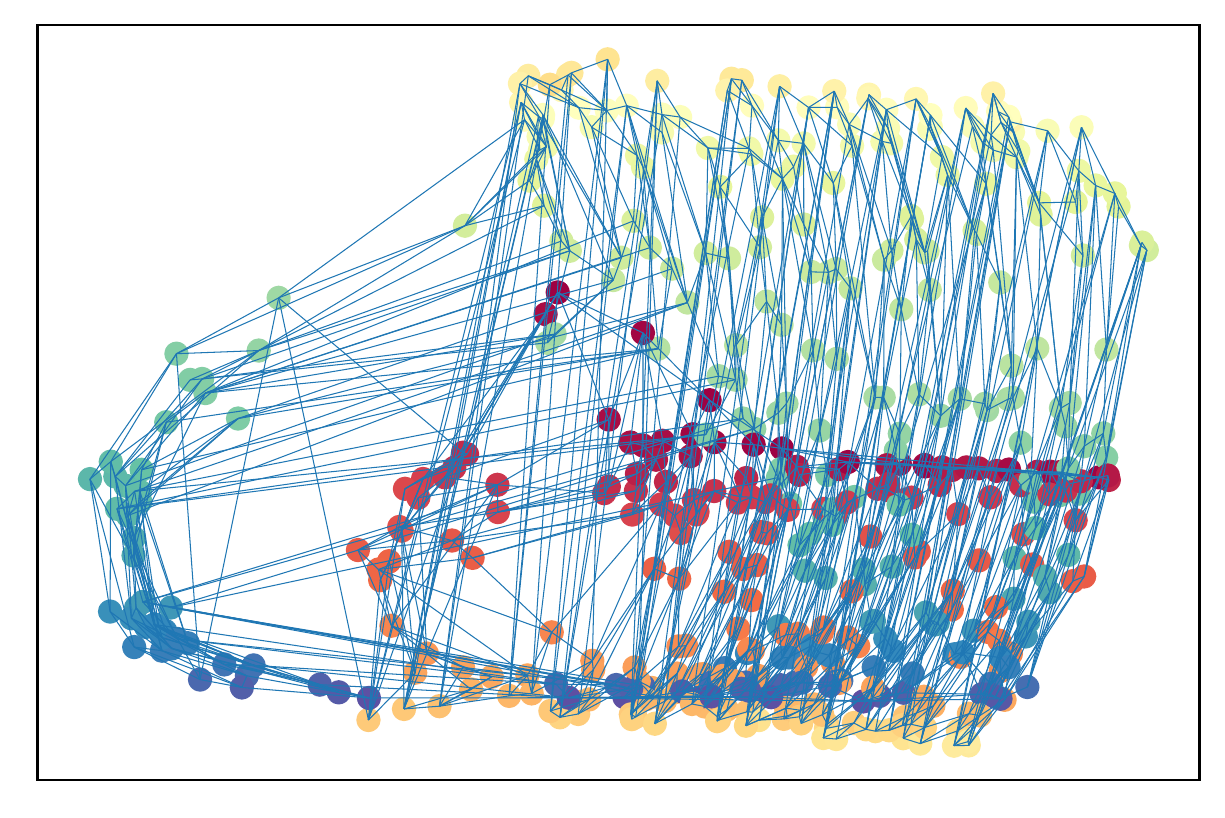}} \;\;\;
     \subfloat[Isomap]{\includegraphics[width = 0.11\textwidth, height = 0.13\textwidth]{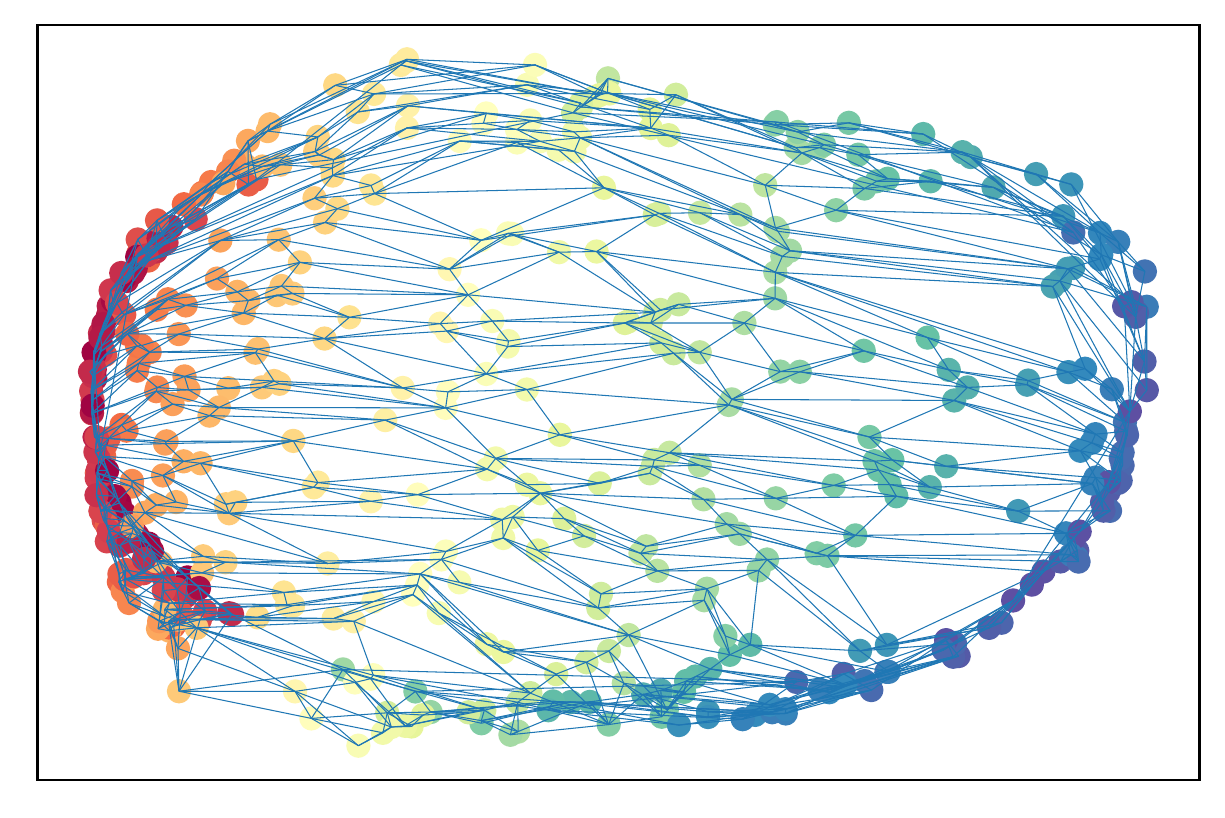}}\;\;\;
     \subfloat[LE]{\includegraphics[width = 0.11\textwidth, height = 0.13\textwidth]{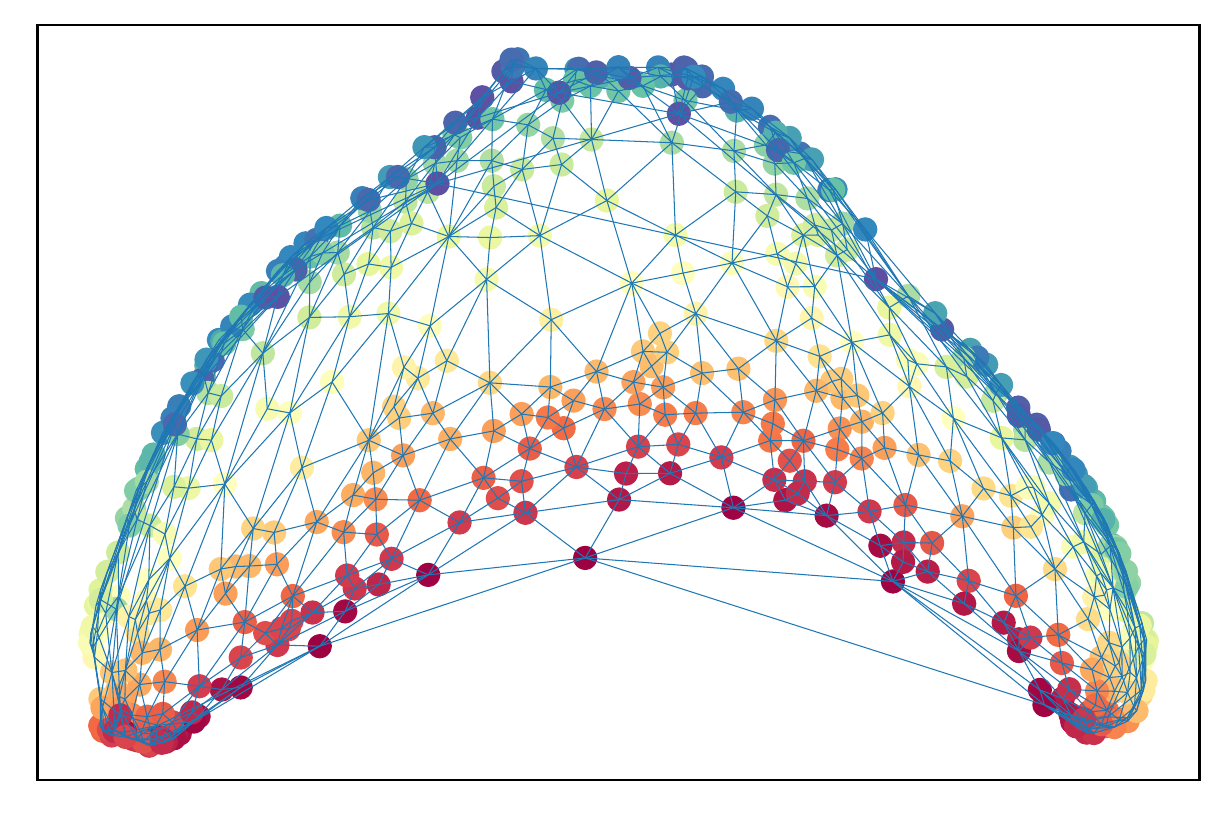}}\;\;\;
     \subfloat[LLE]{\includegraphics[width = 0.11\textwidth, height = 0.13\textwidth]{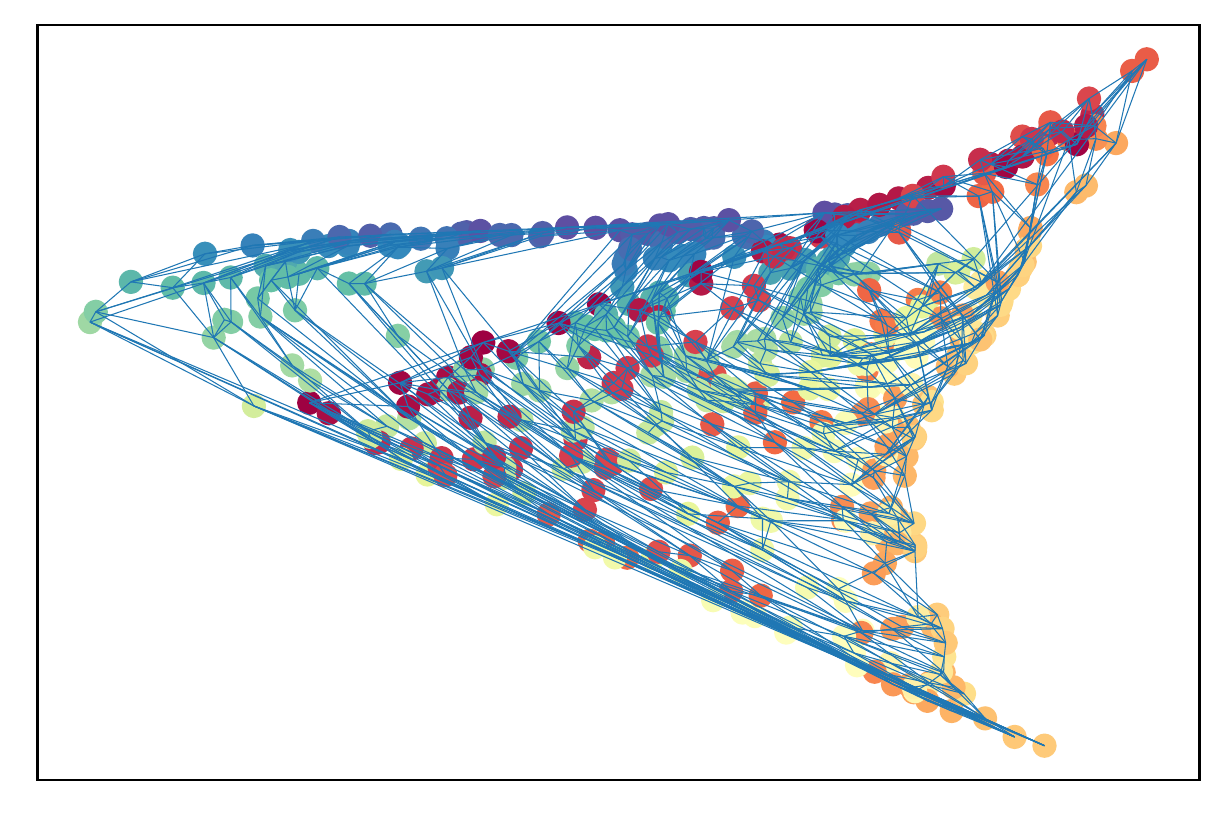}}\;\;\;
     \subfloat[LTSA]{\includegraphics[width = 0.11\textwidth, height = 0.13\textwidth]{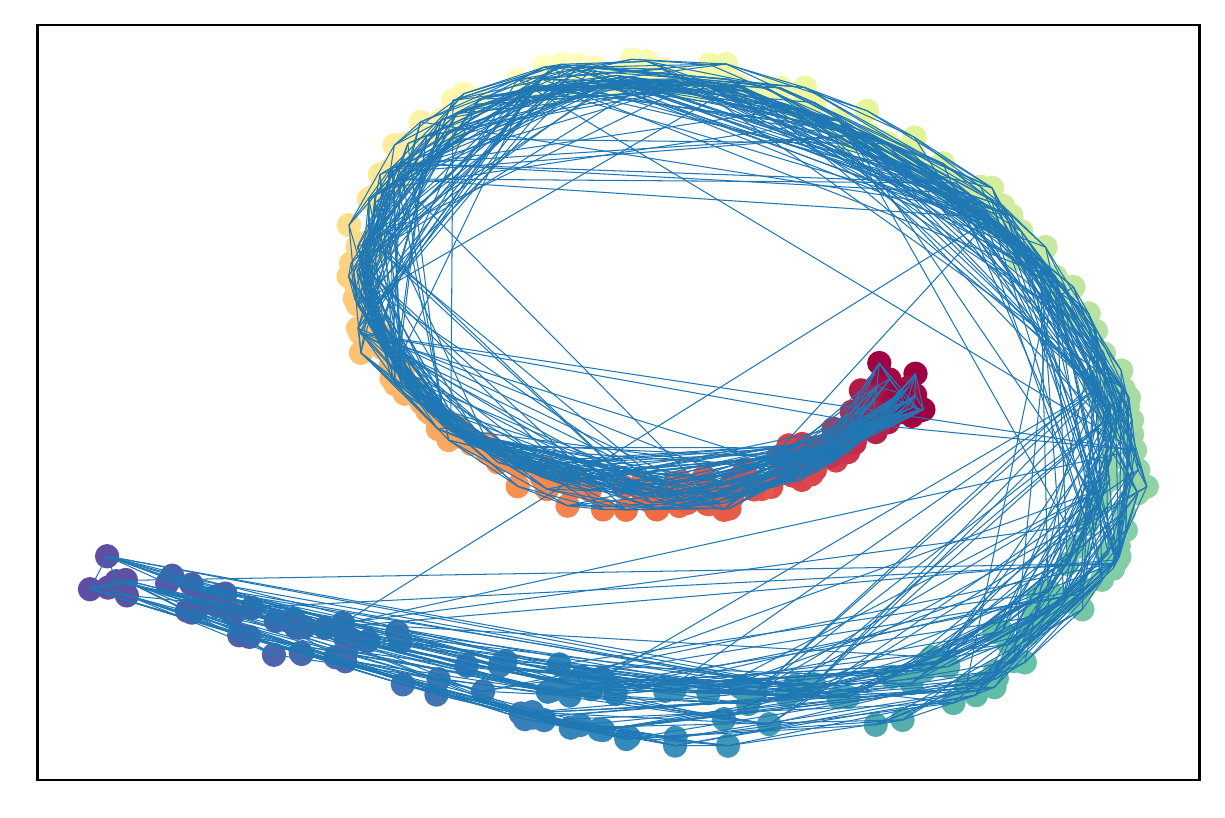}}\;\;\;
     \subfloat[MDS]{\includegraphics[width = 0.11\textwidth, height = 0.13\textwidth]{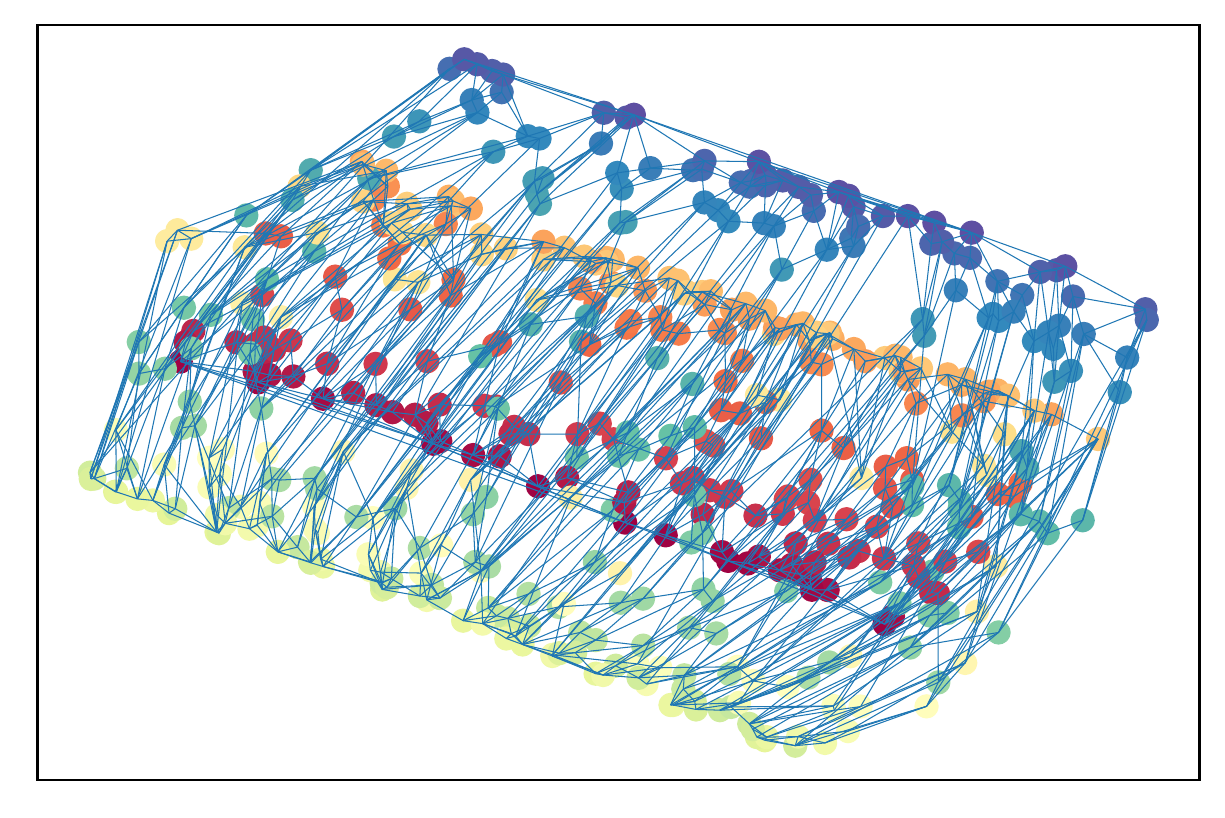}}\;\;\;
     \subfloat[t-SNE]{\includegraphics[width = 0.11\textwidth, height = 0.13\textwidth]{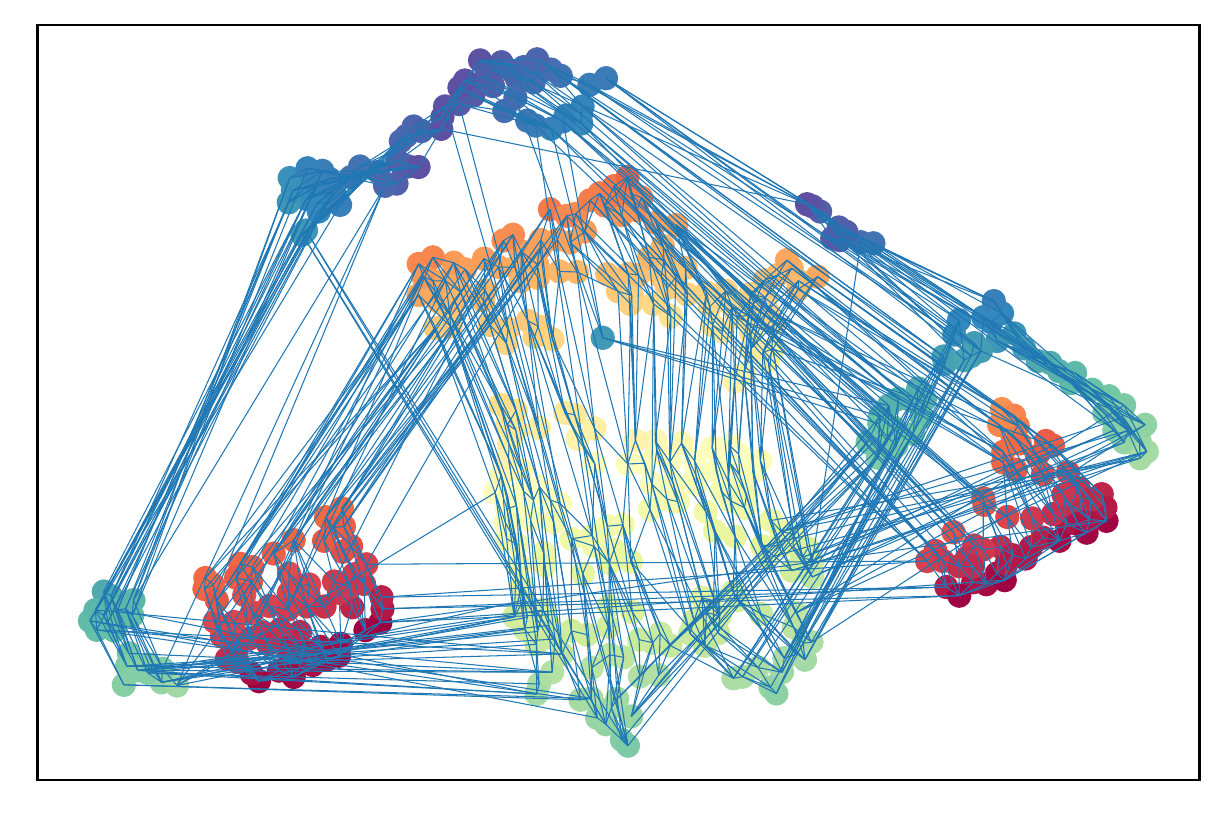}}\;\;\;
\caption{Other methods on Swiss roll. (a) 4585 crosses,(b) 1942 crosses, (c) 937 crosses, (d) 3623 cross, (e) 36773 crosses, (f) 3804 crosses, (g) 10088 crosses}   
\label{other_methods_swiss_roll}
\end{figure}
As we can see, all results in Figure~\ref{other_methods_swiss_roll} are with line crosses, indicating that the connectivity between triangles is not preserved, thus the mapping induced from these methods is not one-to-one. We now show the result for the manifold without boundary e.g. 2-sphere. Note that we only need one round of FPLM to finish the entire process. This is because we assumed that the sample $\mathbf{X}$ is a subset of the manifold, hence the triangulation conducted on $\mathbf{X}$ is always with a boundary. Thus, it is reasonable for us to only use one round of FPLM  given any single triangle can be served as the boundary.

\begin{figure}[H]
     \centering
     \subfloat[]{\includegraphics[width = 0.2\textwidth, height = 0.2\textwidth]{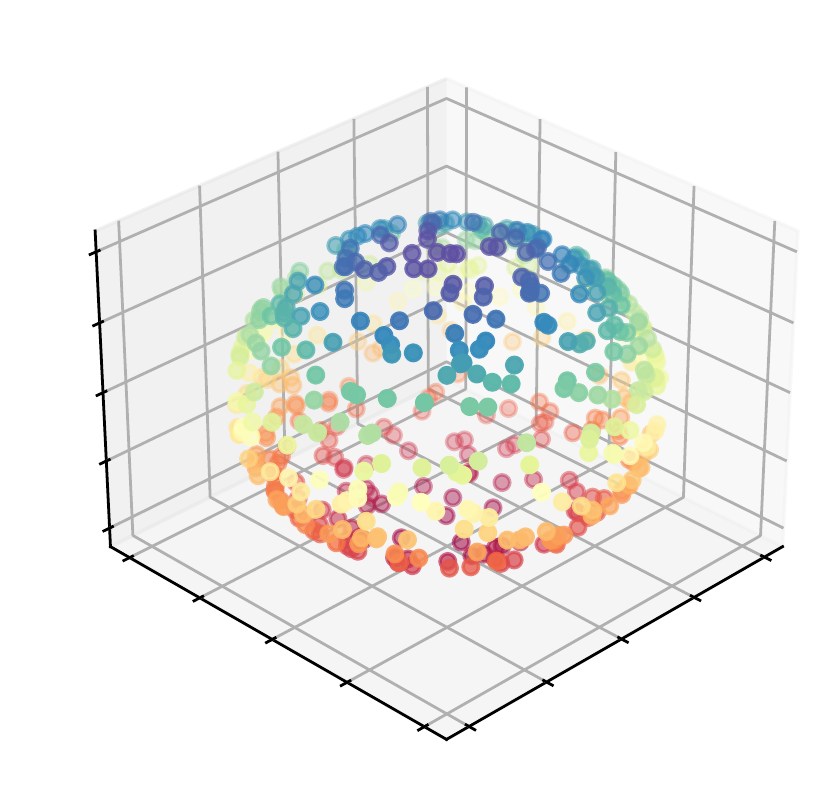}} \;\;\;
     \subfloat[]{\includegraphics[width =0.2\textwidth, height = 0.2\textwidth]{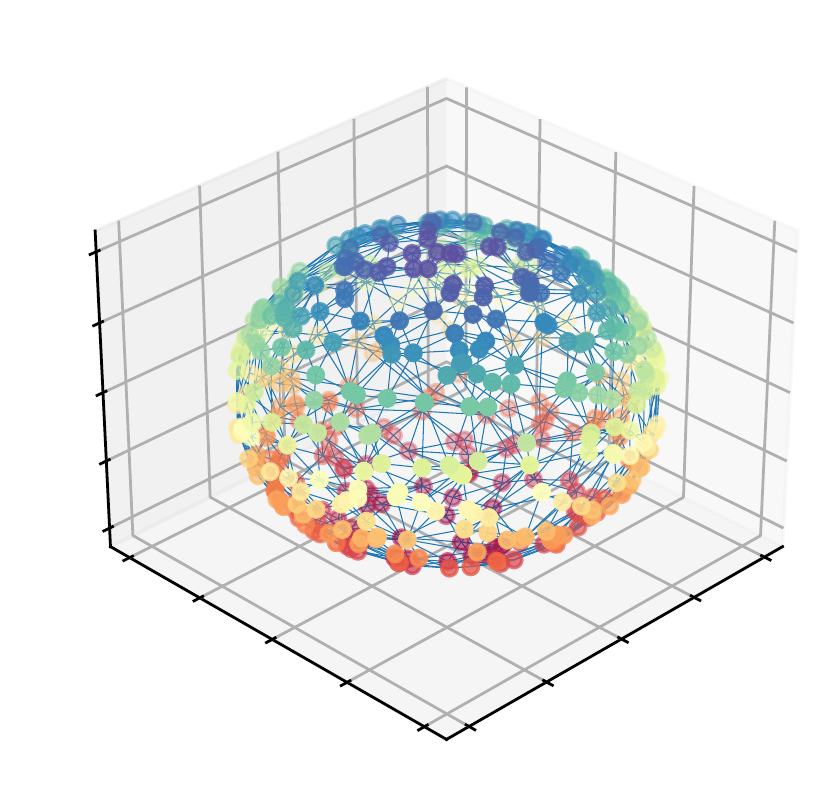}}\;\;\; 
    \subfloat[]{\includegraphics[width =0.2\textwidth, height = 0.2\textwidth]{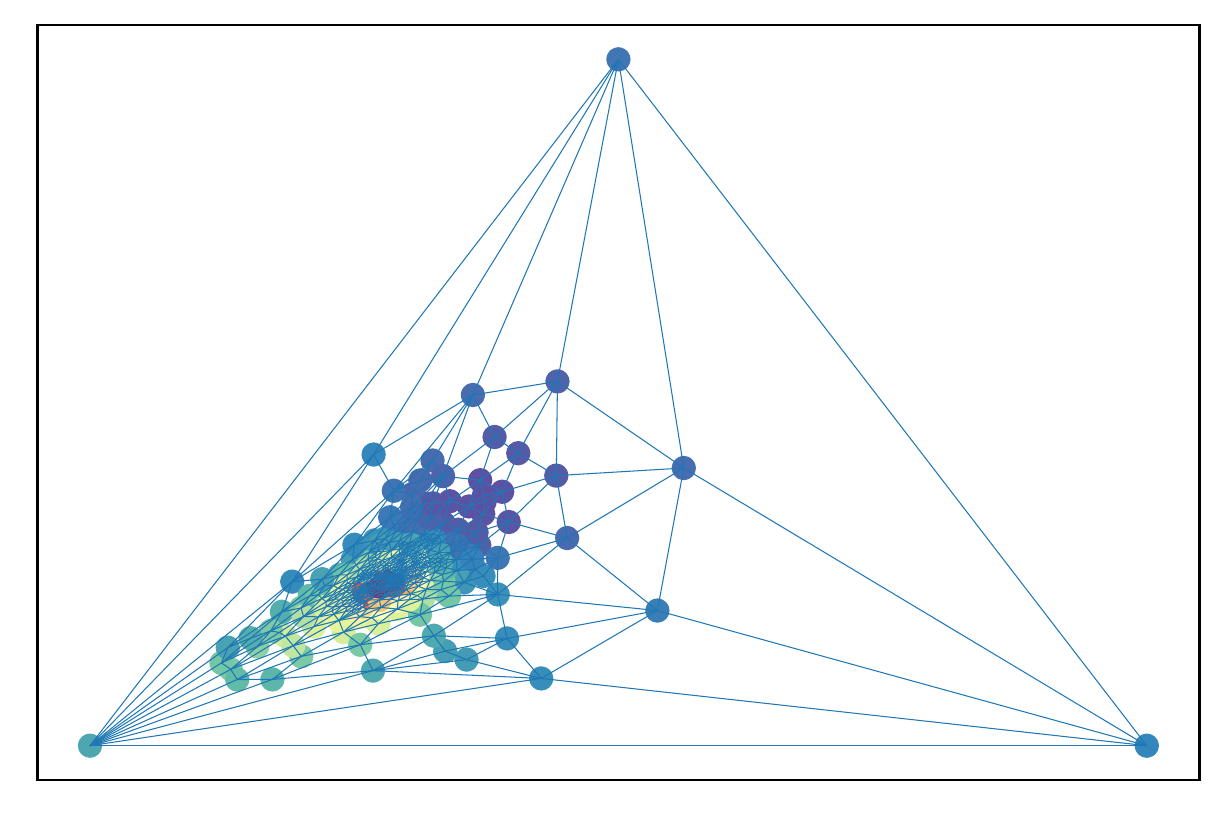}}\;\;\;
\caption{FPLM on Sphere} 
\end{figure}

\begin{figure}[H]
     \centering
     \subfloat[AE]{\includegraphics[width = 0.11\textwidth, height = 0.13\textwidth]{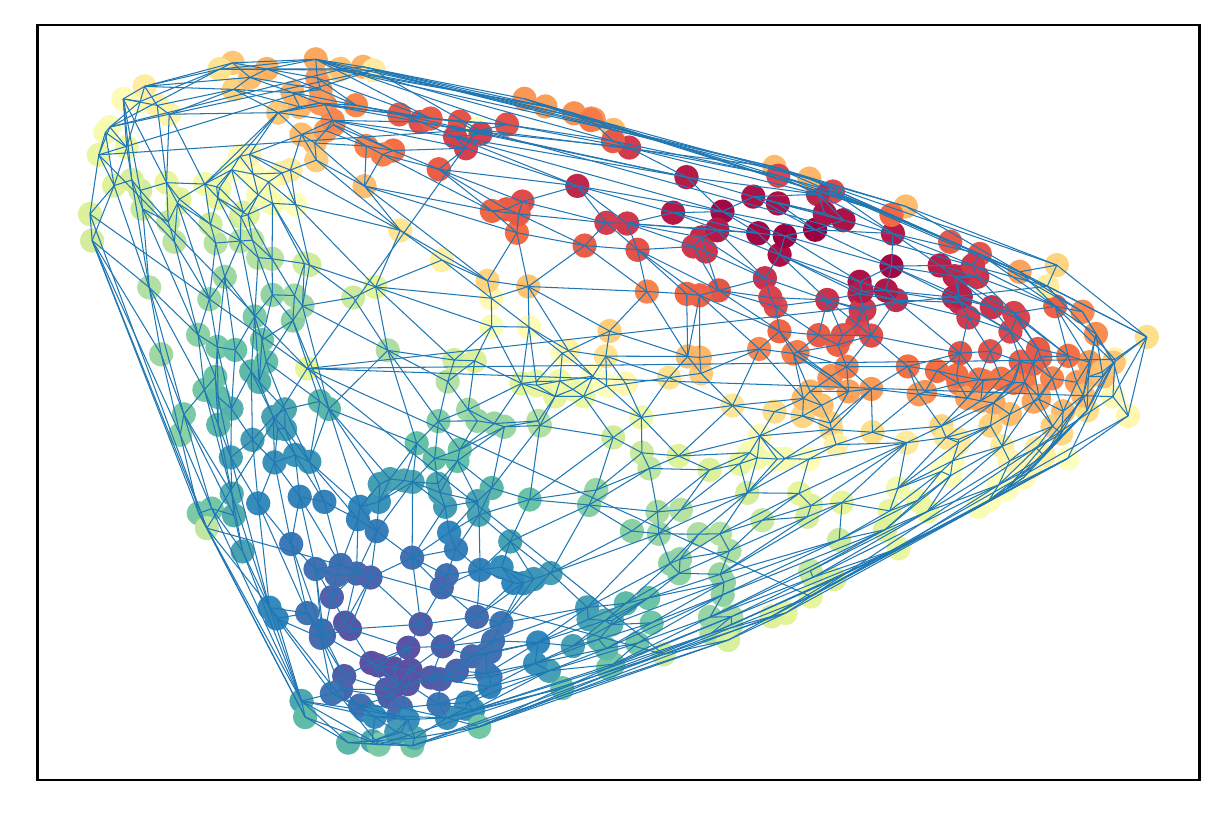}} \;\;\;
     \subfloat[Isomap]{\includegraphics[width = 0.11\textwidth, height = 0.13\textwidth]{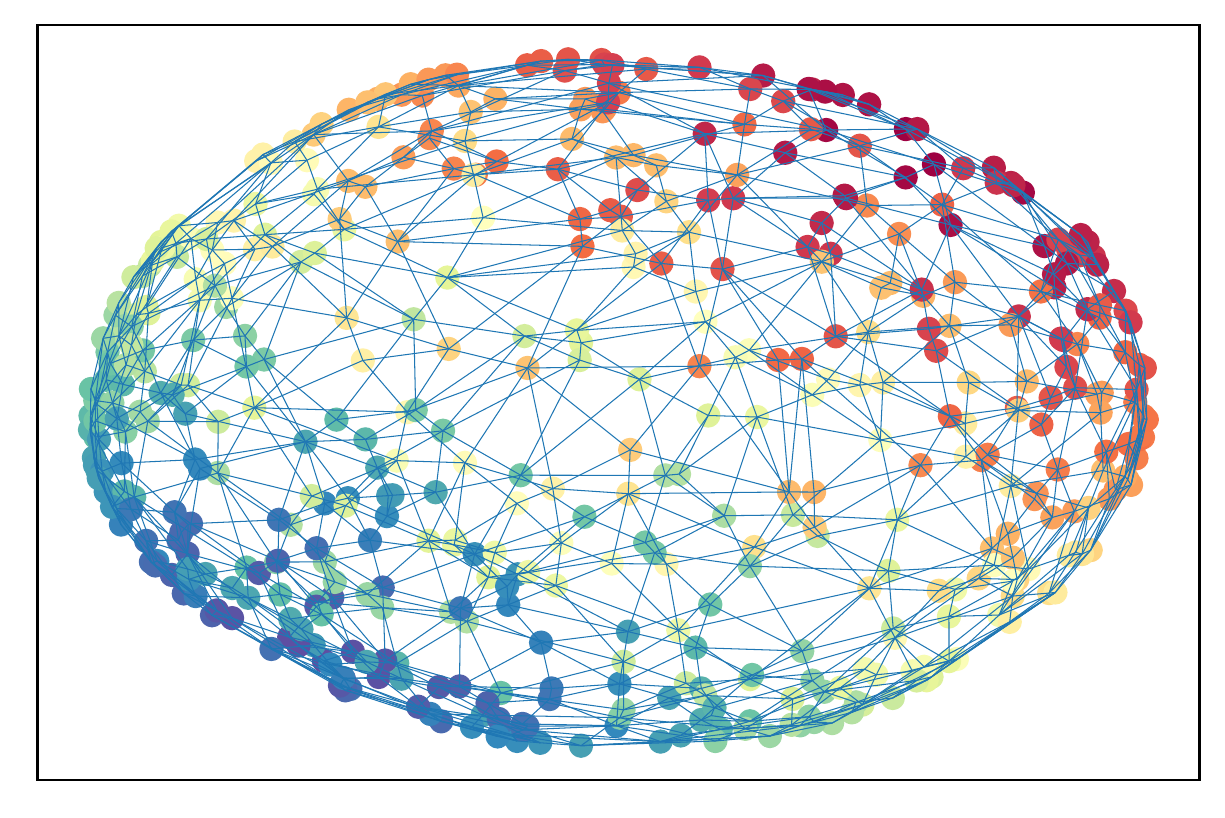}}\;\;\;
     \subfloat[LE]{\includegraphics[width = 0.11\textwidth, height = 0.13\textwidth]{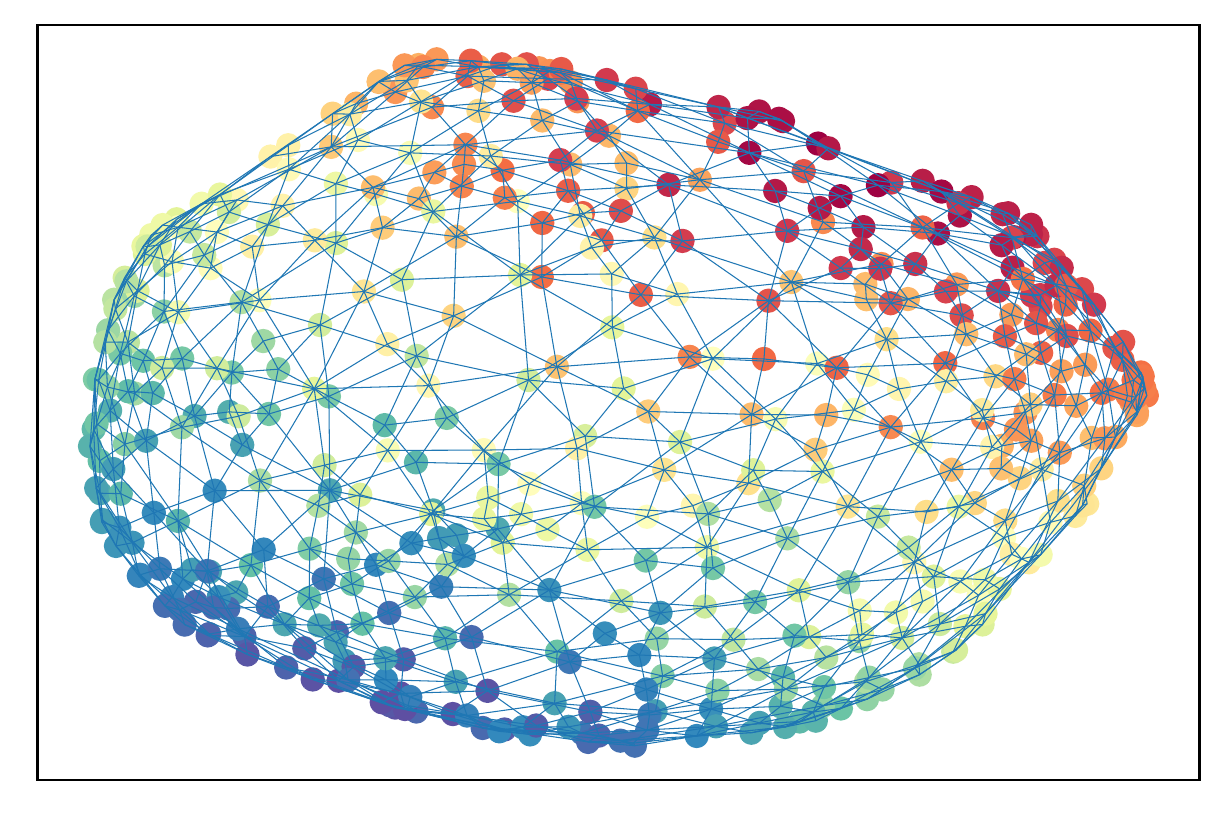}}\;\;\;
     \subfloat[LLE]{\includegraphics[width = 0.11\textwidth, height = 0.13\textwidth]{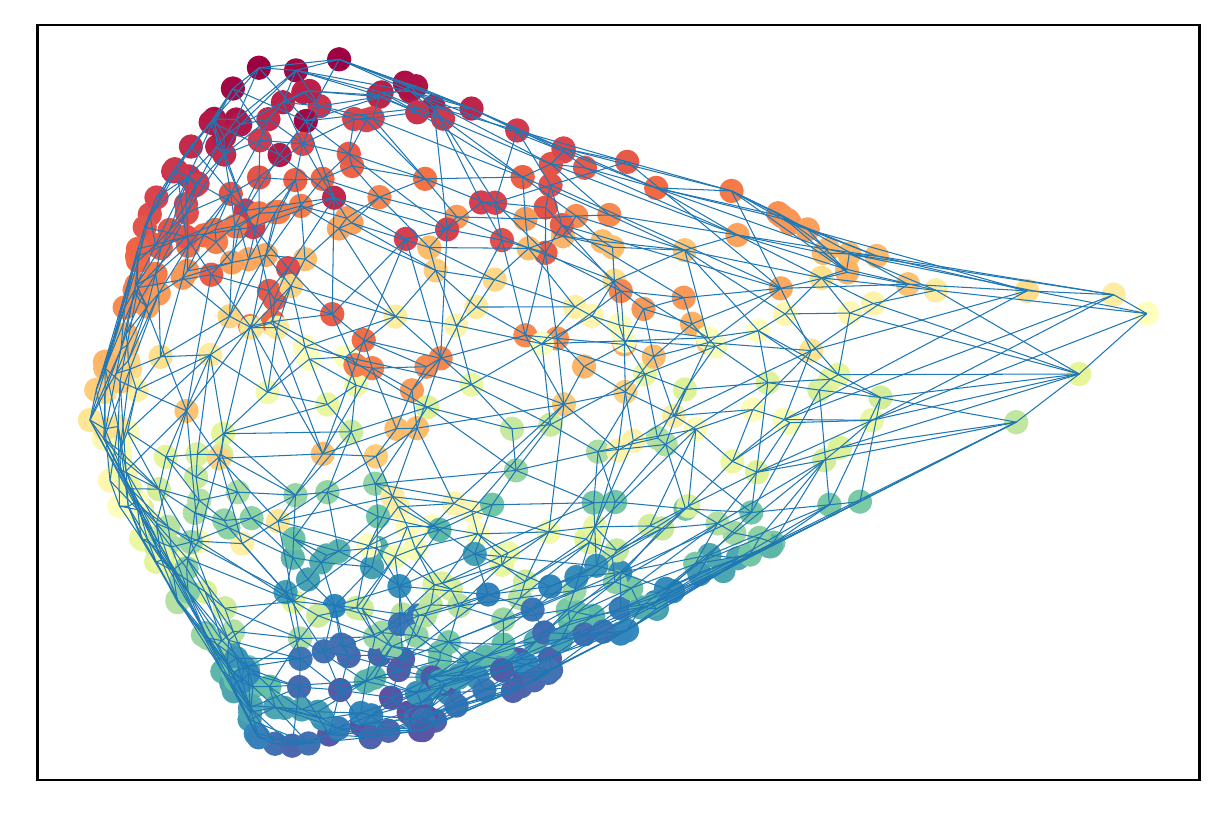}}\;\;\;
     \subfloat[LTSA]{\includegraphics[width = 0.11\textwidth, height = 0.13\textwidth]{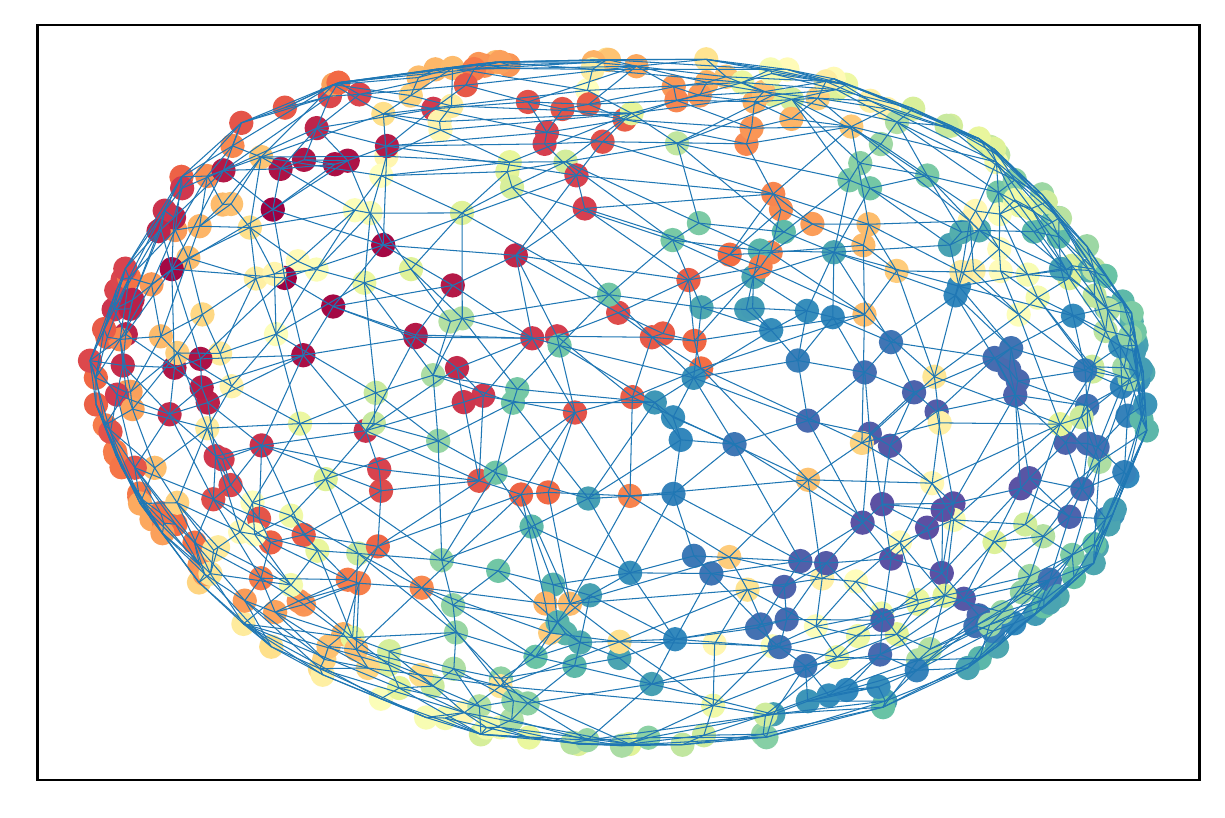}}\;\;\;
     \subfloat[MDS]{\includegraphics[width = 0.11\textwidth, height = 0.13\textwidth]{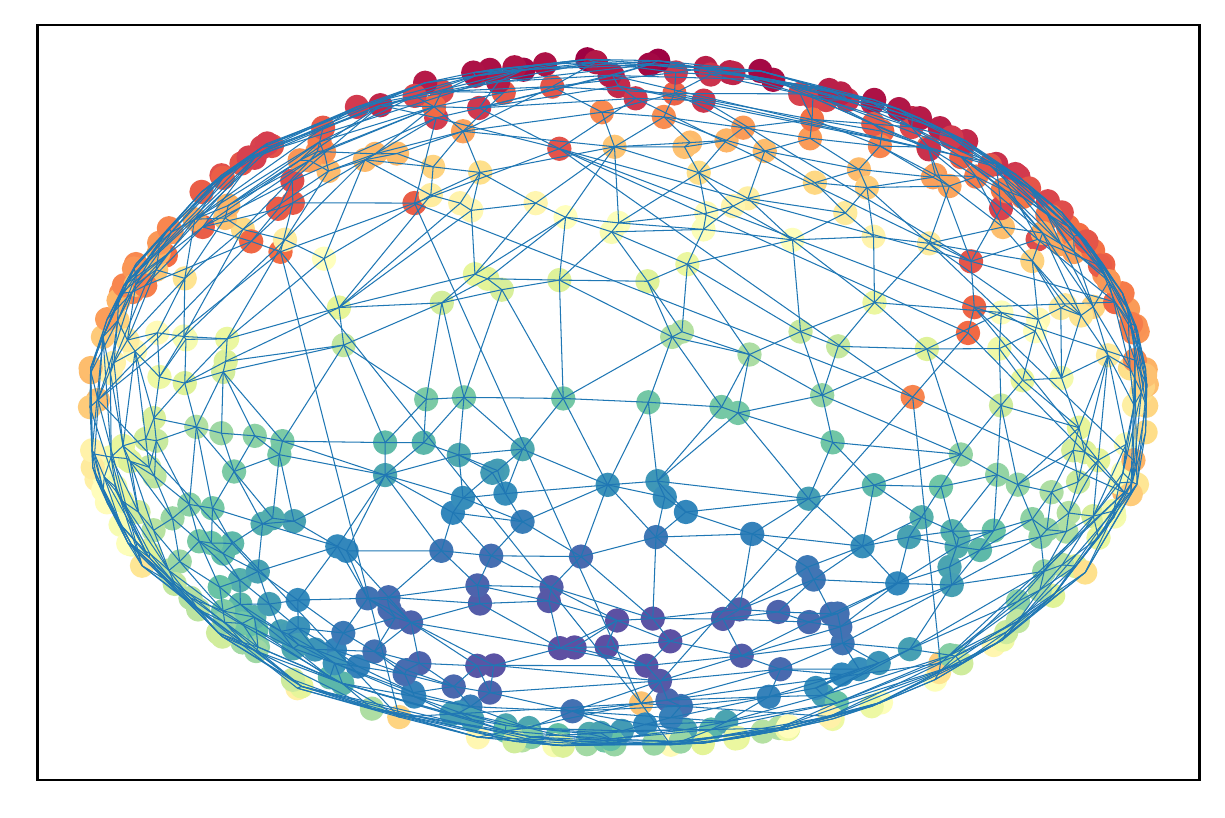}}\;\;\;
     \subfloat[t-SNE]{\includegraphics[width = 0.11\textwidth, height = 0.13\textwidth]{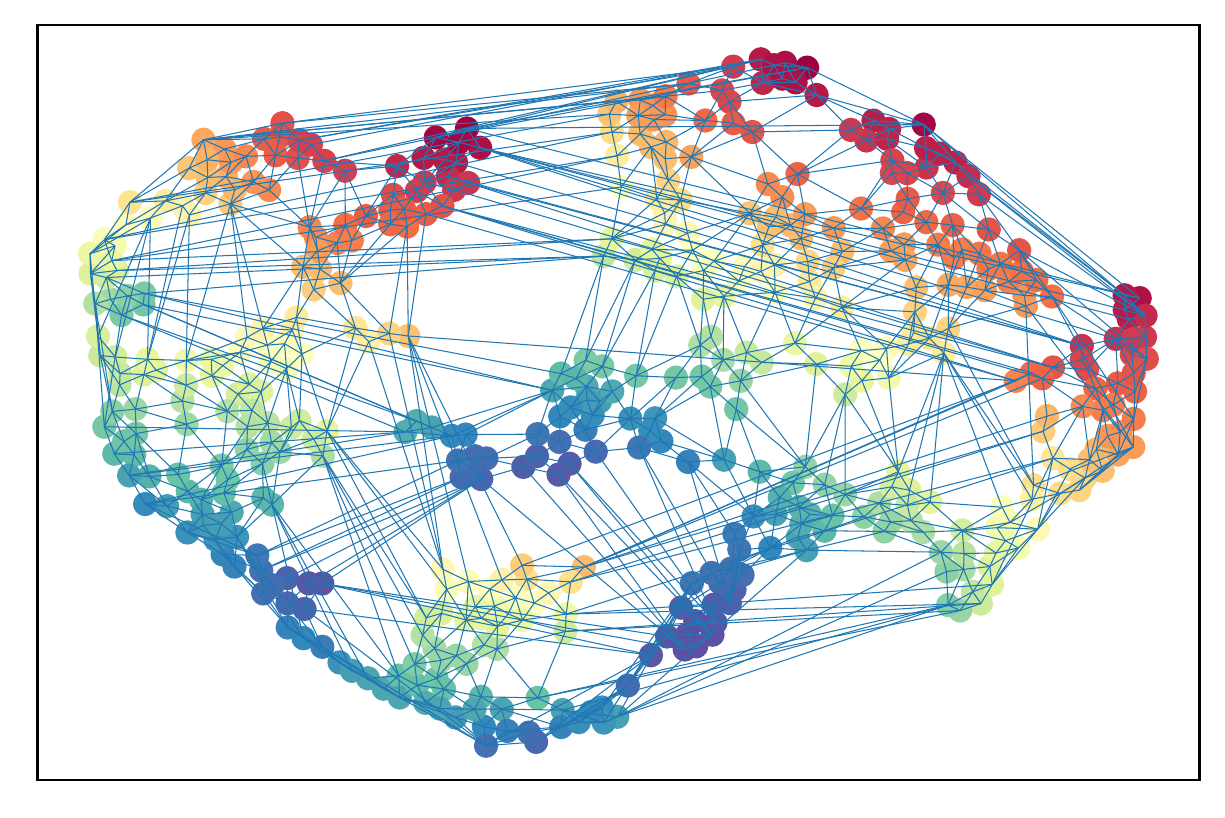}}\;\;\;
\caption{Other methods on Sphere. (a) 770 crosses,(b) 1786 crosses, (c) 1683 crosses, (d) 1883 cross, (e) 1795 crosses, (f) 1667 crosses, (g) 2329 crosses}   
\label{other_methods_sphere}
\end{figure}


\subsection{FPLM on 3-manifolds}
To show the learning performance of FPLM on any given tetrahedron mesh, 
we use both Delaunay tetrahedralization algorithm described in Tetgen \cite{si2015tetgen} and TC to create tetrahedral meshes in $\mathbb R^3$. Given that all the $d$-manifolds we considered in this paper can at least be embedded into $\mathbb R^{d+1}$; hence the points that we simulated in the manifold latent space can always be embedded in at least $\mathbb R^4$ without self-intersection. Note that the boundary of tetrahedral mesh can be detected as the $2$-simplex that is not shared in  tetrahedral mesh. Due to the variety of embedding functions from $\mathbb R^3$ to $\mathbb R^4$, the boundary detect from $\mathbb R^3$ will be, in general, different from the boundary of the manifold in $\mathbb R^4 $ or higher. Moreover, for the first round of FPLM, the fixed points will be the vertices of randomly selected tetrahedral; for the second round of FPLM, the fixed points will be the vertices of the polytope directly detected from tetrahedralization mesh. The following figure shows FPLM results on tetrahedral mesh of 3-ball.

\begin{figure}[H]
     \centering
     \subfloat[]{\includegraphics[width = 0.2\textwidth, height = 0.15\textwidth]{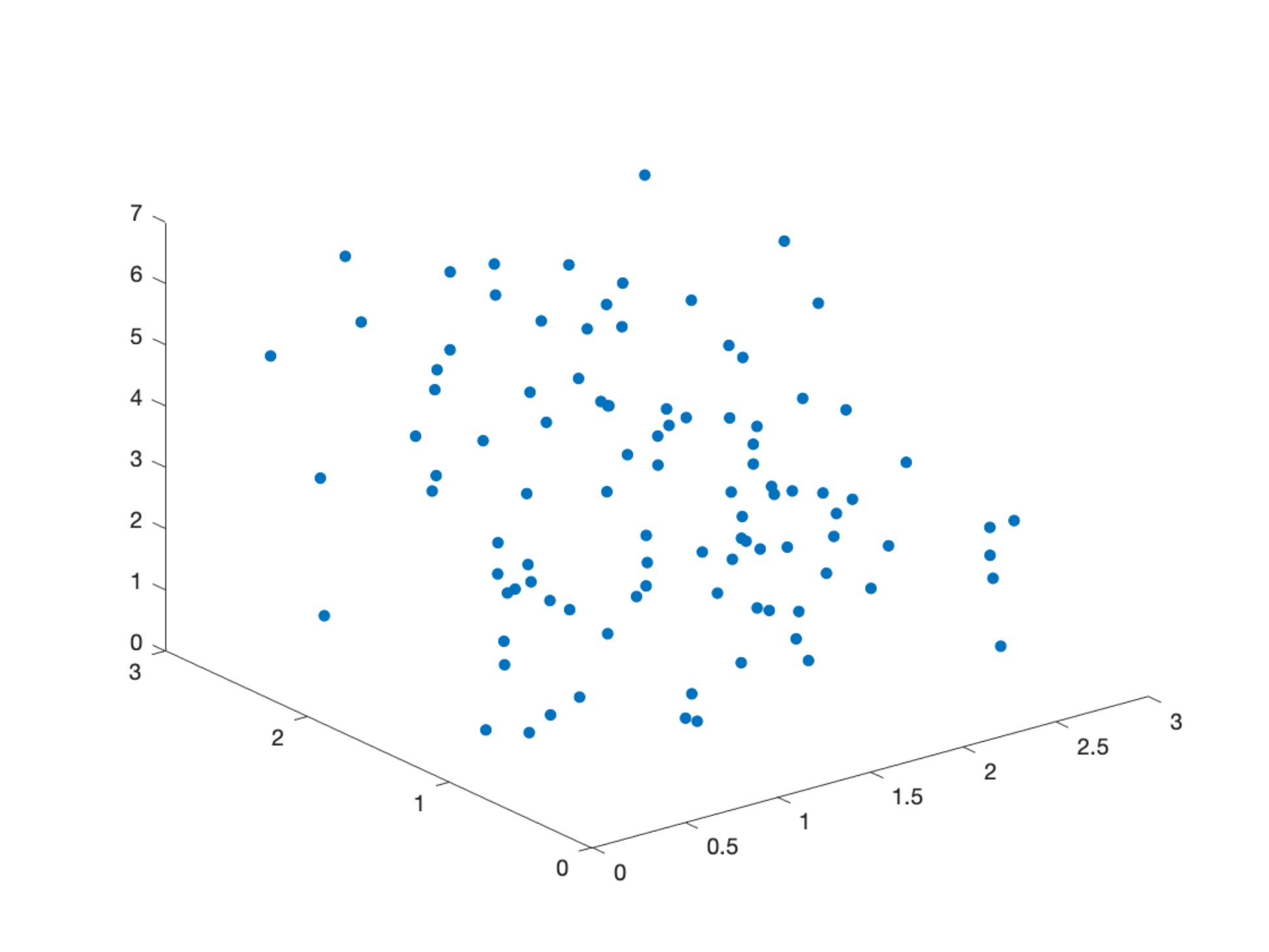}} \;\;\;
     \subfloat[]{\includegraphics[width =0.2\textwidth, height = 0.15\textwidth]{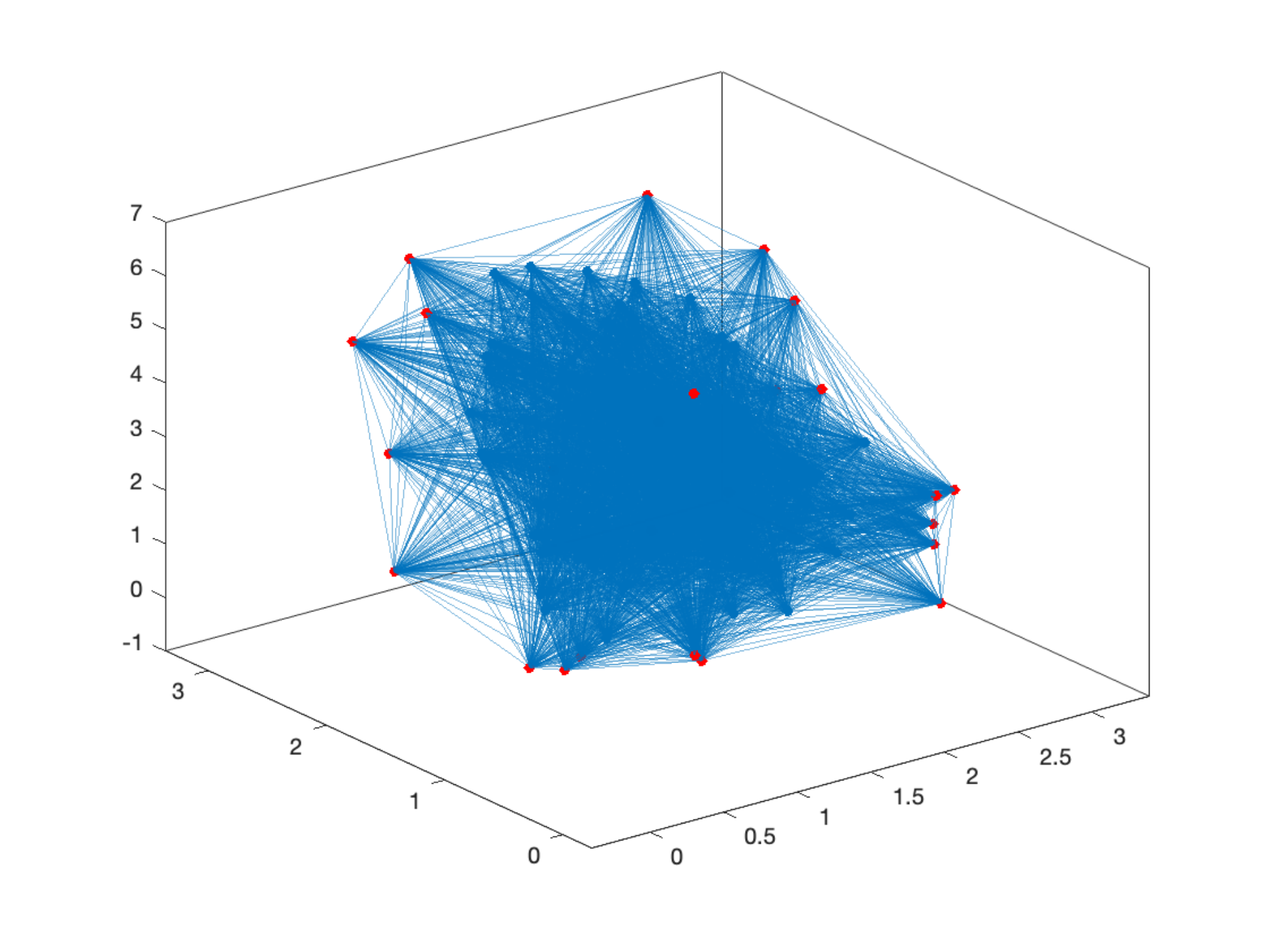}}\;\;\;
      \subfloat[]{\includegraphics[width =0.2\textwidth, height = 0.15\textwidth]{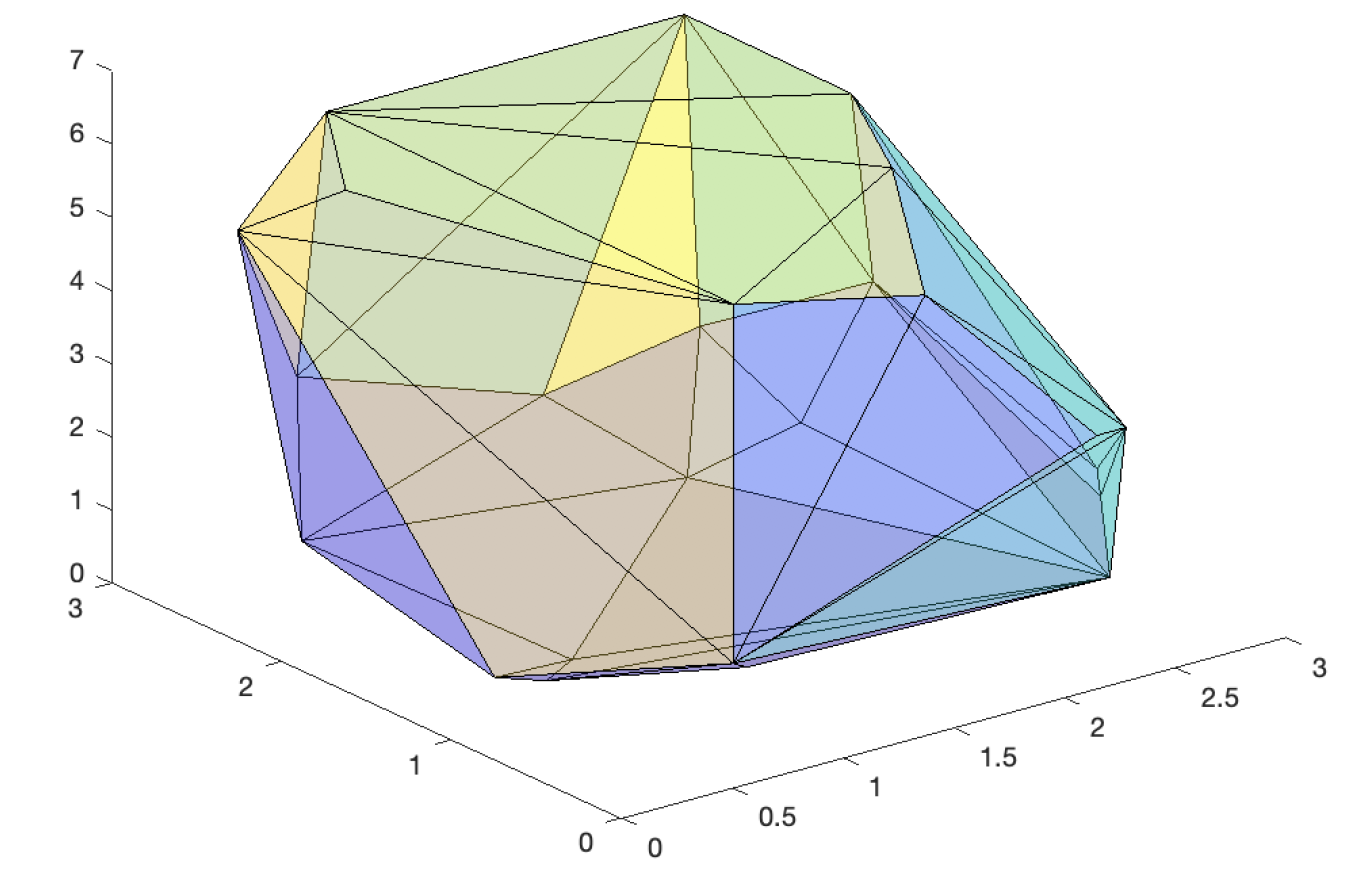}}\;\;\;
     \subfloat[]{\includegraphics[width =0.2\textwidth, height = 0.15\textwidth]{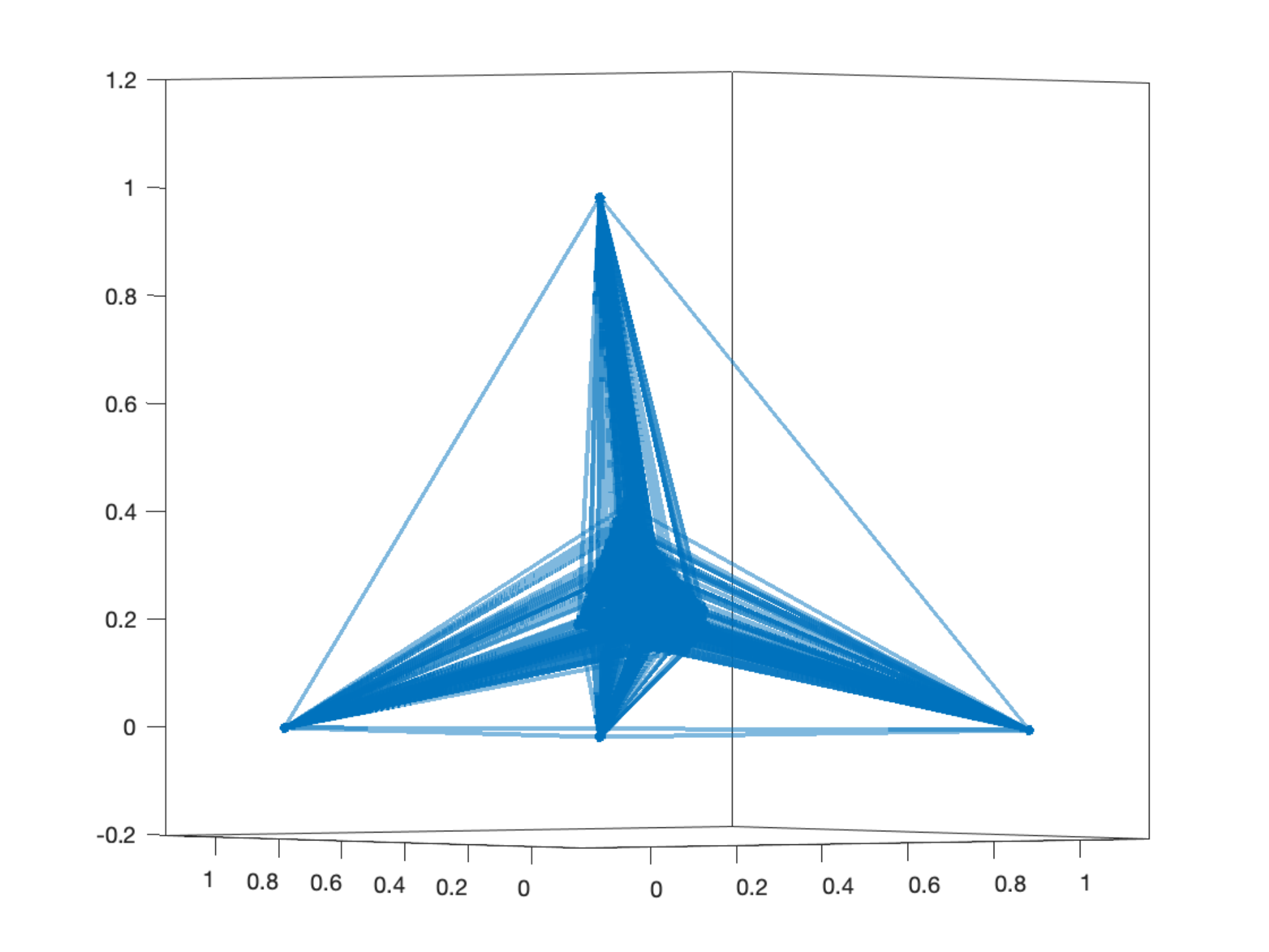}}\;\;\;
\caption{FPLM on 3-ball ground truth latent variables ($\psi, \phi$ and $\theta$), where  $\phi$ and $\theta$ are from $[0,\pi]$ and $\psi$ is from $[0,2\pi]$. Figure (a) Scatter plot on 3-ball (will be 3-sphere in $\mathbb R^4$) in $\mathbb R^3$. (b):  Tetrahedralization  (c): Boundary face detection (d): FPLM on 3-ball.}   
\label{FPLM_three_ball}
\end{figure}

By counting the number of intersections between the planes formed by the faces (triangles) of tetrahedrons (Figure~\ref{FPLM_three_ball}(d)), we found that the result generated from FPLM perfectly preserved the structure of the manifold, since all planes are only intersect with either a common edge, or point. For 3-manifold with boundary, we use the famous ``Delaunay Example'' tetrahedral mesh provided in python vista \cite{sullivan2019pyvista} to check the performance of FPLM. In addition, to show a better visualization result, we plot a subset of the tetrahedralization result by visualizing the tetrahedron below the (x,y) plane. The FPLM process, however, will still conduct using the entire dataset. By direct observation, we can see FPLM preserves the structure of tetrahedralization result. 

\begin{figure}[H]
     \centering
     \subfloat[]{\includegraphics[width = 0.15\textwidth, height = 0.15\textwidth]{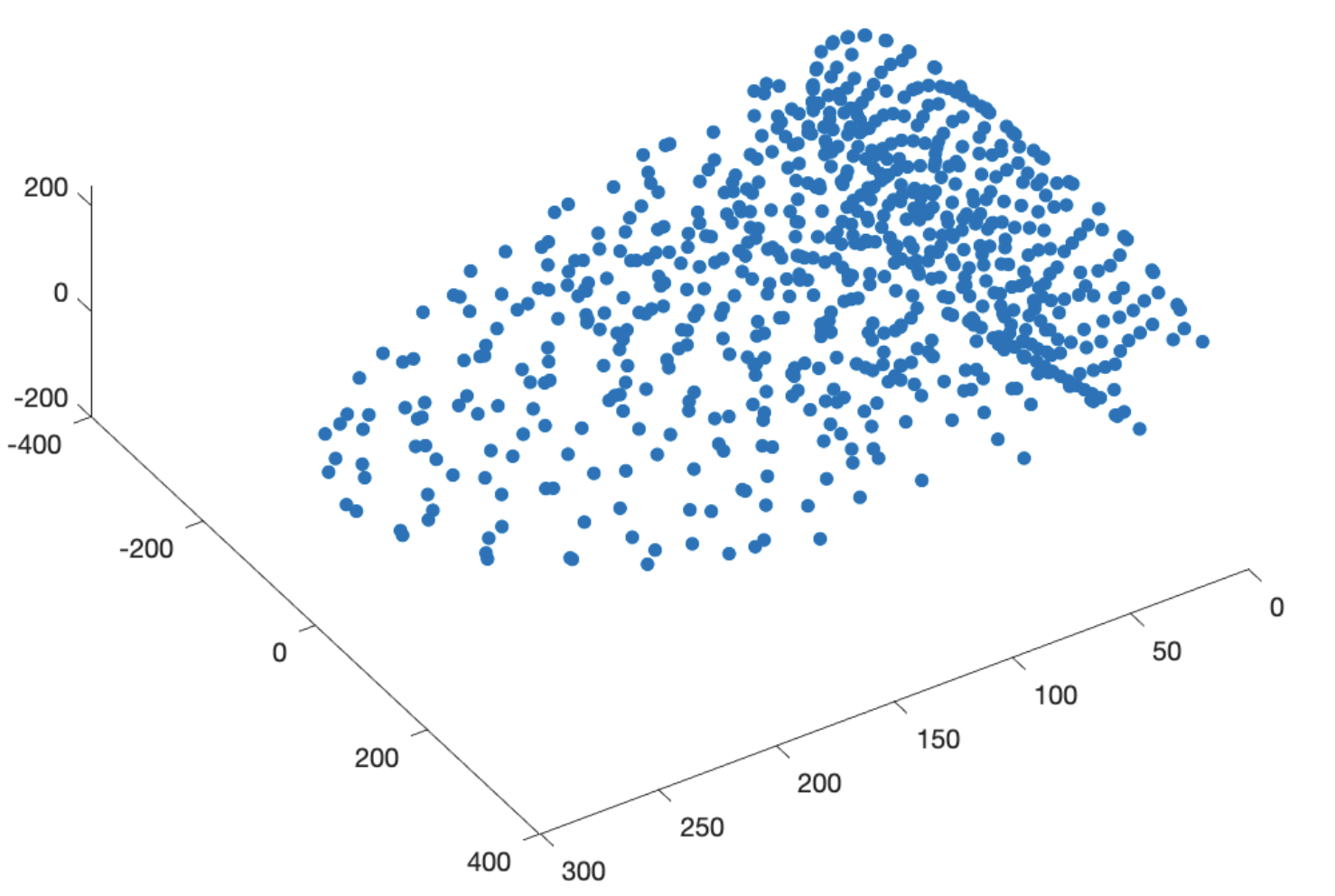}} \;\;\;
     \subfloat[]{\includegraphics[width =0.15\textwidth, height = 0.15\textwidth]{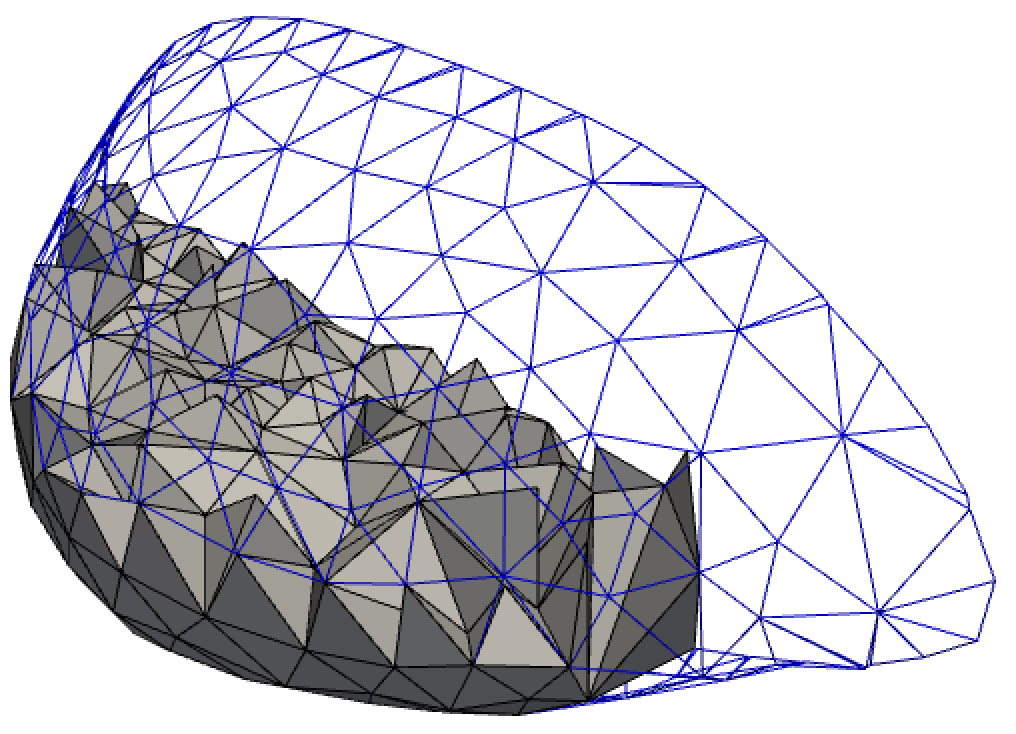}}\;\;\;
      \subfloat[]{\includegraphics[width =0.15\textwidth, height = 0.15\textwidth]{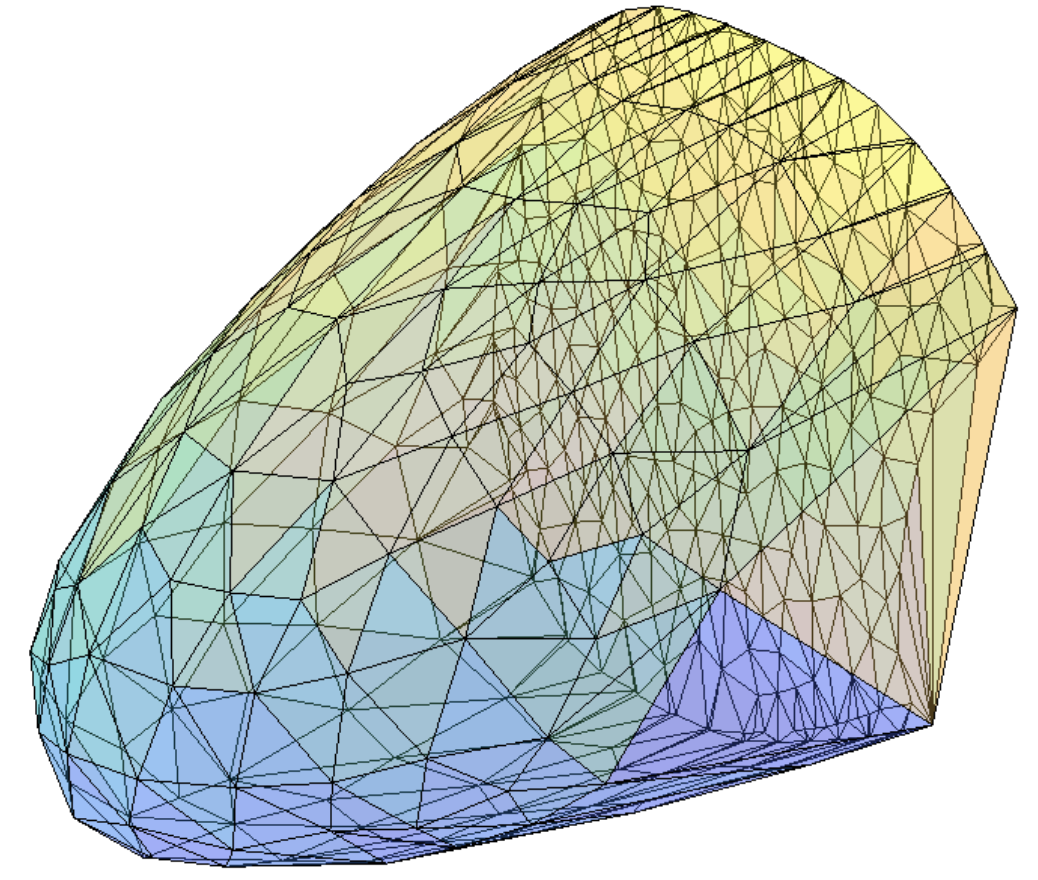}}\;\;\;
     \subfloat[]{\includegraphics[width =0.15\textwidth, height = 0.15\textwidth]{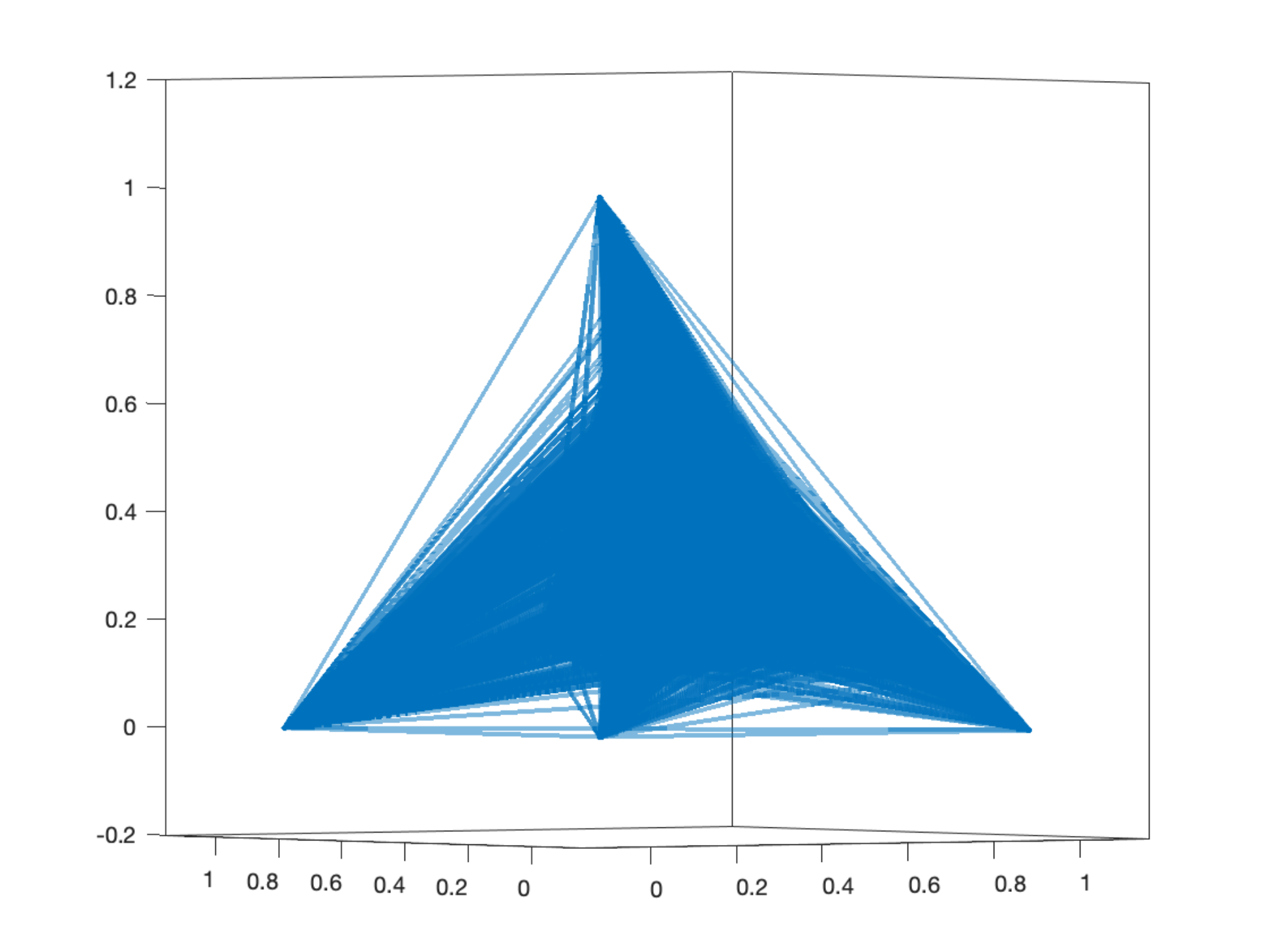}}\;\;\;
     \subfloat[]{\includegraphics[width =0.15\textwidth, height = 0.15\textwidth]{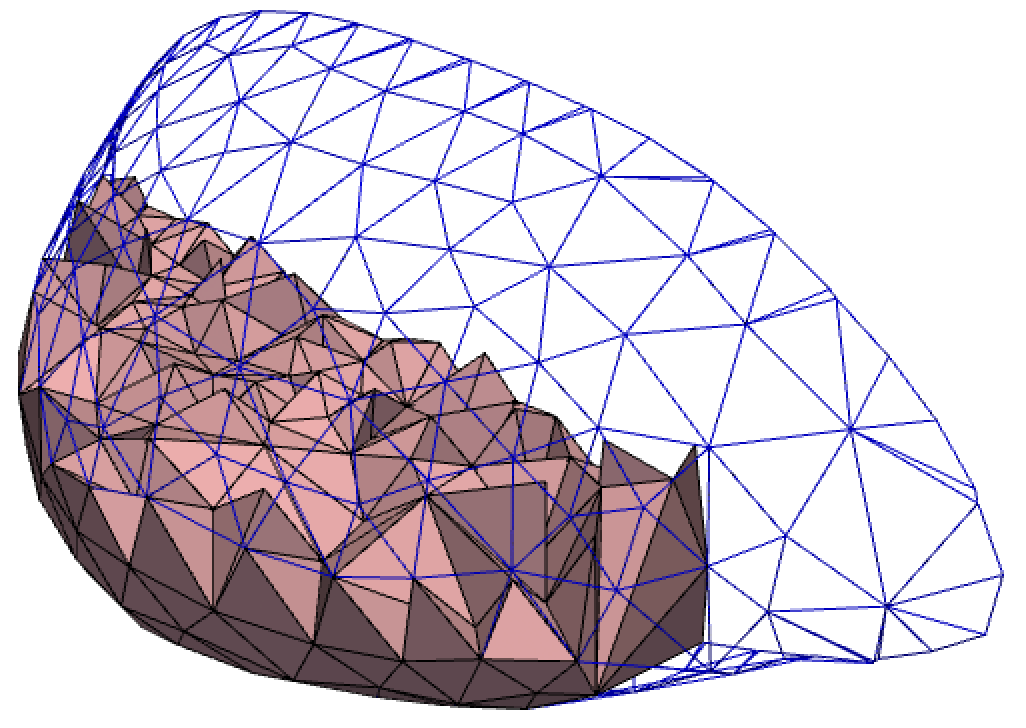}}\;\;\;
\caption{FPLM on tetrahedron mesh example: (a). Point scatter (b). tetrahedralization (c). Boundary detection  (d). First round FPLM (e). Second round FPLM}  
\label{FPLM_delaunay_example}
\end{figure}

\section{Discussion}\label{Discussion}

\paragraph{Boundary Advantage of FPLM}
To get the manifold structure, we apply a simplex decomposition algorithm (if exist) on $\mathcal M$. By our definition of the simplex decomposition, the boundary of the decomposition result will be a $(d-1)$-dimenional closed polytope formed by those $(d-1)$-simplices that are not shared. However, in practice, when $d$ is large, constructing a bijective mapping between boundaries can be as challenging as the original manifold learning problem. Fortunately, when $\mathcal{G}_{\mathcal{S}}$ is strongly connected, we evade this problem by two rounds FPLM in Algorithm 1 because the the boundary of the decomposition result is automatically determined. This is also the reason for the first round.

\paragraph{Limitation in Higher Dimensional Manifold Mesh}
It has been reported that convex combination may not be one-to-one if  $d \geq 3$.  A counter-example has been reported in \cite{floater2006convex}. However, in that particular counter-example, they created one point within a facet of a tetrahedron conflicting with our assumption on the discrete sample on manifold, i.e. all points are assumed in general position. Hence, this counter-example does not apply. We further point that orientation preserving (OP) is necessary for an algorithm with its induced mapping to be bijective for any  $d$-simplex decomposition \cite{lipman2014bijective}. Based on the conclusion from \cite{floater2003one}, we easily derive that FPLM is both local/global OP for connected orientable 2-manifolds due to its proven bijectivity over triangulation. However, when $d \geq 3$, the proof of OP in FPLM is still wanted. 

Nevertheless, we hypothesize that FPLM will always be bijective in high dimensional manifolds under some conditions. One can understand the process of FPLM as to draw a $d$-simplex decomposition result in $\mathbb R^d$ at the same time minimizing the sum of distances. The minimization process in FPLM is equivalent to minimizing the Dirichlet energy of the piece-wise linear mapping $\phi$. It is well-known that Delaunay simplex decomposition minimizes Dirichlet energy of the piece-wise linear function \cite{rippa1990minimal}, suggesting that FPLM could possibly map to Delaunay simplex decomposition. A solid mathematical proof will be sought in future work. 

\section{Conclusion Remarks}\label{conclusion}

\paragraph{Summary of the Paper}\label{summary_of_the_paper}
This paper explores the learning performances of the most widely used state-of-the-art dimensionality reduction algorithms by assessing whether these methods can generate a valid latent space representation aligning with the basic definition of manifold, which is the bijectivity of its chart map. We show that the mapping induced by all examined DR/ML are not one-to-one. Hence they are not learning the manifold in the mathematical sense. We develop a method, two-round FPLM, with the geometric guarantee of bijectivity in its induced map. 
From the experimental results, we found that two-round FPLM can perfectly deal with many  2-manifolds and some 3-manifolds. A future study is to investigate if this procedure has injectivity/bijectivity for manifold with arbitrary dimension.

\bibliographystyle{plain}
\bibliography{ref}

\newpage

\appendix

\section{Appendix one: Analysis of FPLM and Geometric guarantees} \label{Append1}

In this Appendix one, we first provide the algebraic solution of FPLM; then prove that the graph from the triangulation on 2-manifold is planar; thirdly, we show that the constraints that we assign to each round of FPLM can ensure that FPLM is one-to-one over entire triangulation. We also show that FPLM can be used for any edge-to-edge tessellation of polygons on $2$-manifolds. 
For arbitrary dimensional manifold, we discuss sufficient conditions for bijectivity. Unfortunately, computational methods on discrete data sampled from manifold with orientation preserving is still an open question in mathematics.



\subsection{Algebraic solution of FPLM} \label{Algebraic solution of FPLM}
We show the optimization process of both two rounds of FPLM here. First, consider the optimization problem of FPLM in Section \ref{FPLM_algorithm}. We  set some elements in $\mathbf Y \in \mathbb{R}^{N \times d}$ to be fixed points $\mathbf C \in \mathbb{R}^{p \times d}$ ($p \geq d+1$) and optimize the rest. Therefore, we rearrange $\mathbf Y = [ \tilde{\mathbf Y}; \mathbf C ] $ with $\tilde{\mathbf Y} \in \mathbb{R}^{(N-p)\times d}$ being the unknowns and
\[
\mathbf L = \begin{bmatrix} \mathbf L_y & \mathbf L_{yc} \\[3pt] \mathbf L_{yc}^T & \mathbf L_c  \end{bmatrix}
\]
where $\mathbf L_{yc} \in \mathbb{R}^{(N-p)\times p}$. Therefore we can reformulate the problem as 
\begin{align*}
    \min_{ \tilde{\mathbf Y} \in \mathbb{R}^{(N-p) \times d}} \text{tr} ( \mathbf C^T \mathbf L_c \mathbf C + \tilde{\mathbf Y}^T \mathbf L_{yc} \mathbf C + \mathbf C^T \mathbf L_{yc}^T \tilde{\mathbf Y} + \tilde{\mathbf Y}^T \mathbf L_y \tilde{\mathbf Y}) \label{barycenter_mapping}
\end{align*}

This quadratic optimization problem is convex as $\mathbf L_y$ is positive definite. Then by first order condition, a global minimizer $\tilde{\mathbf Y}^*$ exists such that
\begin{equation}
    \mathbf L_{y} \tilde{\mathbf Y}^* + \mathbf L_{yc} \mathbf C = \boldsymbol 0, \text{ with solution } \tilde{\mathbf Y}^* = - \mathbf L_y^{-1} \mathbf L_{yc} \mathbf C, \label{foc_FPLM} 
\end{equation}
where $\mathbf L_y^{-1}$ is the  inverse of $\mathbf L_y$. Since Laplacian matrix acts as a difference operator on features, a geometric interpretation of \eqref{foc_FPLM} is that $\tilde{\mathbf Y}^*$ should have its sum of the weighted difference of its neighbours equal $\boldsymbol 0$, regardless of whether they are connected to the fixed points. This is obvious after rewriting \eqref{foc_FPLM} by components. That is, for any $\tilde{\mathbf y}_i^*$, $i = 1,...,n-p$,
\begin{equation}
    \mathbf D_{ii} \tilde{\mathbf y}_i^* - \sum_{j\in [1, n-p]} \mathbf A_{ij} \tilde{\mathbf y}_j^* - \sum_{l \in [n-p+1, n]} \mathbf A_{il} \mathbf c_l = \boldsymbol 0, \label{convex_combination_function}
\end{equation} 
where $\mathbf A_{ij}$ is the weight between sample $i, j$ and $\mathbf D_{ii}$ is the degree of $\mathbf{x}_i$, including the fixed points. We may further simplify \eqref{convex_combination_function} by considering $\mathbf Y^* = [ \tilde{\mathbf Y}^*; \mathbf C ]$. That is,
\begin{equation}
    {\mathbf y}_i^* = \sum_{j =1}^{n} \frac{\mathbf A_{ij}}{\mathbf D_{ii}} {\mathbf y}_j^*  = \sum_{j =1}^{n} \lambda_{ij} {\mathbf y}_j^*, \quad \forall{i = 1, ..., n-p}, \label{barycenter_mapping}
\end{equation} 
By definition of the degree matrix, we have $\sum_{j=1}^n \lambda_{ij} = 1$, $\forall i$. This shows that every optimal non-fixed point is a convex combination of points in its neighbourhood.

 \subsection{Planarity}\label{Planarity}


Given a triangulation on $\mathcal M$, we denote $G(V, E)$ as the graph containing the adjacency information of vertices and edges from $\mathcal T$. For any vertex $v$ in $G$, we denote $N^{\mathcal T}_v$ as the vertices set that contains $v$'s neighboring vertices directly connected to $v$ in $\mathcal T$. We also denote $E^{\mathcal T}_v$ as the set contains all the edges of $v$ as starting/ending point. Based on the second feature of simplex decomposition in definition 1, we  denote the boundary of manifold $\partial \mathcal M$ (from triangulation) as those edges that are only contained in one triangle. 
In this section, we will demonstrate that the graph induced from triangulation on $\mathcal M$ is planar that can be reduced into a subset of plane so that edges will only intersect at their endpoints. We now define planar graph by stating Kuratowski's theorem\cite{kuratowski1930probleme}:


\begin{thm}[Planar Graph,Kuratowski]\label{thm:planargraph}
A finite graph is planar if and only if it does not contain a sub-graph of the complete graph $K_5$ or the complete bipartite graph $K_{3,3}$ (utility graph).
\end{thm}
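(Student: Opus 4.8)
The plan is to prove both implications, interpreting ``contains $K_5$ or $K_{3,3}$'' in the standard topological sense: that $G$ has a subgraph which is a subdivision (a homeomorphic copy) of $K_5$ or $K_{3,3}$. The forward direction is the routine one. First I would establish via Euler's formula that $K_5$ and $K_{3,3}$ are themselves non-planar. For any simple connected planar graph the relation $V - E + F = 2$ forces $E \le 3V - 6$, which $K_5$ ($V=5$, $E=10$) violates; and for a triangle-free planar graph the sharper bound $E \le 2V - 4$ rules out $K_{3,3}$ ($V=6$, $E=9$). Since planarity is inherited by subgraphs and is invariant under subdividing edges (replacing an edge by an internally disjoint path), any graph containing a subdivision of a non-planar graph is itself non-planar. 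This settles the ``only if'' direction at once.

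The hard direction is the converse: a graph with no $K_5$- or $K_{3,3}$-subdivision must be planar. I would argue by contradiction, fixing a counterexample $G$ that is edge-minimal, so $G$ is non-planar, contains no such subdivision, yet every proper subgraph is planar. A sequence of reductions would then pin down the structure of $G$. Minimality first forces $G$ to be simple with minimum degree at least $3$. Next I would show $G$ is $2$-connected, and then $3$-connected: whenever $G$ has a cut vertex or a $2$-element separator, one splits $G$ along it into strictly smaller pieces, invokes minimality to embed each piece in the plane, and reassembles the embeddings across the small separator. The reassembly is where care is needed — one must route the pieces into a common face so that no crossings appear and no new $K_5$/$K_{3,3}$ subdivision is created.

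This reduces everything to the $3$-connected case, which is the crux. Here I would use the contraction-based induction of Thomassen: every $3$-connected graph on at least five vertices has an edge $e$ whose contraction $G/e$ is again $3$-connected. One checks that $G/e$ still has no Kuratowski subdivision, so by the inductive hypothesis $G/e$ is planar, and being $3$-connected it has an essentially unique embedding in which every face is bounded by a convex polygon. I would then attempt to ``uncontract'' $e$ — re-splitting the merged vertex and distributing its incident edges between the two endpoints inside the face the contracted vertex occupied. The final step is to show that whenever this local surgery cannot be performed planarly, the obstruction to reinserting $e$ exhibits an explicit $K_5$ or $K_{3,3}$ subdivision in $G$, contradicting the hypothesis.

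The main obstacle I anticipate is precisely this last case analysis in the $3$-connected step: translating a combinatorial failure to re-embed the split vertex into a concrete Kuratowski subdivision. The convex, uniquely determined embedding of the $3$-connected minor $G/e$ is what makes the enumeration of obstructions tractable, so establishing and then exploiting that rigidity — rather than the Euler-formula bookkeeping of the easy direction — is where the real work lies.
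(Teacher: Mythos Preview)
Your proof outline is a correct and standard route to Kuratowski's theorem (essentially Thomassen's proof via $3$-connectivity reduction and edge contraction), but it is worth flagging that the paper does not actually prove this statement at all: Theorem~\ref{thm:planargraph} is stated as a classical result with a citation to Kuratowski's 1930 paper and is then used as a black box. The paper's own work is in Propositions~\ref{prop:nok5} and~\ref{prop:nok33}, where it argues that the triangulation graph of a $2$-manifold contains no $K_5$ or $K_{3,3}$ subgraph, and then \emph{applies} Kuratowski to conclude planarity.

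So your proposal is not wrong, but it is doing far more than the paper does or expects: you are re-deriving a foundational theorem of graph theory that the paper simply invokes. If the goal is to match the paper's treatment, a one-line citation suffices; if the goal is a self-contained write-up, your sketch is sound, with the usual caveat that the uncontraction case analysis in the $3$-connected step (where failure to re-embed must yield an explicit $K_5$ or $K_{3,3}$ subdivision) is the genuinely delicate part and would need to be spelled out in full.
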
 
We say a graph is complete if the graph is a simple undirected graph in which a unique edge connects every pair of distinct vertices. Figure~\ref{figure_Kuratowski}(a) shows a complete graph of five vertices $K_5$. We say a graph is a complete bipartite graph if there are two sets of vertices $U$ and $V$ and every vertex of the first set is connected to every vertex of the second set. Figure~\ref{figure_Kuratowski}(b) shows a complete bipartite graph ($K_{3,3})$ in which each vertex set contains 3 vertices. 

\begin{figure}[H]
     \centering
     \subfloat[$K_5$]{\includegraphics[width = 0.25\textwidth, height = 0.2\textwidth]{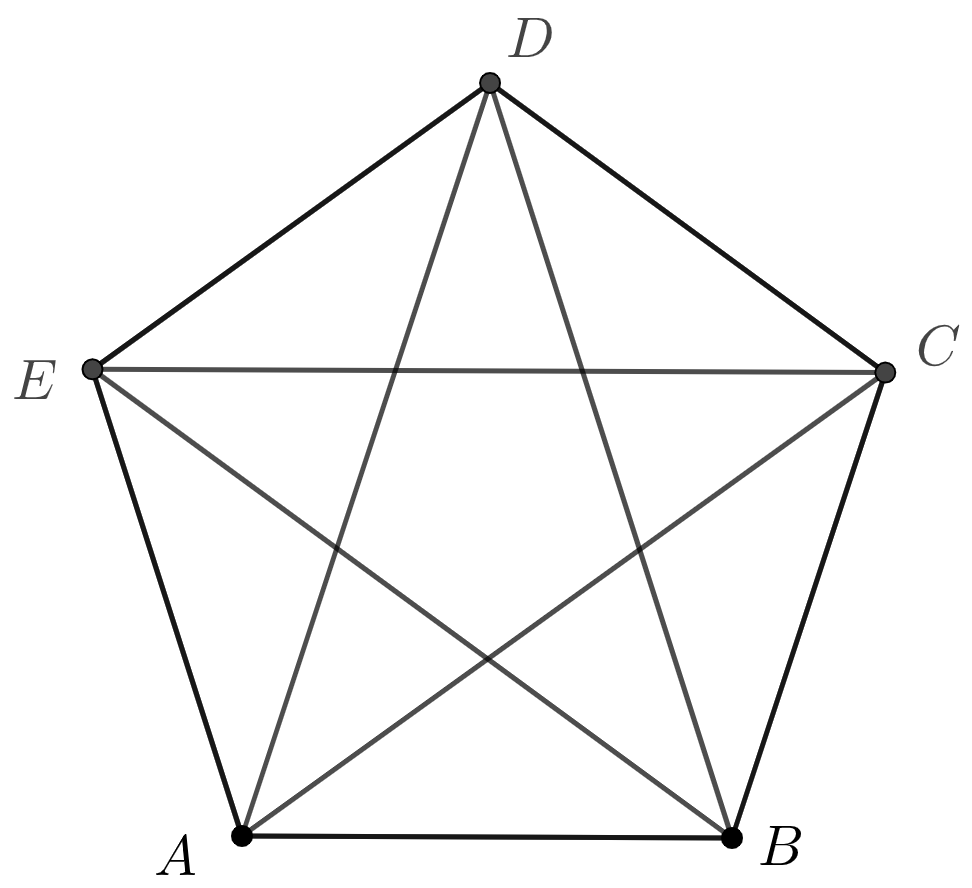}} 
     \hspace{0.8in}
     \subfloat[$K_{3,3}$]{\includegraphics[width =0.25\textwidth, height = 0.2\textwidth]{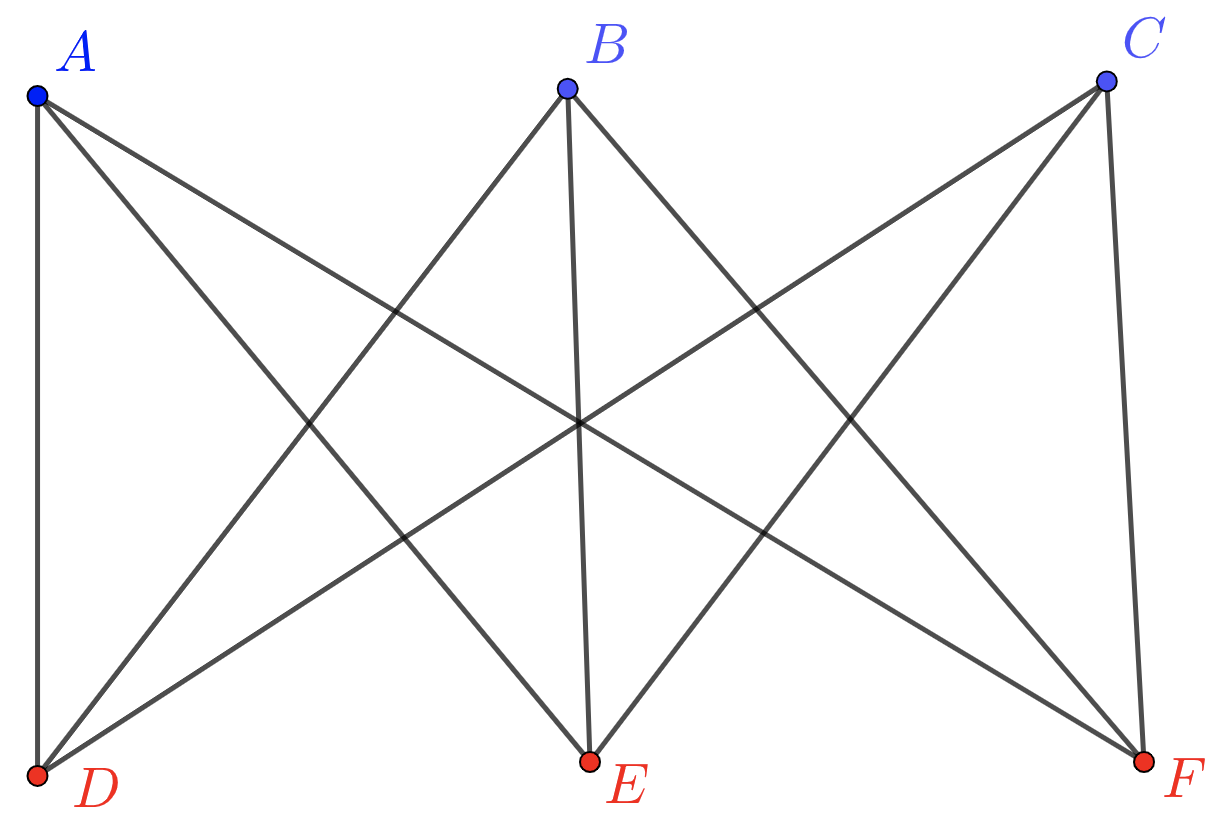}}
\caption{Kuratowski subgraph $K_5$ and $K_{3,3}$}   
\label{figure_Kuratowski}
\end{figure}

\subsection{Planarity of the graph induced from triangulation}

\begin{prop}\label{prop:nok5}
For a  triangulation on a 2-manifold in $\mathbb R^l$, there is no Kuratowski sub-graph $K_5$. 
\end{prop}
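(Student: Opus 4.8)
The plan is to argue by contradiction: suppose the graph $G(V,E)$ induced from a triangulation $\mathcal T$ of a 2-manifold $\mathcal M \subset \mathbb R^l$ contains $K_5$ as a subgraph, i.e. there are five vertices $v_1,\dots,v_5 \in \mathbf X$ that are pairwise adjacent in $\mathcal T$. The key structural fact I would exploit is that in a valid $2$-simplex decomposition, the local neighborhood of any vertex $v$ is either a closed ``fan'' of triangles (if $v$ is interior) or an open fan (if $v$ is a boundary vertex), so that the edges in $E^{\mathcal T}_v$ come with a natural cyclic (or linear) order around $v$, and consecutive edges in that order bound a triangle of $\mathcal T$. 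This is exactly the piece of combinatorial topology that encodes ``$D_{\mathcal T}$ is homeomorphic to $\mathcal M$'' (Definition 1, condition 3) together with the edge-to-edge intersection condition (condition 1). I would first spell this out as a lemma: each interior vertex has a well-defined cyclic sequence of incident triangles, each boundary vertex a well-defined linear sequence.

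Next I would look at one of the five vertices, say $v_1$, and its four neighbors $v_2,v_3,v_4,v_5$ among the $K_5$. Each edge $[v_1,v_j]$ lies in the fan around $v_1$, and because $v_2,\dots,v_5$ are mutually adjacent, many of the edges $[v_1,v_j]$ must be consecutive in the fan (sharing a triangle $[v_1,v_j,v_k]$). Counting: a $K_5$ on $\{v_1,\dots,v_5\}$ would force, around $v_1$, that all of $[v_1,v_2],[v_1,v_3],[v_1,v_4],[v_1,v_5]$ appear, and the induced adjacencies among $v_2,\dots,v_5$ (which form a $K_4$) would require these four edges to be pairwise consecutive in the cyclic fan around $v_1$ — impossible for four distinct edges in a single cyclic order unless the fan has exactly these four edges and four triangles $[v_1,v_2,v_3],[v_1,v_3,v_4],[v_1,v_4,v_5],[v_1,v_5,v_2]$, which already uses up the entire link of $v_1$ as the $4$-cycle $v_2v_3v_4v_5v_2$. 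But $v_2,\dots,v_5$ being a $K_4$ means $v_2$ is also adjacent to $v_4$ and $v_3$ to $v_5$ — these diagonals are chords of the link cycle of $v_1$, and such a chord $[v_2,v_4]$ would have to be an edge of $\mathcal T$ lying ``across'' the closed fan around $v_1$, which contradicts the edge-to-edge / homeomorphism conditions (it would force a triangle to be subdivided or two triangles to overlap, violating condition 1). I would make this precise by an Euler-characteristic or link-structure argument: the link of an interior vertex in a $2$-manifold triangulation is a single cycle (a circle), and $K_5$ minus a vertex is $K_4$, which is not a subgraph of any cycle-plus-its-vertex configuration without creating a crossing in the disk neighborhood.

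The cleanest packaging is probably: restrict to a small neighborhood of $v_1$ in $\mathcal M$, which is homeomorphic to a disk (or half-disk); the triangulation restricted there is a planar triangulation of a disk with $v_1$ interior; hence the link of $v_1$ is a cycle, and the four neighbors $v_2,\dots,v_5$ sit on that cycle; their mutual $K_4$ adjacency then embeds $K_4$ into the disk with all of $v_2,\dots,v_5$ on the boundary circle, but $K_4$ with all vertices on the outer face is non-planar (it needs one vertex inside), contradiction. The main obstacle I anticipate is being careful about \emph{boundary} vertices and about small/degenerate configurations (e.g. when $\mathcal M$ has very few triangles, or when a neighborhood is a half-disk rather than a disk), since the ``link is a cycle'' statement needs the half-disk variant (``link is an arc'') and one must check the $K_4$-planarity obstruction still fires; handling these cases uniformly, rather than the generic interior-vertex case, is where the real work lies. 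I would also need to confirm that ``no extra point inside a simplex'' (general position) is what rules out the degenerate subdivisions — this is cited in the setup and should be invoked explicitly.
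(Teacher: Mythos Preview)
Your approach is quite different from the paper's. The paper argues by contradiction with a case split on the ambient dimension: for $l=2$ it points to an explicit edge crossing between two triangles of a planar $K_5$ drawing (violating Definition~1(1)); for $l\ge 3$ it considers a specific ``pyramid'' configuration of the five vertices and observes that one edge would then belong to three triangles of the decomposition, violating the at-most-two-triangles-per-edge rule for a surface triangulation. There is no link-of-vertex or outerplanarity reasoning in the paper at all.

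Your route, however, has a real gap at its central step. You move from ``$v_2,v_3,v_4,v_5$ are pairwise adjacent in $\mathcal T$'' to ``the $K_4$ on these four vertices embeds in the star-disk of $v_1$ with all four on the link cycle'', and then invoke that $K_4$ is not outerplanar. But the six edges of that $K_4$ are edges of the \emph{global} triangulation; nothing forces them to lie inside the star of $v_1$. An edge $[v_i,v_j]$ with $v_i,v_j\in N^{\mathcal T}_{v_1}$ may belong to triangles $[v_i,v_j,w]$ with $w\neq v_1$, entirely outside the star --- so $v_i$ and $v_j$ need not be consecutive in the cyclic fan around $v_1$, and your ``pairwise consecutive'' counting collapses. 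The clean packaging has the same defect: restricting to the star of $v_1$ hands you the four vertices on the link cycle, but not the $K_4$ edges among them. To rescue the idea one needs a genuinely global step: the link cycle separates the genus-$0$ surface into two disks, and the two ``crossing'' chords $[v_2,v_4]$ and $[v_3,v_5]$ must intersect on the surface regardless of which side they traverse, contradicting the edge-to-edge condition. That Jordan-curve step is exactly where the genus-$0$ hypothesis is consumed --- on a torus $K_7$ triangulates the surface and contains $K_5$, so the proposition fails there --- and your proposal does not supply it.
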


\begin{proof}
The proof is by contradiction. Assume there is $K_5$. By the definition of triangulation on manifold, the intersection of any pair of triangles is either empty, a common vertex, or a common edge. However, a $K_5$ in $\mathbb R^2$ has two triangles intersecting with thire edges of them. For example, in Figure~\ref{figure_Kuratowski} (a) The intersection of triangle T[A,B,D] and triangle T[A,B,C] is line segment [A,B] while their edges [A,C] and [B,D], intersects, contradicting the condition of triangulation on 2-manifold.

Furthermore, as the triangulation on the manifold is conducted on $\mathbb R^l$ (for example $l=3$), it is possible to have the situation that one of $K_5$'s vertices is lifted up in another dimension so that the entire $K_5$ in $\mathbb R^3$ becomes a pyramid shown in Figure~\ref{high_dim_k5} below. However, from the definition of triangulation, all edges can only be shared by at most once. From the figure below, it is clear to see that edge [C,D] is shared by triangle T[B,C,D], T[A,C,D],T[E,C,D], and that contradicts to the definition of triangulation. 



\end{proof}
\begin{figure}[H]
\centering
\includegraphics[width =0.3\textwidth, height = 0.3\textwidth]{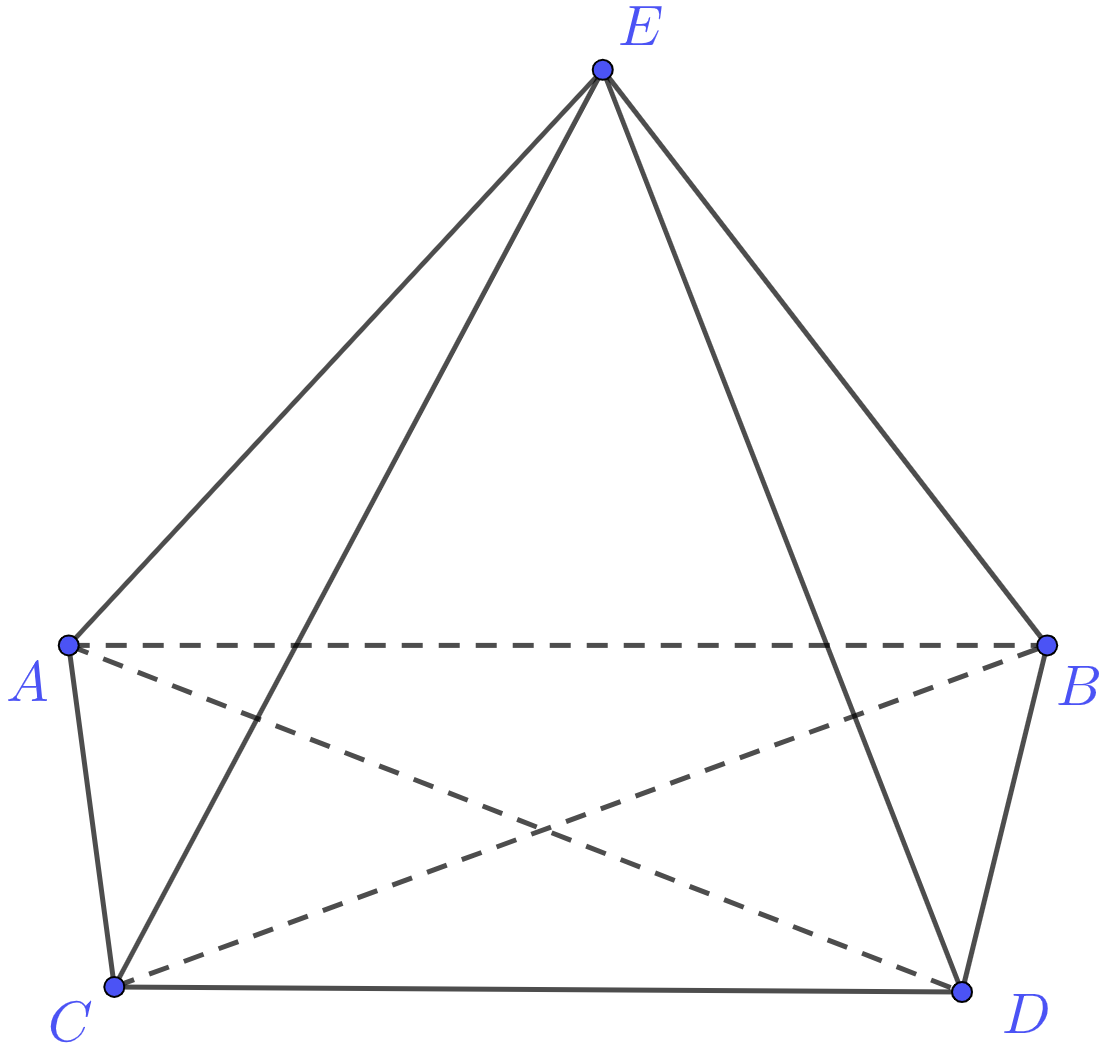}
\caption{Sub-graph of $K_5$ in $\mathbb R^3$}\label{high_dim_k5}
\end{figure}

\begin{prop}\label{prop:nok33}
For a triangulation on 2-manifold in $\mathbb R^l$, there is no Kuratowski sub-graph $K_{3,3}$. 
\end{prop}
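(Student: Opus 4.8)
The plan is to argue by contradiction in the spirit of the proof of Proposition~\ref{prop:nok5}. Suppose the graph $G(V,E)$ induced from a triangulation $\mathcal T$ of $\mathcal M$ contained a subgraph isomorphic to $K_{3,3}$, say on vertices $a_1,a_2,a_3$ and $b_1,b_2,b_3$ with all nine edges $[a_i,b_j]\in E$. The clean way to finish is global. Under the hypothesis of Theorem~\ref{thm:2dmfdtriangulationplanar}, $\mathcal M$ is a connected orientable surface of genus $0$, and hence embeds in the sphere $S^2$: if $\mathcal M$ is closed it \emph{is} $S^2$, and if it has boundary it is $S^2$ with finitely many open disks removed, hence a subset of $S^2$. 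By the third requirement in the definition of a $d$-simplex decomposition, $D_{\mathcal T}\cong\mathcal M$, so the $1$-skeleton of $\mathcal T$ is topologically embedded in $D_{\mathcal T}$ and has underlying graph $G$; composing the homeomorphism $D_{\mathcal T}\to\mathcal M$ with $\mathcal M\hookrightarrow S^2$ realizes $G$ as a planar graph. Since $G$ contains $K_{3,3}$, which is the utility graph and is not planar, this is a contradiction.

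I would also record a self-contained, figure-driven version mirroring the "two triangles intersecting with their edges" case of Proposition~\ref{prop:nok5}, valid at least in the primary case $l=2$ where the latent space is $\mathbb R^2$. There the nine edges of $K_{3,3}$ are straight segments in $\mathbb R^2$, and non-planarity of $K_{3,3}$ forces two of them, say $e$ and $e'$, to cross at a point $p$ lying in the relative interior of each. Because $\mathcal M$ (hence $D_{\mathcal T}$) is a $2$-manifold, the triangulation is pure, so $e$ lies in some triangle $T\in\mathcal T$ and $e'$ in some triangle $T'\in\mathcal T$. Then $p\in T\cap T'$, but $p$ is not a vertex and it is an interior point of the \emph{distinct} edges $e\neq e'$, so $T\cap T'$ is neither empty, nor a common vertex, nor a common edge, contradicting requirement~1 of the $d$-simplex decomposition. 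A third route, avoiding Kuratowski's theorem, is Euler's formula: every surface into which $K_{3,3}$ embeds has non-positive Euler characteristic, whereas a genus-$0$ surface embeds in $S^2$ and so cannot host $K_{3,3}$.

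The main obstacle is the case $l\ge 3$, and it is genuinely global: the local axioms of a simplicial complex alone do not forbid a $K_{3,3}$ subgraph — a (suitably subdivided) toroidal embedding of $K_{3,3}$ yields a legitimate triangulation of the torus containing it — so the proof must use that $D_{\mathcal T}$ is homeomorphic to a genus-$0$ surface rather than merely inspecting how edges sit around vertices. The embedding $\mathcal M\hookrightarrow S^2$ is the cleanest way I see to feed in that information. A purely combinatorial substitute — enumerating the ways the nine edges of $K_{3,3}$ can be completed into triangles of $\mathcal T$ consistent with the cyclic edge-orders that the surface structure imposes around $a_1,\dots,b_3$, and checking that each completion over-uses some edge or overlaps some triangle — is possible but tedious, and I would fall back on it only if a picture-based exposition matching Proposition~\ref{prop:nok5} were preferred.
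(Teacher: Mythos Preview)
Your proof is correct and follows a genuinely different line from the paper's. The paper proceeds by cases on the ambient dimension: for $l=2$ it observes that any straight-line drawing of $K_{3,3}$ in the plane has crossing edges, contradicting the triangulation axioms; for $l\ge 3$ it inspects one particular spatial realization (Figure~\ref{high_dim_k33}) and argues that the planes through $\{F,A,B\}$ and $\{A,B,E\}$ meet along the segment $[A,B]$, forcing the underlying surface to self-intersect and hence to be merely immersed rather than embedded in $\mathbb R^l$, contrary to the standing no-self-intersection hypothesis. Your argument instead goes straight through the global topology: a connected orientable genus-$0$ surface embeds in $S^2$, so the $1$-skeleton of $\mathcal T$ is planar and cannot contain $K_{3,3}$. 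This is cleaner and, as you correctly stress, makes explicit that the genus-$0$ hypothesis of Theorem~\ref{thm:2dmfdtriangulationplanar} is doing real work---a triangulated torus can contain $K_{3,3}$ in its edge graph, so the local simplicial axioms alone cannot suffice, and any complete argument for $l\ge 3$ must feed in that global information. The paper's approach has the virtue of staying elementary and picture-driven, matching the style of Proposition~\ref{prop:nok5} and avoiding any appeal to the classification of surfaces or to an embedding $\mathcal M\hookrightarrow S^2$; yours has the virtue of being manifestly complete across all configurations of the six vertices rather than a single representative one, and of isolating exactly where the genus hypothesis enters.
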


\begin{proof}
The proof is also done by contradiction. Assume there is $K_{3,3}$. If $l=2$, the result is trivial as shown in Figur~\ref{figure_Kuratowski}(b):
$K_{3,3}$ in $\mathbb R^2$ is always with line-segment cross, and that is contradict to our definition of triangulation. 

If $l>2$, we have the situation shown in  Figur~\ref{high_dim_k33}, in which all vertices are in $\mathbb R^l$. Observe that there is no line-cross (edge intersection) contained in this high-dimensional $K_{3,3}$. 
However, the plane define by the vertices $[F,A,B]$ intersects the plane defined by vertices $[A,B,E]$ at line-segment $[A,B]$, indicating self-intersection of the manifold due to the fact that the simplex decomposition is homeomorphic to $\mathcal M$. Thus the manifold that contains such feature can only be immersed to $\mathbb R^l$ but not embedded. That leads to a contradiction to our basic assumption to 2-manifold being proper embedding. 

\end{proof}

\begin{figure}[H]
\centering
\includegraphics[width =0.3\textwidth, height = 0.3\textwidth]{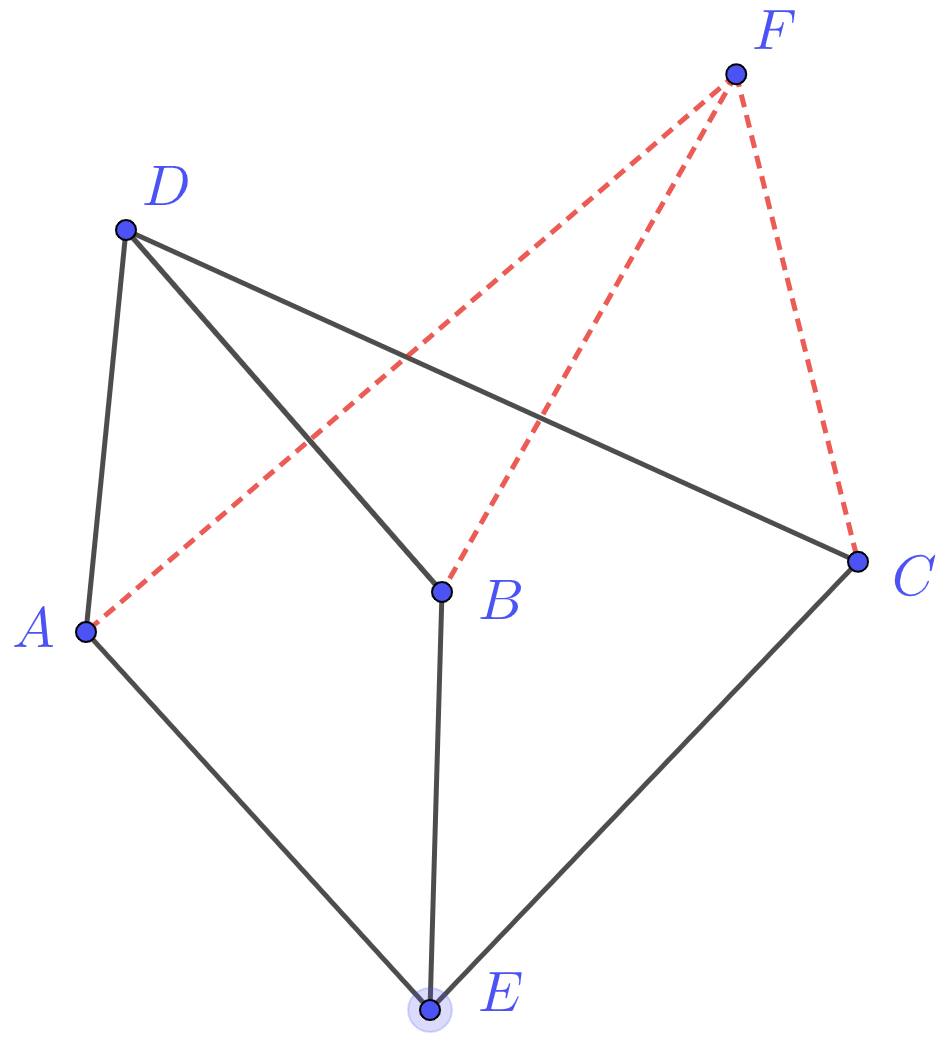}
\caption{Sub-graph of $K_{3,3}$ in $\mathbb R^3$}\label{high_dim_k33}
\end{figure}

Applying propositions \ref{prop:nok5} and \ref{prop:nok33} with theorem \ref{thm:planargraph} leads the claim in Theorem \ref{thm:2dmfdtriangulationplanar}. 

From  Kuratowski theorem, the triangulation $\mathcal T$ on $\mathcal M$ with the above features induces a \textit{planar straight line graph}.  Note that the triangulation we discuss can  be the results from any triangulation outputs such as surface triangulation and tangential complex.

\begin{rem}[Manifold without boundary] \label{manifold_without_boundary}
For some manifolds without boundary, e.g. two sphere $S^2$, it is well known that we can not map the entire manifold on the plane. One can only discretely sample from the underlying manifold and hence leave with many ``oles'' in the manifold. In other words, what we observed is not the entire sphere but a measure-less subset of it. Thus, it is reasonable to randomly select a triangle from this triangulation as the boundary of a ``new'' manifold without it, which is almost the same as $S^2$ but with the missing triangle. Figure~\ref{Sphere}below shows a triangle selected whose boundary serves as the boundary of this new manifold for the sampled points. There is no information loss as we know what we have taken away. 
\end{rem}
\begin{figure}[H]
\centering
\includegraphics[width =0.6\textwidth, height = 0.4\textwidth]{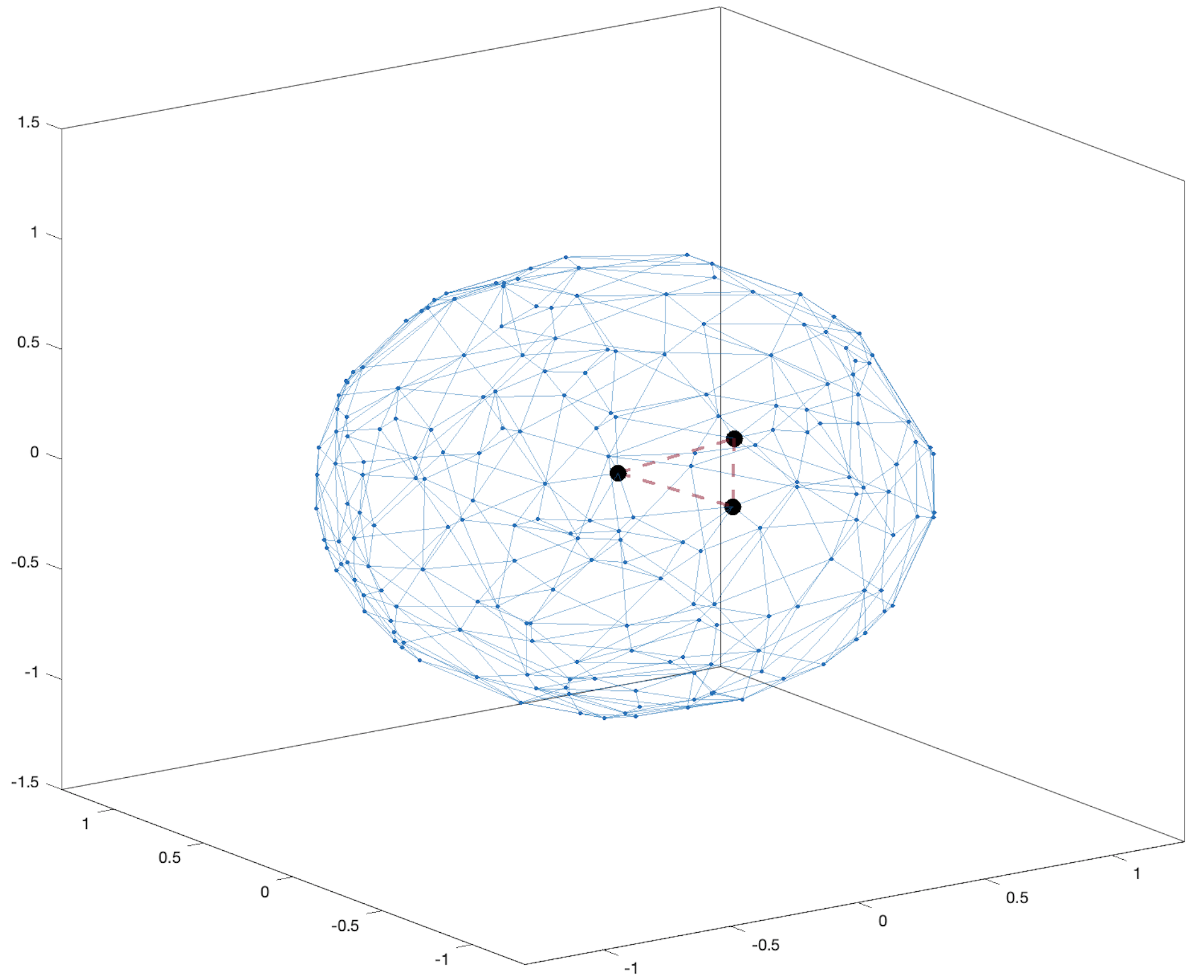}
\caption{Boundary triangle for a triangulation of the subset of 2D-sphere}\label{Sphere}
\end{figure}

\subsection{Piece-wise linear mapping of FPLM}\label{Piece-wise linear mapping of FPLM}

As we mentioned earlier, the connectivity between simplices formed on manifold should remain the same in both image and pre-image of a bijective function over the entire simplex decomposition. We let the image set of the chart map as $\mathbf Z =\{\mathbf z_1, \mathbf z_2,.. \mathbf z_N \} \in \mathbb R^d$. The exact position of every point in $ \mathbf Z$ is unknown; however, from the bijectivity of the chart map, we know that $ \mathbf Z$  perfectly preserves the connectivity between simplices. This means if we repeat the graph in $\mathbf Z$ using the adjacency information from the simplex decomposition ($\mathcal S$) on $\mathcal M$, the integrity, connectivity, and neighboring relations between simplices will remain unchanged. We write the simplex decomposition in $\mathbb R^d$ as $\mathcal S'$ although it is the same as $\mathcal S$. Then for the points in both  $\mathcal S$ and $\mathcal S'$, they are linearly related, e.g. the weights on the edges, which is apparent in equation (\ref{barycenter_mapping}) equal to $\mathbf{\frac{A_{ij}}{D_{ii}}}$. 

We will use $N^{\mathcal S'}_{\mathbf z_i}$ to denote the set of neighbors of $\mathbf z_i$ in  $\mathcal S'$. From equation \eqref{barycenter_mapping}, the solution of FPLM is obtained by solving a system of linear equations. 
From equation \eqref{barycenter_mapping} we know that each interior vertex $\mathbf y_i^*$ is a convex combination of its neighbors, well aligned with the convex combination function defined as follows:



\begin{defn}[Convex Combination function] 
For every interior vertex $\mathbf{z}_i$ of a simplex decomposition {$\mathcal S'$} in $\mathbb R^d$ and $\lambda_{ij}\ge0$, for $N^{\mathcal S'}_{\mathbf z_i}$, if a piece-wise linear function $f: D_{\mathcal S'} \rightarrow \mathbb R$ satisfies: 
\begin{equation}
    \sum_{\mathbf z_j \in N^{\mathcal S'}_{\mathbf z_i}} \lambda_{ij} =1 \label{summation of weights}
\end{equation}
and

\begin{equation}
    f(\mathbf z_i) = \sum_{\mathbf z_{j} \in N^{\mathcal S'}_{\mathbf z_i}}\lambda_{ij}f(\mathbf z_{j})
\end{equation}
Then we call $f$ a convex combination function
\end{defn}

Similarly, we will have \textit{piece-wise linear mapping } $\phi : D_{\mathcal S'} \rightarrow \mathbb R^d$ to be any mapping that $\phi = (f_1,\ldots,f_d)$ in which $f_i $'s are the  piece-wise linear function act on each coordinate component of a given vertex $\mathbf z_i$. We call $\phi$ a \textit{convex combination mapping}  given a set of fixed non-negative weights $\lambda_{ij}$ for the neighbours $N^{\mathcal S'}_{\mathbf z_i}$ of each interior vertices $\mathbf z_i \in \mathbf Z$.
We have:
\begin{equation} \label{convex_combination_final}
     \mathbf{y}^*_i=\phi(\mathbf z_i) = \sum_{\mathbf z_{j} \in N^{\mathcal T'}_{\mathbf z_i}}\lambda_{ij}\phi(\mathbf z_{j})
\end{equation}

The convex combination mapping linearly adjusts the coordinates of each interior vertex in $\mathcal S'$ so that for each vertex, the mapping result  $\phi(\mathbf z)$ lies in the convex hull formed by its neighbors. It is clear that $\mathbf{Y^*}$, the optimizer of FPLM, satisfies equation \eqref{convex_combination_final}. For the rest of the paper, we write $\phi_1$ for the convex combination mapping for the first round of FPLM, similarly, $\phi_2$ for the second round of FPLM.

\begin{rem}\label{Function_of_1stround_FPLM}
Together with the chart map $\psi$, we now summarize the whole process of Algorithm \ref{2sFPLM_algorithm}. 
If $\mathcal M$ is a manifold without boundary as required, then the whole process of FPLM (one round) will be: $ \phi_1 \circ \psi(\mathbf{X})$; Otherwise, the two rounds of FPLM is: $ \phi_2 \circ \phi_1 \circ \psi(\mathbf{X})$.

\end{rem}

\subsection{one-to-one mapping induced from FPLM on triangulation} \label{one_to_one_analysis}

\begin{prop}
\label{inside_fp_prop}
FPLM maps all non-fixed points inside the convex hull formed by the fixed points ($\mathbf{P}(\mathbf{C})$).
\end{prop}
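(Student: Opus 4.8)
The plan is to establish the claim by a maximality/extremality argument based on the convex combination identity \eqref{barycenter_mapping}, together with the supporting hyperplane theorem. The statement asserts that if every non-fixed point $\mathbf y_i^*$ satisfies $\mathbf y_i^* = \sum_j \lambda_{ij}\mathbf y_j^*$ with $\lambda_{ij}\ge 0$ and $\sum_j \lambda_{ij}=1$, then all the $\mathbf y_i^*$ lie inside $\mathbf P(\mathbf C)$, the convex hull of the fixed points. Suppose not: let $K = \mathrm{conv}(\mathbf P(\mathbf C))$ and assume some non-fixed point lies outside $K$. First I would reduce to the boundary case: among all points (fixed and non-fixed) consider the convex hull $H$ of the entire point set $\{\mathbf y_1^*,\dots,\mathbf y_n^*\}$; since by assumption $H \supsetneq K$, there is at least one non-fixed vertex of $H$, i.e.\ a non-fixed point $\mathbf y_{i_0}^*$ that is an extreme point of $H$.

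The core step is then a contradiction at this extreme point. Since $\mathbf y_{i_0}^*$ is an extreme point of $H$ and not a fixed point, by the supporting hyperplane theorem there is an affine hyperplane $\{\,\mathbf x : \langle \mathbf a, \mathbf x\rangle = b\,\}$ with $\langle \mathbf a, \mathbf y_{i_0}^*\rangle = b$ and $\langle \mathbf a, \mathbf y_j^*\rangle \le b$ for all $j$, and moreover, because $\mathbf y_{i_0}^*$ is a genuine extreme point (not merely on a face), we can choose $\mathbf a$ so that the inequality is strict for every $\mathbf y_j^*$ with $\mathbf y_j^*\ne \mathbf y_{i_0}^*$. But applying $\langle \mathbf a,\cdot\rangle$ to the convex combination identity for $\mathbf y_{i_0}^*$ gives $b = \langle\mathbf a,\mathbf y_{i_0}^*\rangle = \sum_j \lambda_{i_0 j}\langle\mathbf a,\mathbf y_j^*\rangle$. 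The right side is a convex combination of numbers all $\le b$, with at least one neighbor strictly $< b$ (a non-fixed point must have at least one neighbor distinct from itself, since the graph $\mathcal G_{\mathcal S}$ is connected and the number of fixed points is finite, so not every neighbor can coincide with $\mathbf y_{i_0}^*$), and positive weight on that neighbor. Hence the right side is strictly less than $b$, a contradiction. Therefore no non-fixed point lies outside $K$, proving the proposition.

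I would also note the alternative argument the excerpt alludes to: directly from the optimization \eqref{foc_FPLM1}. If some coordinate of some $\mathbf y_i^*$ exceeded $\max_k (\mathbf c_k)_\ell$ (the max $\ell$-th coordinate among fixed points), one could clip all non-fixed points' $\ell$-th coordinates to that maximum; this is a projection onto a convex set and, since the Dirichlet energy $\mathrm{tr}(\mathbf Y^T\mathbf L\mathbf Y) = \tfrac12\sum_{i,j}\mathbf A_{ij}\|\mathbf y_i-\mathbf y_j\|^2$ is the sum over edges of squared distances, such a coordinatewise contraction toward the feasible box does not increase any edge length and hence does not increase the objective, contradicting minimality unless the clip was vacuous. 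Running this over all $2d$ supporting halfspaces of a box around $\mathbf P(\mathbf C)$ shows the minimizer stays in that box; a more careful version with arbitrary supporting halfspaces of $\mathbf P(\mathbf C)$ itself gives containment in the convex hull. This second route avoids invoking extreme points but requires the monotonicity-under-projection observation.

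The main obstacle is making the extremal-point step fully rigorous: one must argue carefully that a non-fixed extreme point of $H$ exists (this uses $H\ne K$, i.e.\ the negation of the claim, to guarantee an extreme vertex outside $K$, which cannot be a fixed point) and that such a vertex has at least one neighbor strictly inside the supporting halfspace. The latter is where connectivity of $\mathcal G_{\mathcal S}$ and the hypothesis $p \ge d+1 \ge 2$ (at least one non-fixed point has a distinct neighbor) enter; if a non-fixed point's \emph{every} neighbor were equal to it in $\mathbb R^d$ the argument would stall, so one must observe that since $\phi$ is induced from a valid $d$-simplex decomposition and the fixed points are in general position, not all of $\mathbf Y^*$ can collapse to a single point. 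Handling this degeneracy cleanly is the only delicate point; the rest is routine.
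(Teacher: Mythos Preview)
Your proof is correct and follows essentially the same approach as the paper: assume a non-fixed point lies outside $\mathbf P(\mathbf C)$, locate an extremal (``out-most'') non-fixed image point, and derive a contradiction via the supporting hyperplane theorem against the convex combination identity \eqref{barycenter_mapping}. Your formulation is in fact tighter than the paper's (you take an extreme vertex of the full hull $H$ directly rather than arguing iteratively), and your second route via coordinate clipping and monotonicity of the Dirichlet energy is exactly the ``direct observation of the minimization'' that the paper mentions but does not spell out.
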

\begin{proof}
From previous discussion, it is clear that FPLM is a convex combination mapping, meaning every non-fixed point must be a convex combination of its neighbors. Assuming on contrary, there is one point outside the convex hull of $\mathbf{P}(\mathbf{C})$, then there must be more points outside too due to \eqref{barycenter_mapping1}. For those outside points, find the one on the edge of the convex hull (this point always exists due to finiteness), then it must have more points surrounding it too. Continue this process until all non-fixed points are exhausted. The out-most one will not have a convex hull formed by its neighbors according to supporting hyperplane theorem \cite{boyd2004convex}. This is against the fact that every non-fixed point has to be convex combination of its neighbors. Therefore the assumption is incorrect. 

\end{proof}

We now restrict to 2-manifolds and prove that the mappings induced by both two rounds of FPLM are one-to-one over triangulation, followed by the work from \cite{floater2003one}, we firstly state the Rad\'o-Kneser-Choquet theorem (RKC):

\begin{thm}[Rad\'o-Kneser-Choquet] \label{RKC}
Suppose $\mathcal T$ is a strongly connected triangulation and that $\phi: D_{\mathcal T} \rightarrow \mathbb R^2$ is a convex combination mapping which maps $\partial D_{\mathcal T}$ homeomorphically into the boundary $\partial \Omega$ of some (closed) convex region $\Omega \subset \mathbb R^2$. Then $\phi$ is one-to-one.
\end{thm}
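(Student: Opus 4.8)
\emph{Proof proposal.} The plan is to follow the classical route to the Rad\'o--Kneser--Choquet theorem (as in Kneser \cite{kneser1926losung}), replacing the complex-analytic maximum principle by the discrete maximum principle for convex combination functions, and then closing with a topological degree count. The first ingredient is the \emph{discrete maximum principle}: because $\mathcal T$ is strongly connected, the weights $\lambda_{ij}$ are positive, and $D_{\mathcal T}$ is connected, every convex combination function $u\colon D_{\mathcal T}\to\mathbb R$ attains its maximum and minimum on $\partial D_{\mathcal T}$ and has no strict local extremum at an interior vertex. Applying this to $x\mapsto\langle\phi(x),\mathbf e\rangle$ for every $\mathbf e\in\mathbb R^2$ gives $\phi(D_{\mathcal T})\subseteq\mathrm{conv}\big(\phi(\partial D_{\mathcal T})\big)=\Omega$ (this is the content of Proposition~\ref{inside_fp_prop}, now for the general convex combination map $\phi$), and the strict part sends every interior vertex into $\mathrm{int}\,\Omega$. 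Likewise, for $w\notin\Omega$ a separating line produces a functional of $\phi$ of constant sign on $D_{\mathcal T}$, so $\phi^{-1}(w)=\emptyset$; and for generic $w\in\mathrm{int}\,\Omega$ the number of triangles whose image covers $w$, counted with the sign of the orientation of $\phi$ on each, equals the degree of $\phi$ at $w$, i.e.\ the winding number of $\phi|_{\partial D_{\mathcal T}}$ about $w$, which is $1$ (up to a global sign) since $\phi|_{\partial D_{\mathcal T}}$ is a homeomorphism onto the simple convex curve $\partial\Omega$.

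After these preliminaries the theorem reduces to one claim: \textbf{$\phi$ does not fold}, meaning no triangle is mapped to a degenerate one and all triangle images carry the same orientation. Granting this, the signed triangle count of $1$ becomes an unsigned count of $1$ for every generic $w\in\mathrm{int}\,\Omega$, so every such $w$ has a unique preimage; a density/continuity argument then promotes this to injectivity of $\phi$ on all of $D_{\mathcal T}$, and injectivity forbids $\phi$ from collapsing any triangle (the image of a triangle's interior would otherwise be a nonempty open subset of $\Omega$ contained in a line). Hence the whole difficulty lies in the no-folding claim.

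To prove no-folding I would again use level sets. For generic $\mathbf e\in\mathbb R^2$ and $c\in\mathbb R$, set $\ell=\{x:\langle x,\mathbf e\rangle=c\}$; the zero set $Z=\phi^{-1}(\ell)$ of the convex combination function $x\mapsto\langle\phi(x),\mathbf e\rangle-c$ contains no closed loop, for such a loop would bound a subdisk of $D_{\mathcal T}$ on whose interior that function keeps a fixed nonzero sign, forcing an interior extremum and contradicting the maximum principle. Thus $Z$ is a union of arcs ending on $\partial D_{\mathcal T}$, and since $\phi$ carries $\partial D_{\mathcal T}$ homeomorphically onto $\partial\Omega$, which the line $\ell$ meets in at most two points, $Z$ is a single arc $\gamma$ whenever $\ell$ enters $\mathrm{int}\,\Omega$. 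A fold of $\phi$ along an interior edge $[r,s]$ would force, for a suitable $\mathbf e$, the restriction of $x\mapsto\langle\phi(x),\mathbf e'\rangle$ (with $\mathbf e'\perp\mathbf e$) to $\gamma$ to have an interior local extremum at the point where $\gamma$ crosses $[r,s]$; analysing the two triangles sharing $[r,s]$ together with the convex combination relation at an endpoint of $[r,s]$ yields a contradiction, and \emph{strong connectedness is used here in an essential way}: an interior edge with both endpoints on $\partial D_{\mathcal T}$ would be a dividing edge, which is excluded, so $[r,s]$ has an interior endpoint at which the maximum principle can be invoked.

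The step I expect to be the genuine obstacle is precisely this last one --- converting the scalar fact ``no interior local extremum of a convex combination function'' into the vector fact ``$\phi$ never reverses orientation.'' It needs a careful local analysis of the pair of triangles adjacent to an interior edge, a globally consistent choice of orientations across $\mathcal T$, and separate treatment of the non-generic lines (through a vertex, or supporting $\partial\Omega$) by perturbation; this is the discrete counterpart of the Jacobian-sign lemma in Kneser's proof, and it is where the hypotheses --- strong connectivity, positivity of the weights, and convexity of the boundary image --- are genuinely consumed.
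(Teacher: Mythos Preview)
The paper does not actually prove this theorem: it is stated as the classical Rad\'o--Kneser--Choquet theorem, attributed to Kneser~\cite{kneser1926losung} and cited in the discrete/piecewise-linear form from Floater~\cite{floater2003one}, and then used as a black box to derive the subsequent results (Theorem~\ref{no_dividing_edge_thm}, Proposition~\ref{lemma2}, Lemma~\ref{lemma3}). So there is no ``paper's own proof'' to compare against; your proposal is supplying what the paper deliberately outsources.

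That said, your outline is essentially Floater's argument in~\cite{floater2003one}: discrete maximum principle $\Rightarrow$ image contained in $\Omega$ with interior vertices in $\mathrm{int}\,\Omega$; a level-set/index argument to rule out degenerate or orientation-reversing triangles; and a winding-number/degree count to conclude global injectivity. Your identification of the hard step --- promoting the scalar maximum principle to the vector ``no folding'' statement via a careful local analysis at an interior edge --- is exactly where Floater's proof does the real work, and your use of strong connectivity (no dividing edge, hence every interior edge has an interior endpoint at which the convex-combination relation applies) is the correct place to spend that hypothesis. One small caution: in Floater's version the level-set argument is organised slightly differently --- he first shows that no triangle degenerates, then that adjacent triangles have the same orientation, propagating a consistent sign across the whole triangulation --- rather than arguing directly about a local extremum of $\langle\phi,\mathbf e'\rangle$ along the arc $\gamma$. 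Your formulation is morally equivalent but would need care to make rigorous, since $\gamma$ is only piecewise linear and the ``extremum'' you describe is at a break point, not an interior vertex of $\mathcal T$; routing the contradiction through an interior \emph{vertex} (as Floater does) is cleaner.
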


By generalizing the RKC theorem, Floater's \cite{floater2003one} work provided a necessary and sufficient one-to-one condition of $\phi$ for any triangulation: 

\begin{thm}[Floater, 2003]
Suppose $\mathcal{T}$  is any triangulation and let :$\phi: D_{\mathcal T} \rightarrow \mathbb R^2$ is a convex combination mapping which maps $\partial D_{\mathcal T}$ homeomorphically into the boundary $\partial \Omega$ of some (closed) convex region $\Omega \subset \mathbb R^2$. Then $\phi$ is one-to-one if and only if no dividing edge $[v, w]$ of $\mathcal{T}$ is mapped by $\phi$ into $\partial \Omega$ \label{no_dividing_edge_thm}.
\end{thm}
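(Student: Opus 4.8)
The plan is to prove the two implications of Theorem~\ref{no_dividing_edge_thm} separately. The ``only if'' part is the easy direction: arguing contrapositively, I will show that a dividing edge mapped into $\partial\Omega$ forces $\phi$ to fold over a whole segment of points. The ``if'' part carries the weight, and I will prove it by induction on the number of dividing edges of $\mathcal T$, cutting $D_{\mathcal T}$ along one dividing edge at each step and bottoming out at the Rad\'o--Kneser--Choquet theorem (Theorem~\ref{RKC}) when none remain.

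\textbf{Necessity.} Suppose $[v,w]$ is a dividing edge with $\phi([v,w])\subseteq\partial\Omega$. Since $\phi$ is a homeomorphism on $\partial D_{\mathcal T}$ we have $\phi(v)\ne\phi(w)$, and as $\phi$ is affine on the edge, $\sigma:=\phi([v,w])=[\phi(v),\phi(w)]$ is a nondegenerate segment lying in the Jordan curve $\partial\Omega$; hence $\sigma$ is one of the two closed sub-arcs of $\partial\Omega$ determined by $\phi(v)$ and $\phi(w)$. These two points also split $\partial D_{\mathcal T}$ into two closed arcs $A_1,A_2$ whose $\phi$-images are exactly those two sub-arcs, so after relabelling $\phi(A_1)=\sigma=\phi([v,w])$. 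Picking $q$ in the relative interior of $\sigma$, it has a preimage $p_1\in A_1\setminus\{v,w\}\subseteq\partial D_{\mathcal T}$ under the homeomorphism $\phi|_{A_1}$, and a preimage $p_2$ in the open edge $(v,w)\subseteq\mathrm{int}\,D_{\mathcal T}$ under the affine injection $\phi|_{[v,w]}$; since $\mathrm{int}\,D_{\mathcal T}$ is disjoint from $\partial D_{\mathcal T}$ we get $p_1\ne p_2$, so $\phi$ is not one-to-one.

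\textbf{Sufficiency.} Assume no dividing edge of $\mathcal T$ maps into $\partial\Omega$. If $\mathcal T$ is strongly connected it has no dividing edges and Theorem~\ref{RKC} gives injectivity (base case). Otherwise pick a dividing edge $[v,w]$. By hypothesis $\phi([v,w])\not\subseteq\partial\Omega$, and convexity of $\Omega$ then forces the open chord $(\phi(v),\phi(w))$ into $\mathrm{int}\,\Omega$ (if an interior point of the chord lay on $\partial\Omega$, a supporting line there would contain the whole chord). Let $L$ be the line through $\phi(v),\phi(w)$; then $L\cap\Omega$ is the chord, $L\cap\partial\Omega=\{\phi(v),\phi(w)\}$, and $L$ splits $\Omega$ into two closed convex caps $\Omega^{+},\Omega^{-}$, each bounded by the chord together with one arc of $\partial\Omega$. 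The edge $[v,w]$ cuts $D_{\mathcal T}$ into two disks $D_1,D_2$ meeting only along $[v,w]$, with sub-triangulations $\mathcal T_1,\mathcal T_2$. Each interior vertex of $\mathcal T_i$ is an interior vertex of $\mathcal T$ and retains the same convex-combination weights, and $\phi|_{\partial D_i}$ maps $\partial D_i$ homeomorphically onto $\partial\Omega^{+}$ (resp. $\partial\Omega^{-}$) --- the arc part inherited from $\phi|_{\partial D_{\mathcal T}}$, the chord being the affine image of $[v,w]$ --- so $\phi|_{\mathcal T_i}$ again satisfies the hypotheses of the theorem with $\Omega$ replaced by $\Omega^{+}$ (resp. $\Omega^{-}$), and $\mathcal T_i$ has strictly fewer dividing edges than $\mathcal T$ (any dividing edge of $\mathcal T_i$ is an interior edge of $\mathcal T$ with both endpoints among boundary vertices of $\mathcal T$, hence a dividing edge of $\mathcal T$, while $[v,w]$ itself lies on $\partial D_i$ and is no longer one). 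Before invoking the inductive hypothesis one must still check that no dividing edge $e$ of $\mathcal T_i$ maps into $\partial\Omega^{+}$: being a dividing edge of $\mathcal T$, $\phi(e)\not\subseteq\partial\Omega$; if nonetheless $\phi(e)\subseteq\partial\Omega^{+}$ the segment $\phi(e)$ would meet the open chord, and since $\partial\Omega$ is closed $\phi(e)$ must then be a sub-segment of the chord, whose endpoints are $\phi$-images of boundary vertices of $\mathcal T$ lying in $[\phi(v),\phi(w)]\cap\partial\Omega=\{\phi(v),\phi(w)\}$, forcing $e=[v,w]$ by injectivity of $\phi$ on $\partial D_{\mathcal T}$ --- impossible since $e$ is interior to $\mathcal T_i$. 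With this, induction makes $\phi|_{\mathcal T_1}$ and $\phi|_{\mathcal T_2}$ one-to-one, hence homeomorphisms onto $\Omega^{+}$ and $\Omega^{-}$; as $D_1,D_2$ overlap only along $[v,w]$ (where the two restrictions agree, both affine and fixing $\phi(v),\phi(w)$) and the remaining images $\Omega^{+}$ and $\Omega^{-}$ with the chord removed are disjoint, $\phi$ on $D_{\mathcal T}=D_1\cup D_2$ is one-to-one.

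\textbf{Expected main obstacle.} The steps that look routine --- the open chord landing in $\mathrm{int}\,\Omega$, $\phi|_{\mathcal T_i}$ being a convex-combination mapping with the right boundary behaviour, and the final gluing --- are elementary once set up. The delicate part is the sub-lemma inside the inductive step, ruling out a dividing edge of the smaller triangulation running along the \emph{new} chord: this is exactly where the hypothesis must be combined with convexity of $\Omega$ and injectivity of $\phi$ on $\partial D_{\mathcal T}$, and the argument has to cope with $\partial\Omega$ possibly containing straight segments with $\phi$-images of boundary vertices in their interiors. Nailing that topological point down is where I expect the most care to be needed.
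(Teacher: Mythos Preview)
The paper does not prove this theorem at all: it is stated as a citation of Floater's 2003 result and then used as a black box for the subsequent propositions and lemmas. So there is no ``paper's own proof'' to compare against here.

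That said, your proof is correct and is essentially Floater's original argument. The necessity direction is straightforward, and your sufficiency argument --- induction on the number of dividing edges, cutting $D_{\mathcal T}$ along one such edge into two convex-capped sub-triangulations and bottoming out at Theorem~\ref{RKC} --- is exactly the standard route. The sub-lemma you flag as the expected obstacle (ruling out a dividing edge of $\mathcal T_i$ landing in the new chord) is handled correctly: the key facts you use, that the open chord lies in $\mathrm{int}\,\Omega$ and that a straight segment inside the Jordan curve $\partial\Omega^{+}$ meeting the open chord must lie on the line $L$, are both sound, and the conclusion $e=[v,w]$ then follows from injectivity of $\phi$ on boundary vertices as you say. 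The gluing step is also fine since $\Omega^{+}\cap\Omega^{-}$ is precisely the chord $\phi([v,w])$.
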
 

Followed by the above claims, we now explore the features of $\phi_1$ which is the induced mapping from the first round of FPLM. 

\begin{prop}\label{prop:innerconvex}
If $\mathcal T$ is strongly connected, then $\mathbf{P}(\mathbf{C_2})$ must be a convex polygon formed by the boundary vertices of $\mathcal T$.
\end{prop}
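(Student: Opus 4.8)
The plan is to show that after the first round of FPLM, the image of the boundary polygon $\partial D_{\mathcal T}$ is a convex polygon whose vertices are exactly the images of the boundary vertices of $\mathcal T$. Recall the setup: when $\mathcal T$ is strongly connected, the first round of FPLM fixes the images of the $d+1 = 3$ vertices of a single randomly selected triangle (call it $T_0$), mapping them to the vertices of a nondegenerate triangle $\mathbf P(\mathbf C_1)$ in $\mathbb R^2$, and solves \eqref{foc_FPLM1}. By Proposition \ref{inside_fp_prop}, every non-fixed vertex — in particular every vertex not in $T_0$ — is mapped strictly inside the triangle $\mathbf P(\mathbf C_1)$. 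By the Lemma following Proposition \ref{prop:innerconvex} (established via RKC/Floater, Theorems \ref{RKC}–\ref{no_dividing_edge_thm}), $\phi_1$ is one-to-one over $\mathcal T$, so the image is a genuine straight-line triangulation with no overlaps.

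The key steps, in order, are as follows. First I would argue that $T_0$ may be taken to be a boundary triangle: since $\mathbf X \subset \mathcal M$ and any triangulation of a finite sample has a nonempty boundary, and since the starting triangle is chosen by us, we are free to pick $T_0$ with at least one edge on $\partial D_{\mathcal T}$; in fact the construction treats the three fixed vertices as the new boundary, so $\mathbf P(\mathbf C_1)$ itself plays the role of $\partial \Omega$ and $\mathbf C_2$ collects precisely the vertices lying on this outer triangle together with any vertices the algorithm detects on its boundary. Second, I would show that no vertex of $\mathcal T$ other than the three fixed ones can land on $\partial \mathbf P(\mathbf C_1)$: if some non-fixed vertex $\mathbf z_i$ mapped to a point on an edge of the triangle, then by the convex-combination identity \eqref{barycenter_mapping1} all of $\mathbf z_i$'s neighbors would have to lie on the same side of the line through that edge, hence on the boundary segment or inside; pushing this supporting-hyperplane argument exactly as in the proof of Proposition \ref{inside_fp_prop} forces $\mathbf z_i$ to be a fixed point, a contradiction. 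Third, from the one-to-one-ness of $\phi_1$ and the fact that $\partial D_{\mathcal T}$ is a simple closed polygonal curve (by the second requirement of simplex decomposition), its image under $\phi_1$ is a simple closed polygonal curve inscribed in the triangle $\mathbf P(\mathbf C_1)$; combining this with step two (only the three corners touch $\partial \mathbf P(\mathbf C_1)$), the image of $\partial D_{\mathcal T}$ bounds a region and in fact coincides with $\partial \mathbf P(\mathbf C_1)$ as a set, because any boundary edge mapped into the interior would, together with injectivity and the no-overlap property, create a pocket of the triangle not covered by any image triangle — contradicting that $\phi_1$ is onto $\mathbf P(\mathbf C_1)$ (which follows from injectivity plus Proposition \ref{inside_fp_prop} plus Euler-characteristic/area counting). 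Hence $\mathbf P(\mathbf C_2) = \mathbf P(\mathbf C_1)$ is a convex polygon (a triangle) whose vertices are the images of boundary vertices of $\mathcal T$.

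I expect the main obstacle to be step three: rigorously ruling out the possibility that a boundary edge of $\mathcal T$ is mapped into the interior of $\mathbf P(\mathbf C_1)$ while still leaving the union of image triangles equal to the full triangle. The cleanest route is probably a global area/degree argument — since $\phi_1$ is a piecewise-linear one-to-one map, the total signed area of the image triangulation equals the area of $\mathbf P(\mathbf C_1)$, which forces the image of $\partial D_{\mathcal T}$ to be the full boundary $\partial \mathbf P(\mathbf C_1)$ rather than a proper subset — but one must be careful that "one-to-one" here is one-to-one into the codomain and then verify surjectivity onto $\mathbf P(\mathbf C_1)$, for which invariance of domain (the image of the interior is open, the image of the boundary is the triangle boundary) gives the cleanest argument. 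Once surjectivity onto $\mathbf P(\mathbf C_1)$ is in hand, the identification $\partial D_{\mathcal T} \mapsto \partial \mathbf P(\mathbf C_1)$ and therefore the convexity of $\mathbf P(\mathbf C_2)$ is immediate.
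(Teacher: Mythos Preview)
Your argument rests on a misreading of what $\mathbf C_2$ is, and ends by proving the wrong statement. In Algorithm~\ref{2sFPLM_algorithm}, $\mathbf C_2$ collects the images under $\phi_1$ of the \emph{boundary vertices of} $\mathcal T$ (the vertices of $\partial D_{\mathcal T}$). In the first round the only fixed vertices are those of the selected triangle $T_0$; every boundary vertex of $\mathcal T$ not in $T_0$ is non-fixed and, by Proposition~\ref{inside_fp_prop}, lands \emph{strictly inside} $\mathbf P(\mathbf C_1)$. Hence $\mathbf P(\mathbf C_2)$ is a many-sided polygon sitting in the interior of the fixed triangle (cf.\ Figure~\ref{convexity_of_polygon}), not $\mathbf P(\mathbf C_1)$ itself. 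Your steps 2 and 3 in fact contradict each other: step 2 places every non-fixed vertex strictly inside $\mathbf P(\mathbf C_1)$, while step 3 asks the (non-fixed) boundary vertices to lie on $\partial\mathbf P(\mathbf C_1)$.

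The surjectivity/invariance-of-domain route in step 3 also cannot work here: since $T_0$ itself maps linearly onto the \emph{entire} triangle $\mathbf P(\mathbf C_1)$, global injectivity of $\phi_1$ on $D_{\mathcal T}$ would force every other triangle to degenerate, which is incompatible with \eqref{barycenter_mapping1}. So the hypothesis you import (``$\phi_1$ is one-to-one on $\mathcal T$'') does not hold in the manifold-with-boundary setting in the form you need; the first round folds $D_{\mathcal T}\setminus T_0$ back over $T_0$, exactly producing an interior boundary polygon.

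The paper's proof is a short local contradiction and does not invoke global injectivity at all. Assume $\mathbf P(\mathbf C_2)$ has a reflex interior angle at the image of some boundary vertex $v$. Because $\phi_1(v)$ must be a convex combination of the images of its neighbours (equation~\eqref{convex_combination_final}), $v$ must have a neighbour whose image lies on the reflex side of the two adjacent boundary edges; that neighbour is forced to be another boundary vertex, so the edge $[v,w]$ is a dividing edge. This contradicts the assumption that $\mathcal T$ is strongly connected, and therefore $\mathbf P(\mathbf C_2)$ is convex.
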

\begin{proof}
We first identify that all points in $\mathbf C_2$ are boundary points of the 2-manifold $\mathcal M$. This is due to Proposition \ref{inside_fp_prop} and the distance minimisation in FPLM. Assume that $\mathbf{P}(\mathbf{C_2})$ is concave.  By \eqref{convex_combination_final} there must be a dividing edge that connects the boundary vertex with the reflex interior angle to another boundary vertex. However, by the definition of strongly connected triangulation, there is no dividing edge between boundary vertices. Hence, $\mathbf{P}(\mathbf{C_2})$ must be convex.  
\end{proof}

\begin{figure}[H]
     \centering
     \subfloat[]{\includegraphics[width = 0.3\textwidth, height = 0.3\textwidth]{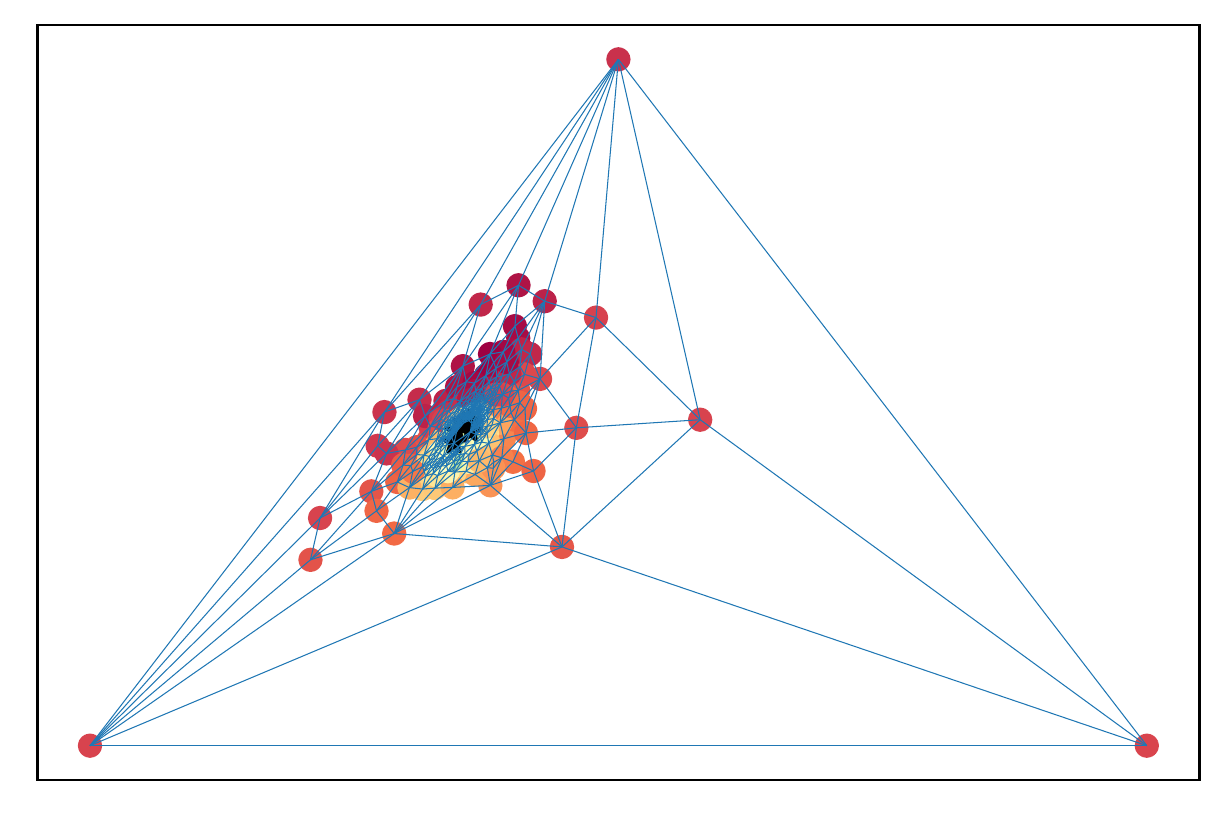}} \;\;\;
     \subfloat[]{\includegraphics[width =0.3\textwidth, height = 0.3\textwidth]{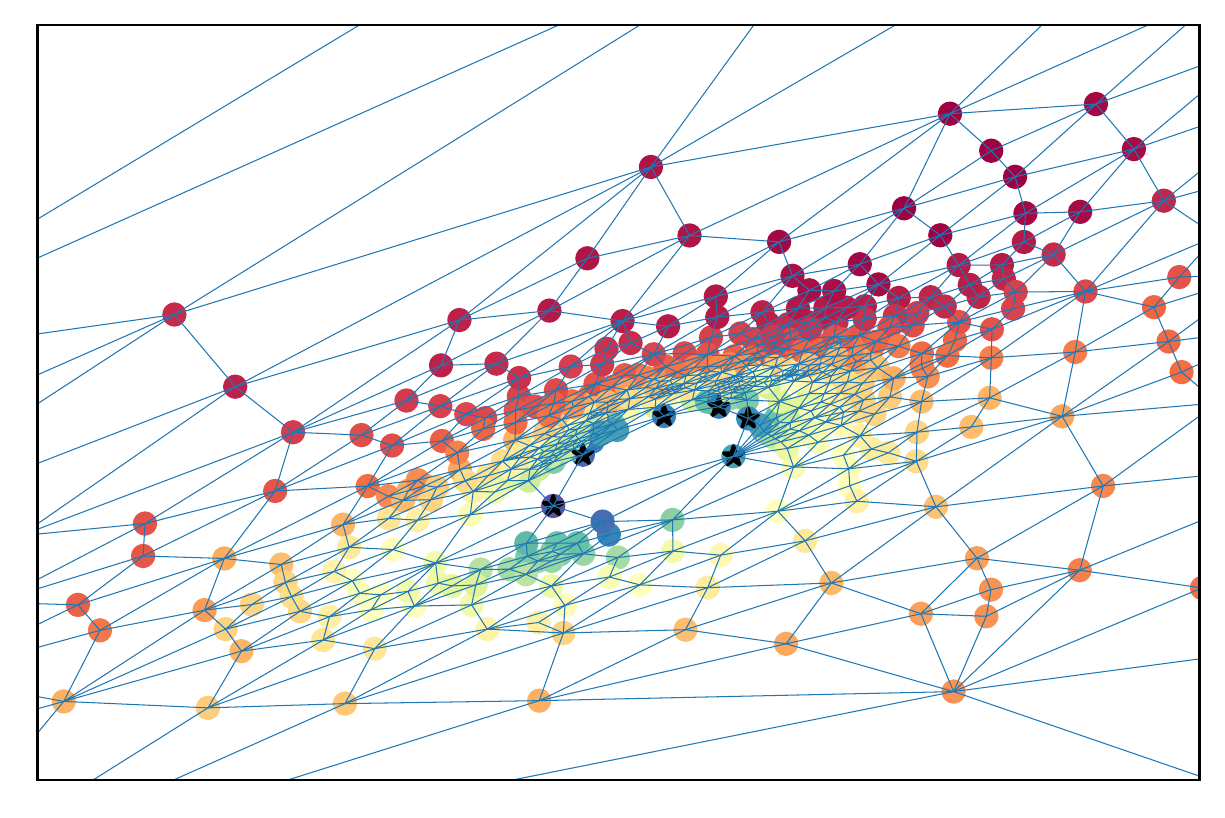}}\;\;\; 
\caption{The convex polygon (Vertices shown as black stars) formed by the boundary points from strongly connected triangulation. } \label{convexity_of_polygon}
\end{figure}

\begin{prop}
If $\mathcal T$ is strongly connected. The mapping $\phi_1$ induced from the  first round of FPLM is one-to-one.  \label{lemma2}
\end{prop}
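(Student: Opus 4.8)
The plan is to reduce the claim to a Tutte / Rad\'o--Kneser--Choquet embedding statement by letting the seed $d$-simplex $T^{\ast}$ (whose images are collected in $\mathbf C_1$) play the role of the outer boundary of the triangulation, and then invoking Theorem~\ref{RKC} (or Floater's Theorem~\ref{no_dividing_edge_thm}).

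First I would assemble the algebraic facts already in hand. By the algebraic solution of FPLM, equation~\eqref{barycenter_mapping1}, $\phi_1$ is a convex combination mapping over $\mathcal T$: every non-fixed vertex is the convex combination of its neighbours' images with weights $\lambda_{ij}=\mathbf A_{ij}/\mathbf D_{ii}$ summing to one, the only exceptions being the three vertices of $T^{\ast}$, pinned to $\mathbf C_1$. Since a $2$-simplex in $\mathbb R^{l}$ is intrinsically two-dimensional and the reduction producing $\mathbf C_1$ is lossless, $\Delta:=\mathbf P(\mathbf C_1)$ is a non-degenerate triangle, $\phi_1$ restricts to an affine homeomorphism $T^{\ast}\to\Delta$, and hence carries $\partial T^{\ast}$ homeomorphically onto the boundary of the convex region $\Delta$. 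Proposition~\ref{inside_fp_prop} gives $\phi_1(D_{\mathcal T})\subseteq\Delta$, and a short refinement of its argument shows $\phi_1^{-1}(\partial\Delta)=\partial T^{\ast}$, so no non-seed vertex is dragged onto $\partial\Delta$.

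Next I would carry out the combinatorial reduction. By Theorem~\ref{thm:2dmfdtriangulationplanar} the induced graph $\mathcal G_{\mathcal S}$ is planar; using that it comes from a valid, strongly connected triangulation of an orientable surface with boundary, I would argue $\mathcal G_{\mathcal S}$ is $3$-connected (or at least $2$-connected with the seed triangle a ``good'' face), so that $T^{\ast}$ may be taken as the unbounded face of a plane embedding. In that embedding $\partial T^{\ast}$ is the outer boundary, every remaining vertex --- in particular every vertex of the manifold boundary $\partial D_{\mathcal S}$ --- is interior and satisfies the convex combination relation, and the re-rooted complex is still strongly connected (a dividing edge would join two vertices of $T^{\ast}$ without being an edge of $T^{\ast}$, impossible in a triangulation). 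Thus $\phi_1$ is precisely the convex combination / Tutte embedding of this re-rooted complex with outer boundary pinned to $\Delta$, and Theorem~\ref{RKC} --- equivalently Theorem~\ref{no_dividing_edge_thm}, since there is no dividing edge at all and hence none is mapped into $\partial\Delta$ --- gives that $\phi_1$ is one-to-one; restricting to its image makes it a bijection, as needed for Proposition~\ref{prop:innerconvex} and the second round of Algorithm~\ref{2sFPLM_algorithm}.

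The step I expect to be the main obstacle is this re-rooting. One must check that ``$\mathcal T$ with $T^{\ast}$ as the unbounded face'' is a legitimate object for Theorem~\ref{RKC} / Tutte's theorem \cite{tutte1963draw}: that $\mathcal G_{\mathcal S}$ is $3$-connected enough for every face to be a simple cycle; that the manifold-boundary cycle $\partial D_{\mathcal S}$, once interior, still behaves as a cycle of interior vertices (it becomes an empty polygonal face of the re-rooted drawing, so one is really invoking Tutte's theorem for $3$-connected planar graphs rather than the pure-triangulation form of RKC); and that the FPLM solution genuinely coincides with that embedding --- the point at which Proposition~\ref{inside_fp_prop} and $\phi_1^{-1}(\partial\Delta)=\partial T^{\ast}$ are essential. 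An alternative that avoids re-rooting is to prove a surface-with-boundary convex-combination embedding theorem directly for the annulus $D_{\mathcal T}\setminus\mathrm{int}(T^{\ast})$, one boundary pinned to $\partial\Delta$ and the other free; this needs a Tutte-type theorem for meshes with boundary rather than the planar statement quoted here, and I would regard it as the more delicate route.
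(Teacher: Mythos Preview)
Your approach is the paper's: invoke Floater's Theorem~\ref{no_dividing_edge_thm} (equivalently RKC, Theorem~\ref{RKC}), using that the seed triangle $\mathbf P(\mathbf C_1)$ is convex and that strong connectivity rules out dividing edges. The paper's proof is a single sentence and leaves the re-rooting --- letting $\partial T^\ast$ play the role of $\partial D_{\mathcal T}$ in Floater's hypotheses --- entirely implicit; the obstacles you flag (the non-triangular interior face coming from the original $\partial D_{\mathcal T}$, the need for Tutte's $3$-connected theorem rather than the pure-triangulation form of RKC, the check that $\phi_1^{-1}(\partial\Delta)=\partial T^\ast$) are not addressed there, so your proposal is strictly more careful than the paper's own argument rather than missing a simpler route.
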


\begin{proof}
The result is directly from Theorem \ref{no_dividing_edge_thm} as a triangle is a convex hull in $\mathbb R^2$ and without any dividing edge across the triangle.
\end{proof}


We now explore the property of the second round of FPLM. Recall that the second round of FPLM takes the simple polygon formed by joining the boundary vertices inside the first round FPLM result. 

\begin{lem}
Given a strongly connected $\mathcal T'$, the mapping of second round of FPLM bounded by the convex polygon $\mathbf {P(C_2)}$ is one-to-one. \label{lemma3}
\end{lem}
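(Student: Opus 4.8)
The plan is to reduce this lemma to Floater's theorem (Theorem \ref{no_dividing_edge_thm}) applied to the second round of FPLM, with the convex region taken to be $\Omega = \mathbf{P}(\mathbf{C}_2)$. By Proposition \ref{prop:innerconvex}, since $\mathcal{T}'$ is strongly connected, $\mathbf{P}(\mathbf{C}_2)$ is a genuine convex polygon whose vertices are exactly the boundary vertices of $\mathcal{T}'$ (no three of them collinear, because a reflex or straight configuration would force a dividing edge). The second round of FPLM fixes these boundary vertices at the corners of $\mathbf{P}(\mathbf{C}_2)$ and solves \eqref{foc_FPLM1} for the interior vertices, so by the algebraic solution of FPLM (equation \eqref{barycenter_mapping}) the induced mapping $\phi_2$ is a convex combination mapping on $\mathcal{T}'$. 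Moreover $\phi_2$ sends $\partial D_{\mathcal{T}'}$ homeomorphically onto $\partial \mathbf{P}(\mathbf{C}_2)$: the boundary vertices go to the polygon corners in cyclic order and each boundary edge maps affinely onto the corresponding polygon side, which is injective since consecutive corners are distinct and the sides of a convex polygon meet only at shared endpoints.

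With these hypotheses verified, Theorem \ref{no_dividing_edge_thm} tells us that $\phi_2$ is one-to-one if and only if no dividing edge of $\mathcal{T}'$ is mapped into $\partial \mathbf{P}(\mathbf{C}_2)$. This is where the strong connectivity of $\mathcal{T}'$ is used a second time: since $\mathcal{T}'$ is strongly connected it has \emph{no dividing edges at all} (Definition of strongly connected triangulation), so the condition is satisfied vacuously. Hence $\phi_2$ is one-to-one over all of $D_{\mathcal{T}'}$, and restricting its codomain to $\mathbf{P}(\mathbf{C}_2)$ makes it bijective.

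The only real work is confirming that the image of $\partial D_{\mathcal{T}'}$ under $\phi_2$ is exactly $\partial \mathbf{P}(\mathbf{C}_2)$ traversed once, i.e. that the boundary cycle of $\mathcal{T}'$ and the corner sequence of $\mathbf{P}(\mathbf{C}_2)$ agree as cyclic orders — this is guaranteed because $\mathbf{C}_2$ was constructed by joining the boundary vertices \emph{in their boundary order}, and by Proposition \ref{inside_fp_prop} together with the distance-minimising nature of FPLM the first round already places them as the vertices of a convex polygon in that order (Figure \ref{convexity_of_polygon}). I expect the main obstacle to be purely expository: making precise that "the boundary detected inside the first round result" is a simple closed polygonal curve with no repeated vertices and no spurious collinearities, so that it legitimately bounds a convex region to which Floater's theorem applies. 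Once that is pinned down, the lemma follows immediately from the already-established Proposition \ref{prop:innerconvex} and Theorem \ref{no_dividing_edge_thm}.
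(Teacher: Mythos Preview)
Your proposal is correct and follows essentially the same route as the paper: invoke Proposition~\ref{prop:innerconvex} to get convexity of $\mathbf{P}(\mathbf{C}_2)$, observe that strong connectivity means there are no dividing edges, and apply Floater's Theorem~\ref{no_dividing_edge_thm}. The paper's own proof is a two-line version of exactly this; your additional care in verifying that $\phi_2$ is a convex combination mapping and that $\partial D_{\mathcal{T}'}$ is sent homeomorphically onto $\partial\mathbf{P}(\mathbf{C}_2)$ in the correct cyclic order simply makes explicit what the paper leaves implicit.
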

\begin{proof}
Given $\mathbf {P(C_2)}$ is a convex polygon without dividing edge due to Proposition \ref{prop:innerconvex}, the one-to-one result is obvious combinging Theorem \ref{no_dividing_edge_thm}. 
\end{proof}


\begin{rem}[Special case of the first round of FPLM]
When the triangulation is not strongly connected, the first round of FPLM is no longer injective because the dividing edge forms a closed boundary of a subset of the manifold causing the part of the manifold that does not contain the selected triangle to collapse into the dividing edge. Therefore, we have to directly detect the boundary from $\mathcal{G}_{\mathcal{S}}$ and generate a $p$-side convex polygon in $\mathbb R^2$ so that all dividing edges remain inside the boundary and none of the boundary vertices are colinear. Theorem \ref{no_dividing_edge_thm} leads to Theorem \ref{thm:withdividingedge} directly and justifies our choice. 
\end{rem}

From the lemma above, we know that given a strongly connected triangulation $\mathcal T$ on the manifold, $\mathcal T$ can be mapped into a closed convex subset in $\mathbb R^2$ by using either one or two rounds of FPLM. 

\begin{rem}[Manifold with genus]
For surface manifolds, the genus of them is an integer ($g$) representing the maximum number of cuttings along non-intersecting closed simple curves without rendering the resultant manifold disconnected \cite{munkres2018elements}. It will interfere with the boundary detection on the manifold compromising boundary identifiability.  
Hence FPLM is not functional to the surface manifold with non-zero genus such as torus. 
\end{rem}

\begin{rem}[Bijectivity and Homeomorphism]
The sample $\mathbf X =\{\mathbf x_1, \mathbf x_2,..\mathbf x_N\}$ we observed is a subset of a manifold. Since the FPLM maps the triangulation $\mathcal T$ on the 2-manifold to a convex closed area in $\mathbb R^2$, and every $2,1,0$-simplex in $\mathcal T$ is mapped to exactly one specific $2,1,0$-simplex in $\mathbb R^2$, together with the chart map, the mapping induced from FPLM (i.e. $\phi_1 \circ \psi$ for one round, $\phi_2 \circ \phi_1 \circ \psi$ for two rounds) is at least continuous over $\mathcal T$ and one-to-one. Hence, the mapping generated by Algorithm \ref{2sFPLM_algorithm} process ($\phi \circ \psi)$ restricted to a closed area in $\mathbb R^2$ is bijective. 
\end{rem}

Based on the property of FPLM on 2-manifolds, we now explore the feature of FPLM when the manifold structure is obtained by edge-to-edge tessellation of polygons (triangle is a three-sided polygon). 


\begin{defn}[Edge-to-Edge Tessellation of polygons on 2-manifolds]
Given 2-manifold, if the manifold can be decomposed with a list of polygons with the number of side  ($n$) larger than equal to 3, and the intersection between each polygon can only either be empty, a common point, or a common edge, we then say this manifold is tessellated by these polygons, and that is an edge-to-edge tessellation on the manifold. 
\end{defn}

Let $\mathcal{TL}$ be the tessellation described above. If we further triangulate $\mathcal{TL}$, for example we add edges which partition each face of $\mathcal{TL}$ into triangles, then we can use convex combination mapping $\phi': D_{\mathcal{TL}} \rightarrow \mathbb R^2$ and the $\phi'$ is linear over each triangle in $D_{\mathcal{TL}}$ and continuous. Clearly, if $\mathcal{TL}$ is strongly connected, based what we discussed earlier, Algorithm \ref{2sFPLM_algorithm} is one-to-one with the requirement that the selected polygon in the first round is convex. If $\mathcal{TL}$ is not strongly connected, again, boundary detection is necessary to form a $p$ side polygon manually.  

We now focus on the property of FPLM in higher dimenional simplex decomposition. Taking 3-simplex decomposition (tetrahedralization) as an example, it has been reported that the convex combination mapping may not be one-to-one over tetrahedral meshes, and a counter-example has been reported in \cite{floater2006convex}. However, the counter-example mentioned on that paper has four points positioned in one face of a tetrahedron, this conflicts our assumption that all points should in general position. Also as FPLM starts from sum of squared distances, which corresponding to a special type of convex mapping, different from the one in \cite{floater2006convex}. Hence, FPLM still works for this counter example. 

In regards to $d$-manifold for $d>2$, the situation is more complicated. The bijectivity of piece-wise linear mapping relates to orientation preserving and some boundary conditions. We restate the key theorem here, which is in \cite{lipman2014bijective}.

\begin{thm}[Sufficient conditions for bijectivity]
Given a $d$ dimensional connected orientable manifold $\mathcal M$ and its $d$-simplex decomposition constructed on a discrete sample, then a piece-wise linear mapping $\phi$ from $\mathcal M$ to $\mathbb R^d$ is bijective if it satisfies the following conditions : 
\begin{enumerate}
    \item The mapping $\phi$ is orientation preserving over entire decomposition.
    \item The boundary of simplex decomposition is mapped to a polytope in $\mathbb R^d$ bijectively.
\end{enumerate}
\end{thm}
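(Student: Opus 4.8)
The plan is to run a topological-degree argument in the spirit of \cite{lipman2014bijective}. Write $\Omega\subset\mathbb R^d$ for the $d$-polytope whose boundary $\partial\Omega$ is the image of $\partial D_{\mathcal S}$ under $\phi$ (condition~2). As a continuous bijection between compact Hausdorff spaces, $\phi|_{\partial D_{\mathcal S}}\colon \partial D_{\mathcal S}\to\partial\Omega$ is a homeomorphism, and condition~1 (each $d$-simplex is mapped with positive orientation) in particular makes it orientation preserving for the induced boundary orientations, so it has degree $+1$ as a map of oriented closed $(d-1)$-manifolds. First I would introduce the Brouwer degree $\deg(\phi,D_{\mathcal S},p)$, defined for every $p\notin\phi(\partial D_{\mathcal S})=\partial\Omega$; since it depends only on $\phi|_{\partial D_{\mathcal S}}$ and $\partial\Omega$ separates $\mathbb R^d$, it equals $\pm1$ — hence $+1$ by orientation — for every $p$ in the interior $\mathrm{int}\,\Omega$, and $0$ for every $p$ outside $\bar\Omega$ (push $p$ to infinity).

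Next I would use condition~1 on the $d$-simplices directly. On the interior of each $d$-simplex $\phi$ is an affine isomorphism of positive Jacobian, so $\phi$ has no interior critical points, its critical set lies in the $(d-1)$-skeleton, and its set of critical values is a finite polyhedral set of dimension at most $d-1$, hence the regular values form an open dense set. For a regular value $p$ the degree equals the signed number of preimages, and since every local sign is $+1$ it equals the \emph{actual} number of preimages. Combined with the degree computation, a regular value in $\mathrm{int}\,\Omega$ has exactly one preimage and a regular value outside $\bar\Omega$ has none. Since $\phi(D_{\mathcal S})$ is a finite union of closed image simplices and regular values are open and dense, the first fact (plus taking closures, using compactness) gives $\bar\Omega\subseteq\phi(D_{\mathcal S})$, and the second fact gives that no image simplex can stick out of $\bar\Omega$, so $\phi(D_{\mathcal S})=\bar\Omega$.

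The geometric core is then to show that $\phi$ restricted to $\mathrm{int}\,D_{\mathcal S}$ is a local homeomorphism, and this is where both hypotheses are used together. Along any interior $(d-1)$-simplex $\tau$ the two incident $d$-simplices induce opposite orientations on $\tau$ in the domain, so — both being orientation preserving — their images lie on opposite sides of the hyperplane through $\phi(\tau)$: there are no folds along interior codimension-one faces. Around an interior vertex (and, inductively, along any interior face) this no-fold property forces the incident image simplices to wrap monotonically around the image point, so the link of that vertex is mapped onto a cycle winding $m\ge1$ times around it; on the other hand, for a regular value $p'$ close to that image point the preimages of $p'$ lying near the vertex each contribute $+1$ to $\deg(\phi,D_{\mathcal S},p')=1$, so there is at most one such preimage and $m\le1$. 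Hence $m=1$ and $\phi$ is a local homeomorphism at that vertex; the same conclusion at all interior faces yields that $\phi|_{\mathrm{int}\,D_{\mathcal S}}$ is a local homeomorphism. I expect this step — cleanly formulating and verifying the monotone-wrapping (``$m\ge1$'') linking argument in general dimension, and patching the local pieces across all interior faces — to be the main obstacle; everything else is formal.

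Finally I would assemble the pieces. As a local homeomorphism $\phi|_{\mathrm{int}\,D_{\mathcal S}}$ is an open map, so its image is open and, being contained in $\bar\Omega$, lies in $\mathrm{int}\,\Omega$; with $\phi(D_{\mathcal S})=\bar\Omega$ and $\phi(\partial D_{\mathcal S})=\partial\Omega$ the image is exactly $\mathrm{int}\,\Omega$. Since $\phi$ is proper (continuous on the compact $D_{\mathcal S}$) and no boundary point maps into $\mathrm{int}\,\Omega$, the map $\phi|_{\mathrm{int}\,D_{\mathcal S}}\colon\mathrm{int}\,D_{\mathcal S}\to\mathrm{int}\,\Omega$ is a proper local homeomorphism, hence a covering map; $\mathrm{int}\,\Omega$ is simply connected and $\mathrm{int}\,D_{\mathcal S}$ is connected (as $\mathcal M$ is connected), so the covering is one-sheeted and $\phi|_{\mathrm{int}\,D_{\mathcal S}}$ is a homeomorphism onto $\mathrm{int}\,\Omega$. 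Together with the boundary homeomorphism this makes $\phi\colon D_{\mathcal S}\to\bar\Omega$ a continuous bijection from a compact space onto a Hausdorff space, hence a homeomorphism; in particular $\phi$ is bijective, as claimed. I would also remark that conditions~1--2 are only satisfiable when $D_{\mathcal S}$, and hence $\mathcal M$, is a $d$-ball, so the theorem is in effect a statement about ball-like manifolds.
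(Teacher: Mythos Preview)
The paper does not prove this theorem at all: it is quoted verbatim as a result from \cite{lipman2014bijective} (``We restate the key theorem here, which is in \cite{lipman2014bijective}'') and is used only to motivate the orientation-preserving discussion for $d\ge 3$. So there is no ``paper's own proof'' to compare against.

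That said, your degree-theoretic outline is exactly the approach of \cite{lipman2014bijective}, and the global part (degree $+1$ inside $\Omega$, $0$ outside, hence exactly one preimage at every regular value, hence $\phi(D_{\mathcal S})=\bar\Omega$) is clean and correct. You have also correctly identified the only genuinely delicate step: passing from ``no folds across interior $(d-1)$-faces'' to ``local homeomorphism at every interior face''. Two comments on that step. First, your ``$m\le 1$'' argument as written is not quite tight: knowing the global degree is $1$ tells you a regular value $p'$ near $\phi(v)$ has one preimage \emph{in all of} $D_{\mathcal S}$, not one preimage \emph{near $v$}; to localize you must first exclude preimages far from $v$, e.g.\ by the already established pairwise disjointness of open image simplices. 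Second, once you have that the open image $d$-simplices are pairwise disjoint (immediate from the regular-value count), a slicker finish than the link/winding argument is to observe that the image simplices then form a $d$-simplicial complex with the same face lattice as $\mathcal S$, so $\phi$ is a simplicial isomorphism onto its image and hence a homeomorphism $D_{\mathcal S}\to\bar\Omega$; this bypasses the inductive patching over faces you flagged as the main obstacle.
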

The first round FPLM with a selected simplex maps the boundary of the manifold in the centre of the simplex as a convex polytope as shown in \ref{prop:innerconvex}. This step guarantees the second condition mentioned above. However, orientation preserving property is not yet clarified, although we conjure that it may be there. The experiments of 3-manifolds support this conjecture. Therefore rigorous proof is still wanted.

\newpage
\section{Appendix two: More experimental results}
In this section, we add more experimental results and briefly introduce the d-simplex decomposition methods such as Tangential Complex and Tetgen. 

\textbf{Tangential Complex}\\
Followed by \cite{boissonnat2014manifold}, we use the Tangential Complex (TC) algorithm to construct triangulation on the manifold. One requirement for conducting TC is to require that each point's tangent space on the manifold be estimated by using PCA. The tangential complex is obtained by gluing the local (Delaunay) triangulations around each sample point. The output of TC is a sub-complex of the $l$-dimensional Delaunay simplices of the sample points, but it can be computed using mostly operations in the $d$-dimensional tangent spaces.\cite{boissonnat2014manifold}. It can be proved that the output of the reconstructed manifold from the TC algorithm can be isotopic to the original manifold. However, due to the appearance of so-called \textit{inconsistencies}, TC does not always generate the triangulation result that we defined in \ref{Triangulation and connectivity}. Even though this situation has been reported \cite{freedman2002efficient}, there is no universal solution except for the case of curves ($d=1$) \cite{flototto2003coordinate}. Hence, one way to deal with this problem is to give each point that contained inconsistent simplex a small perturbation of their weights so that the position of\textit{ medial axis} of the points can be adjusted accordingly. Unfortunately, there is no guarantee that this perturbation method can always reduce the number of inconsistencies to zero. Hence, if the TC result has inconsistency even after perturbation, we will use Delaunay or surface triangulation.

\textbf{Tetgen}\\
One of the most widely applied tetrahedral mesh generation methods: Tetgen, is comprehensively described in \cite{si2015tetgen}. It is a mixture of a few classic constrain methods described in \cite{george1991automatic} and the classic Delaunay refinement algorithm \cite{ruppert1995delaunay}. Given a set of points from an underlying manifold in $\mathbb R^l$, with an intrinsic dimension equal to three,Tetgen can generate a 3D piece-wise linear complex, collectively named as cells. The property of such cells includes 1.  the boundary of each cell in the complex is a union of cells in the complex; 2. The intersection (if it exists) of two cells is the simplicial complex with a lower dimension, at least less than one compared to the two intersected cells. If all the cells in these underlying 3-manifolds are tetrahedral, we would call the piece-wise linear complex formed by tetrahedral mesh. More generally, the piece-wise linear meshes generated from Tetgen offer a facet-to-facet tessellation of manifold in $\mathbb R^l$.

\subsection{Additional results on 2-manifolds}

We add some additional experimental results for 2-manifolds here: 

\begin{figure}[H]
     \centering
     \subfloat[]{\includegraphics[width = 0.13\textwidth, height = 0.13\textwidth]{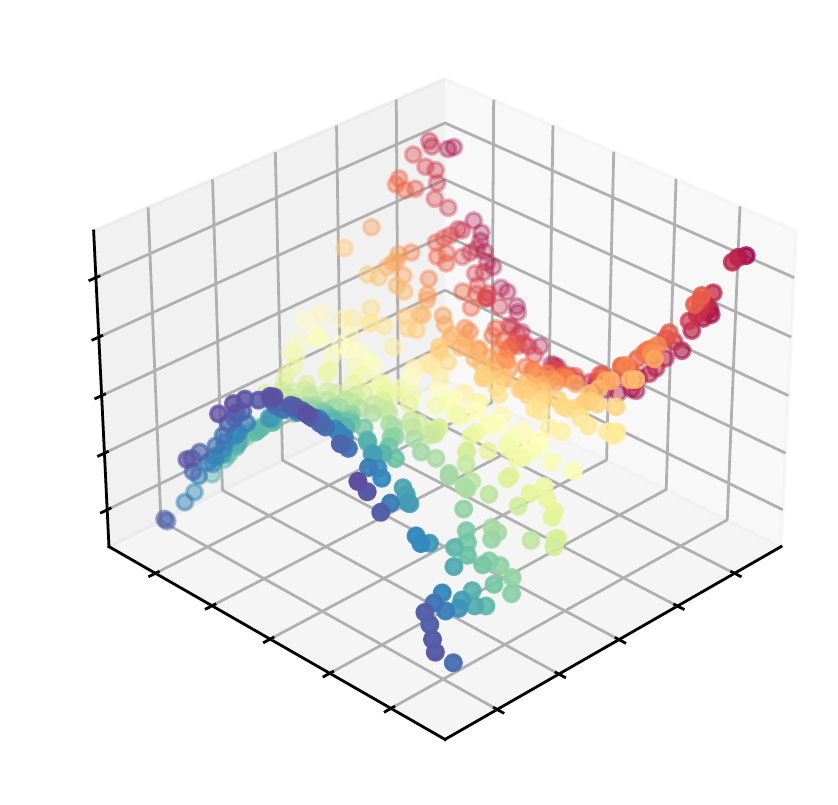}} \;\;\;
     \subfloat[]{\includegraphics[width =0.13\textwidth, height = 0.13\textwidth]{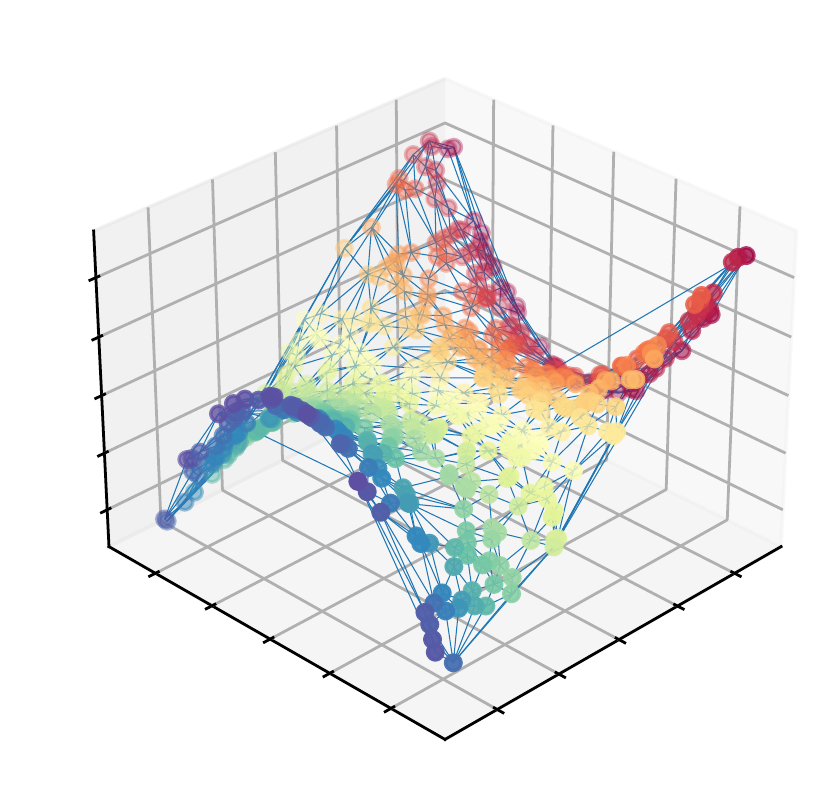}}\;\;\;
     \subfloat[]{\includegraphics[width =0.13\textwidth, height = 0.13\textwidth]{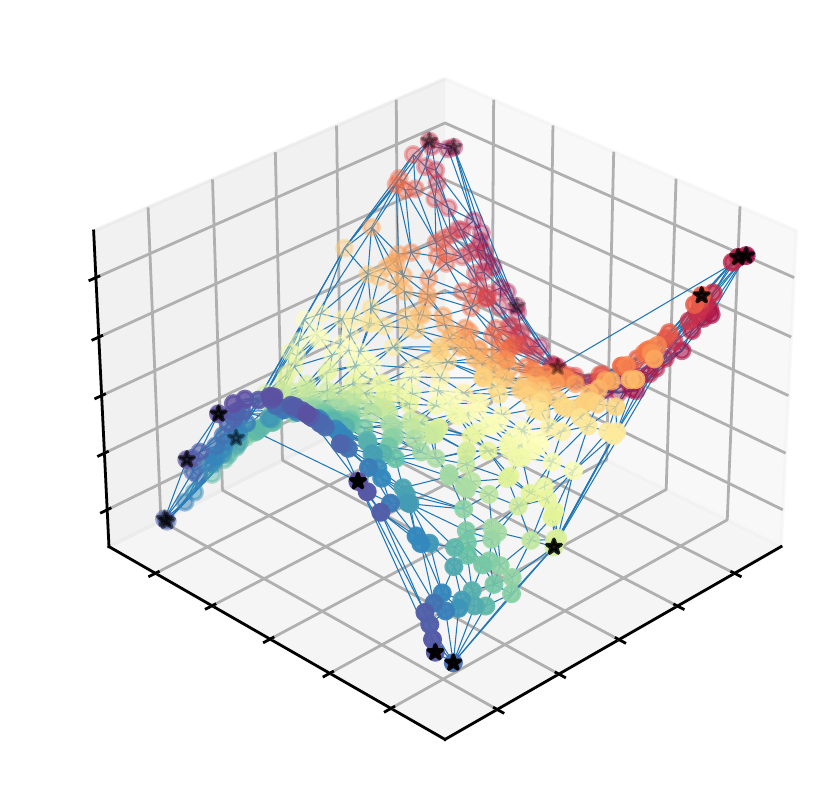}}\;\;\;
     \subfloat[]{\includegraphics[width =0.13\textwidth, height = 0.13\textwidth]{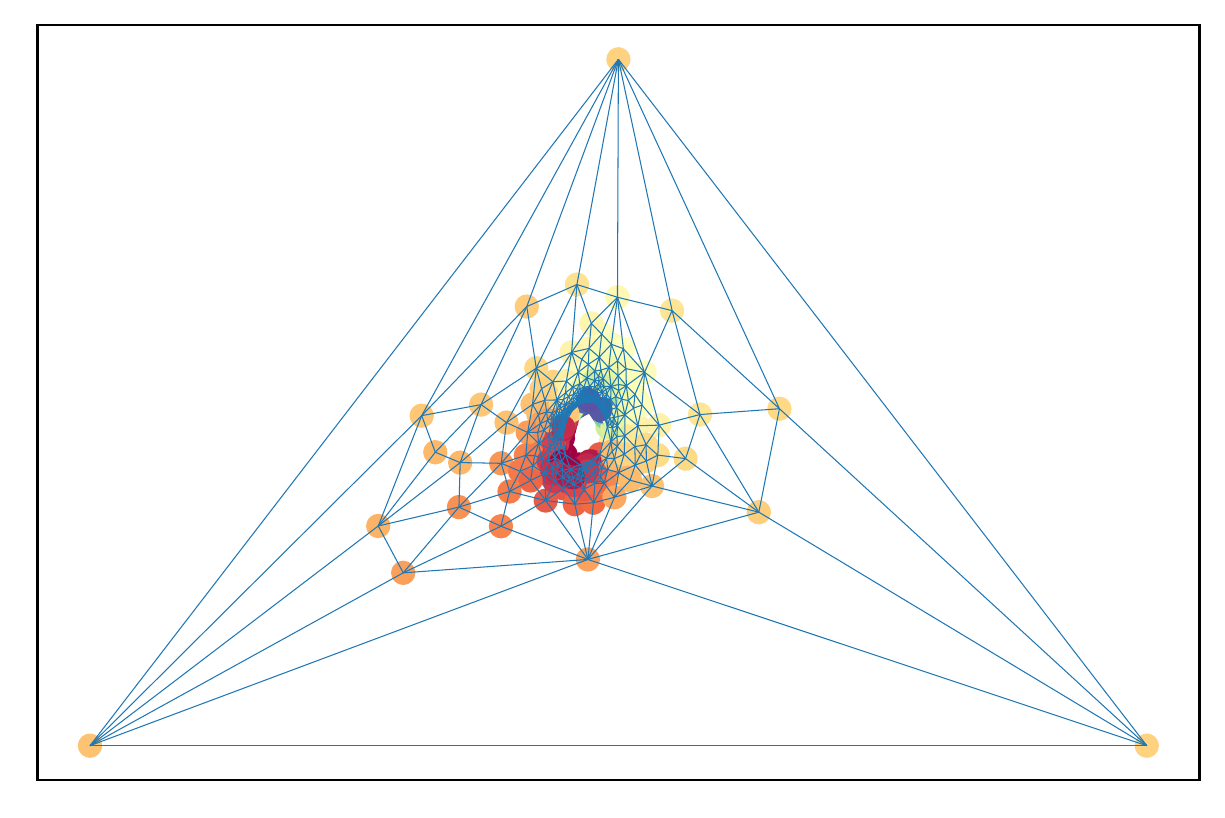}}\;\;\;
     \subfloat[]{\includegraphics[width =0.13\textwidth, height = 0.13\textwidth]{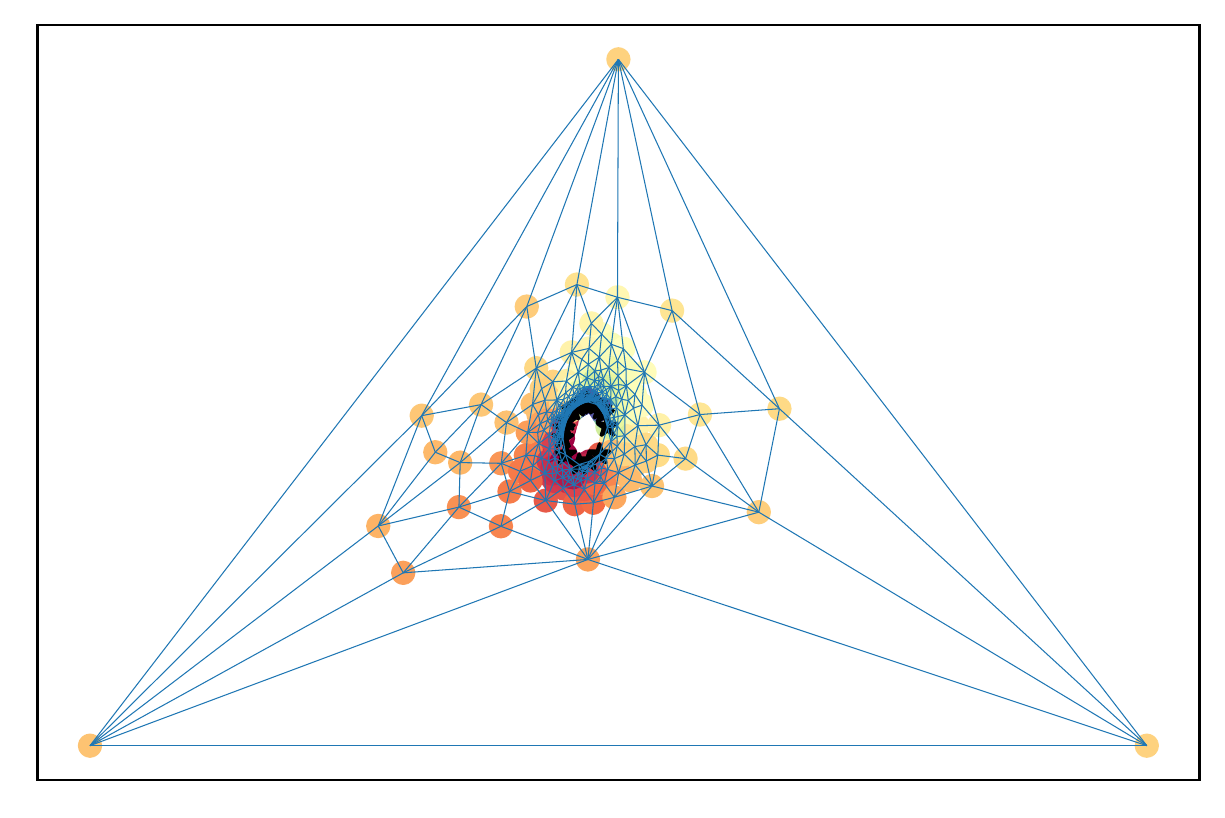}}\;\;\;
     \subfloat[]{\includegraphics[width =0.13\textwidth, height = 0.13\textwidth]{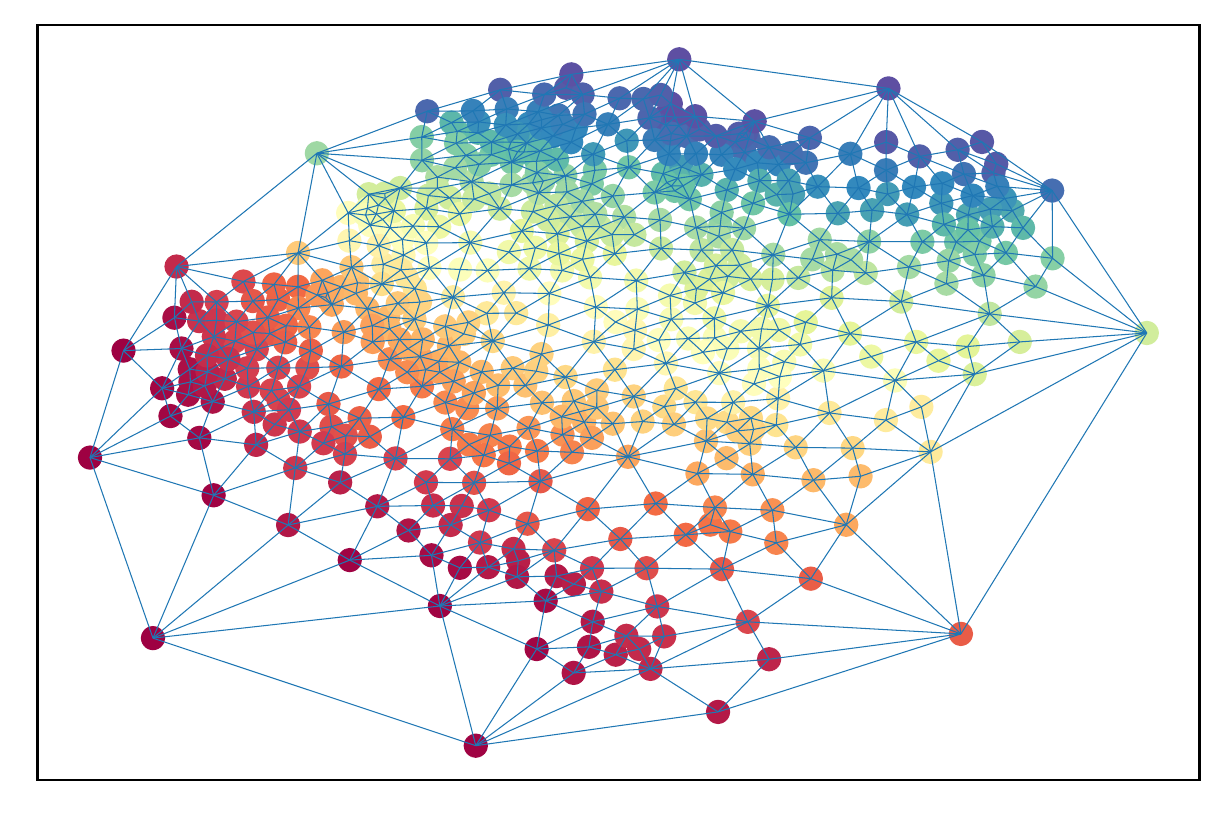}}\;\;\;
\caption{FPLM on Monkey Saddle, (a) Manifold scatters,(b) Triangulation on manifold, (c) Boundary detection from triangulation result, (d) First round FPLM result, (e) Boundary detection of the first round FPLM, (f) Final result. }  
\label{FPLM_monkey_saddle}
\end{figure}

\begin{figure}[H]
     \centering
     \subfloat[AE]{\includegraphics[width = 0.11\textwidth, height = 0.13\textwidth]{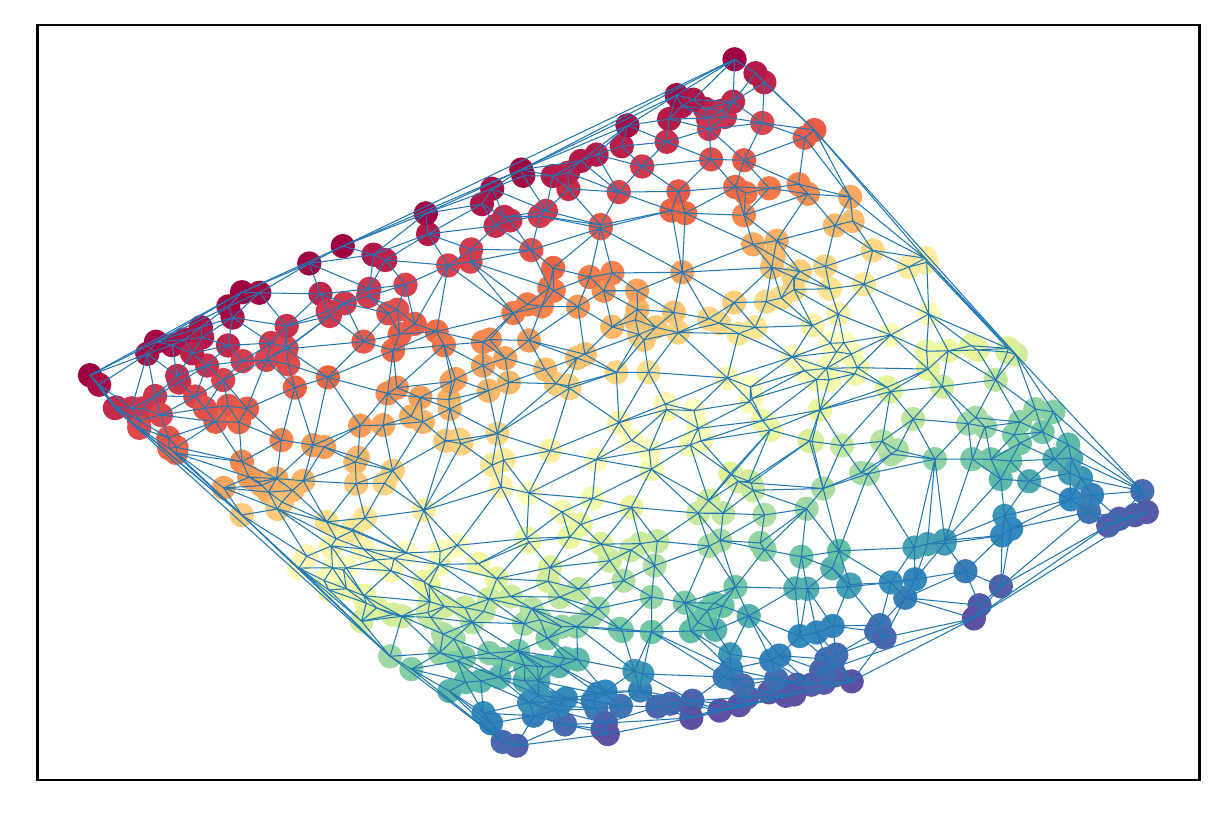}} \;\;\;
     \subfloat[Isomap]{\includegraphics[width =0.11\textwidth, height = 0.13\textwidth]{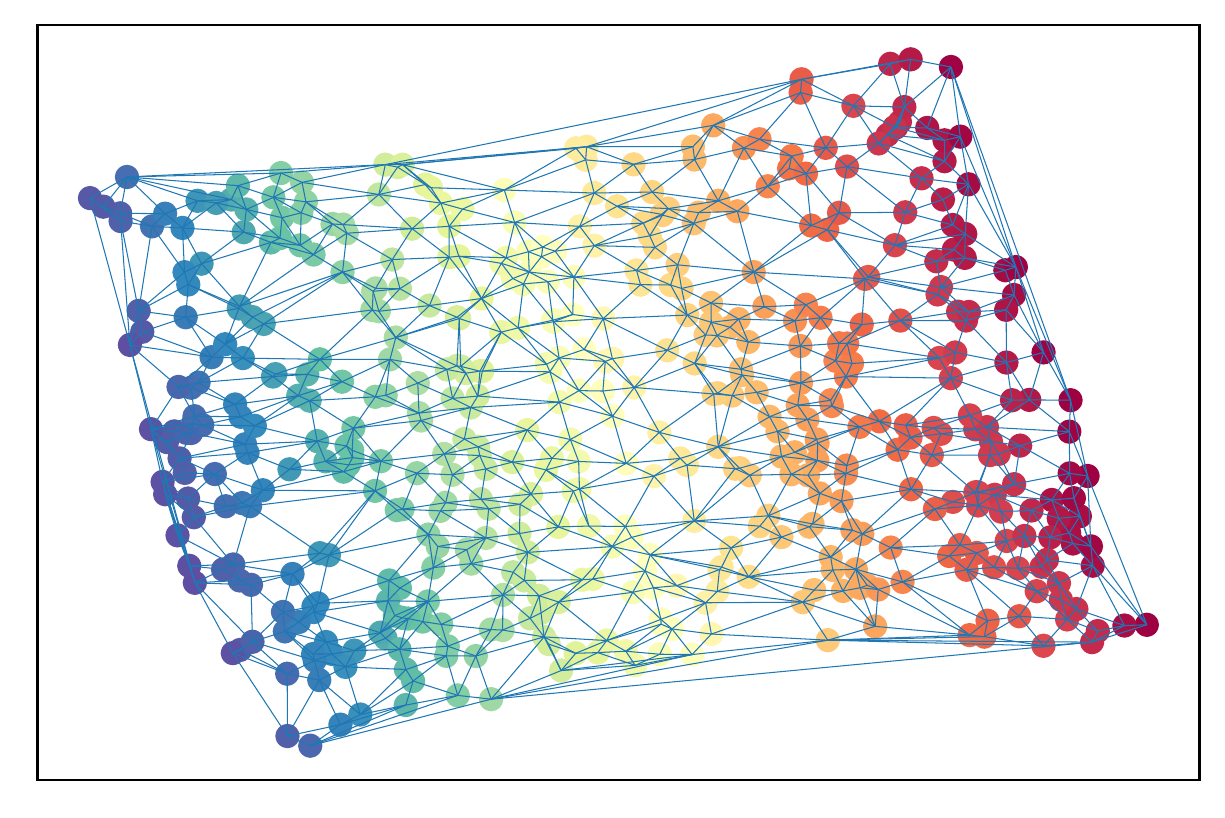}}\;\;\;
     \subfloat[LE]{\includegraphics[width =0.11\textwidth, height = 0.13\textwidth]{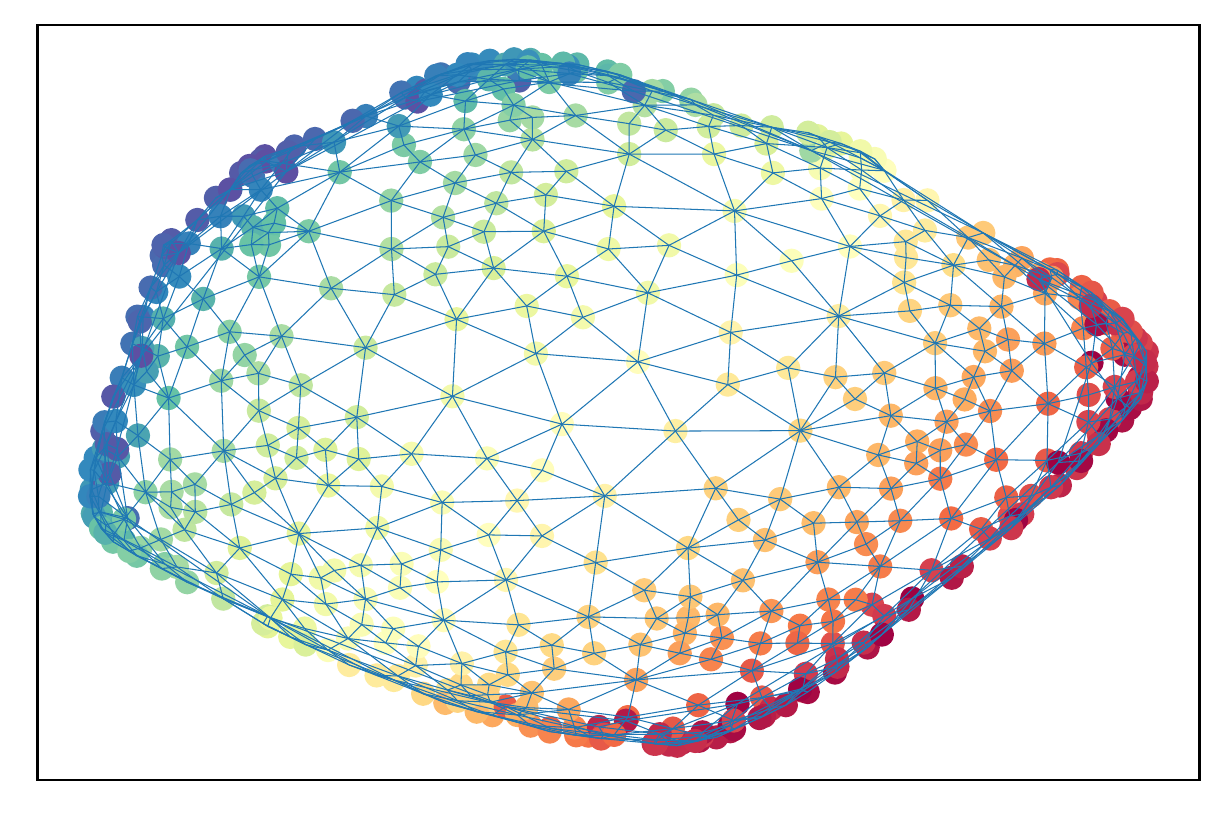}}\;\;\;
     \subfloat[LLE]{\includegraphics[width =0.11\textwidth, height = 0.13\textwidth]{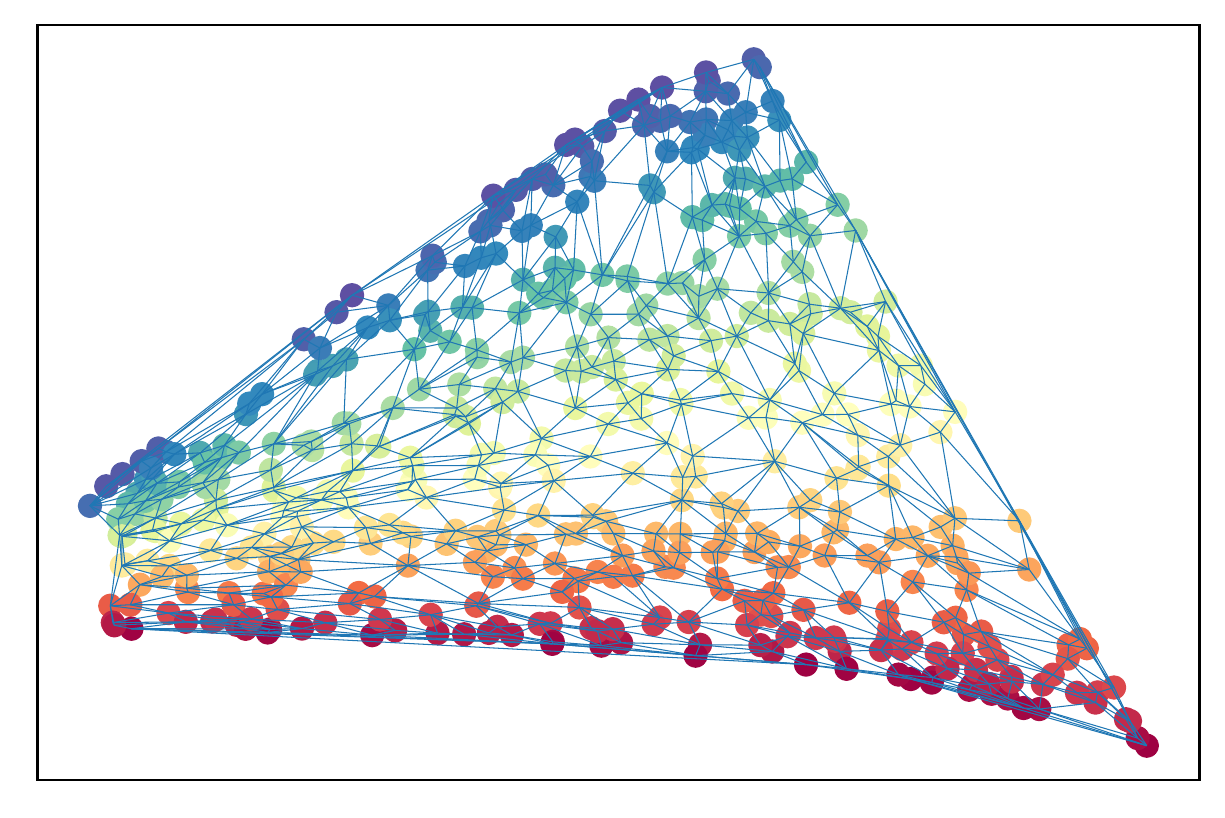}}\;\;\;
     \subfloat[LSTA]{\includegraphics[width =0.11\textwidth, height = 0.13\textwidth]{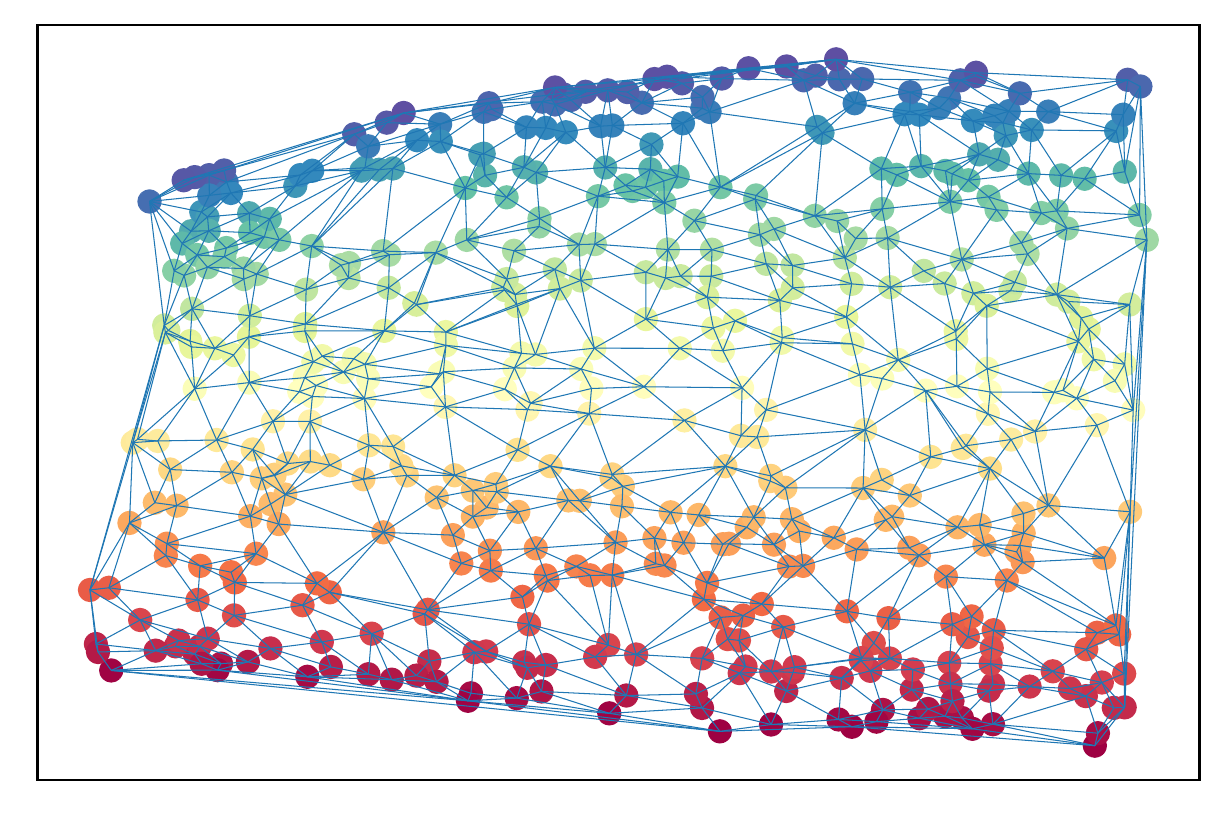}}\;\;\;
     \subfloat[MDS]{\includegraphics[width =0.11\textwidth, height = 0.13\textwidth]{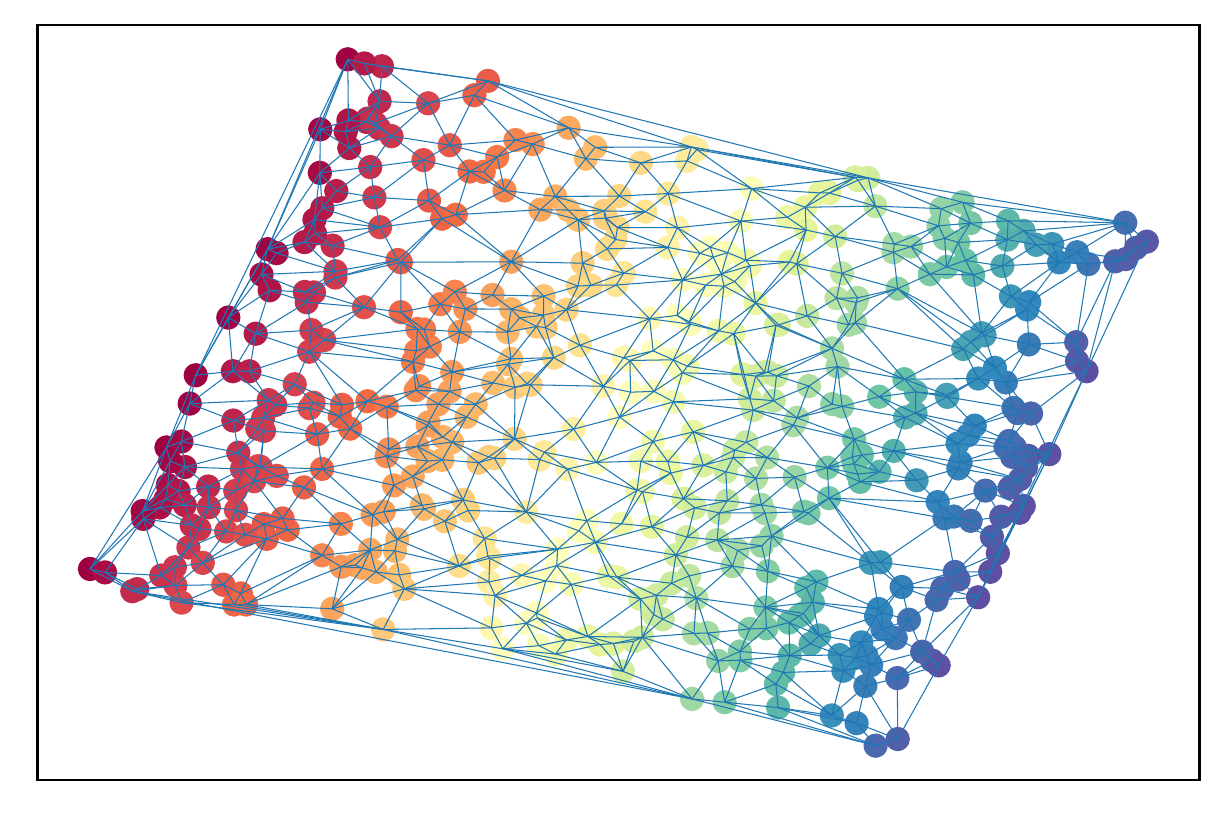}}\;\;\;
     \subfloat[t-SNE]{\includegraphics[width =0.11\textwidth, height = 0.13\textwidth]{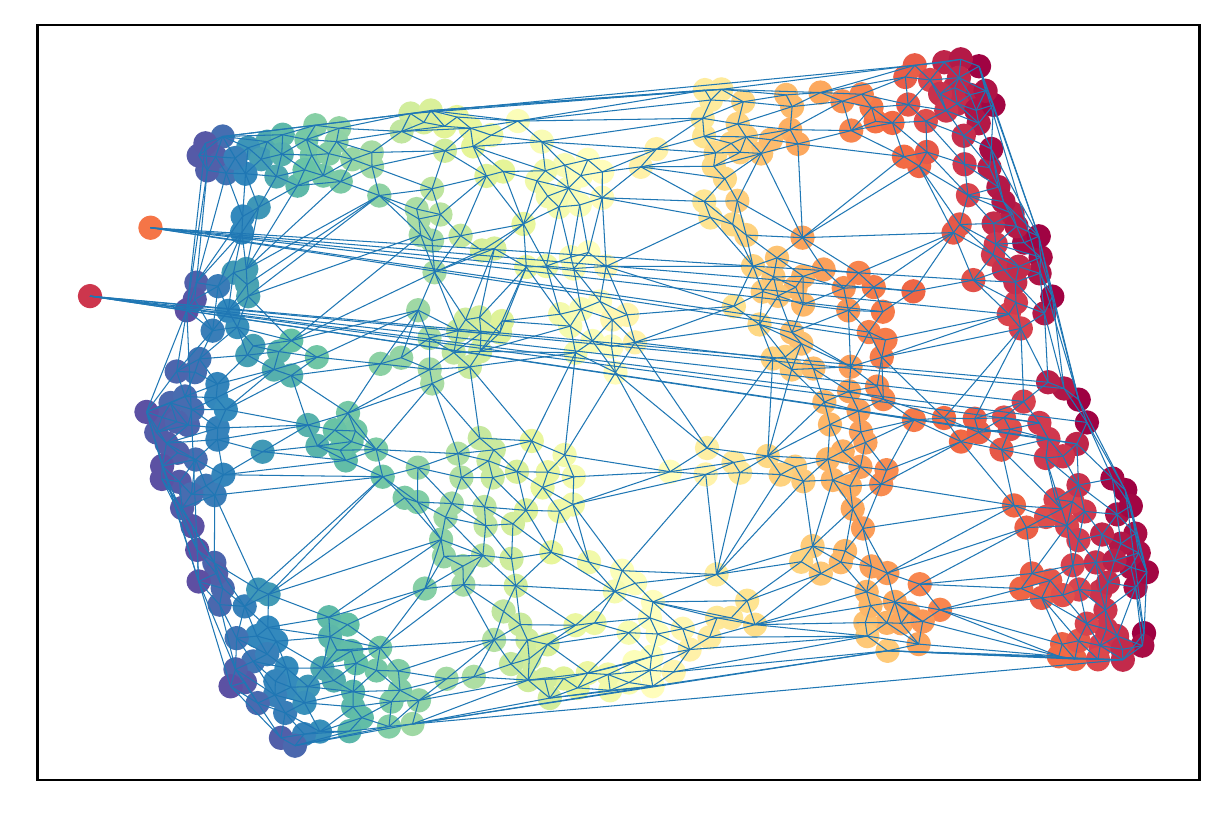}}\;\;\;
\caption{Other methods on monkey saddle. (a) 21 crosses,(b) 46 crosses, (c) 815 crosses, (d) 57 cross, (e) 33 crosses, (f) 41 crosses, (g) 735 crosses}   \label{other_methods_monkey_saddle}
\end{figure}

Manifold: Paraboloid
\begin{figure}[H]
     \centering
     \subfloat[]{\includegraphics[width = 0.13\textwidth, height = 0.13\textwidth]{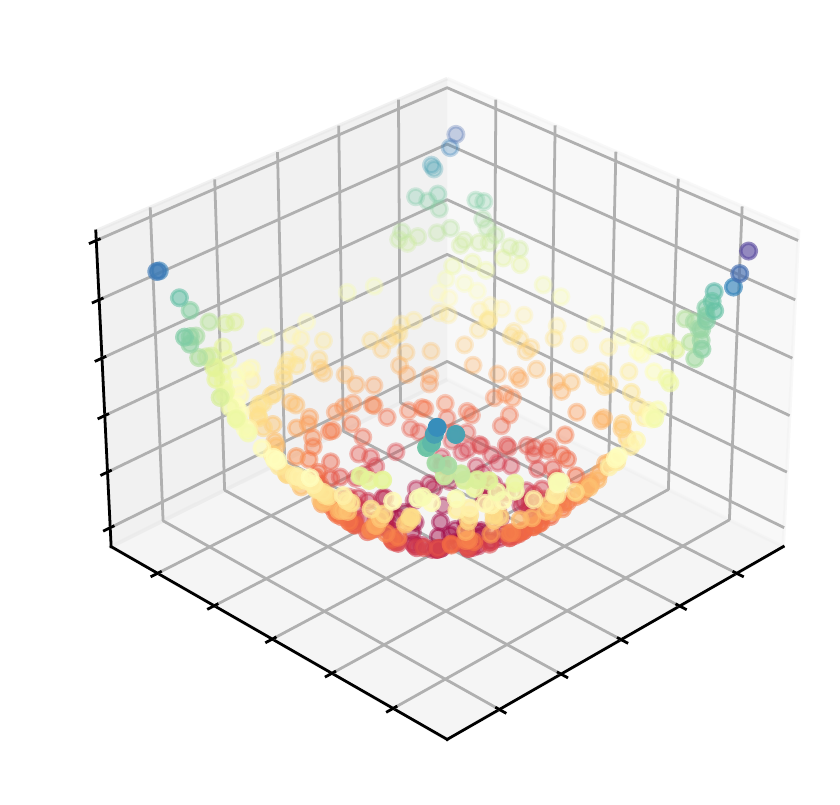}} \;\;\;
     \subfloat[]{\includegraphics[width =0.13\textwidth, height = 0.13\textwidth]{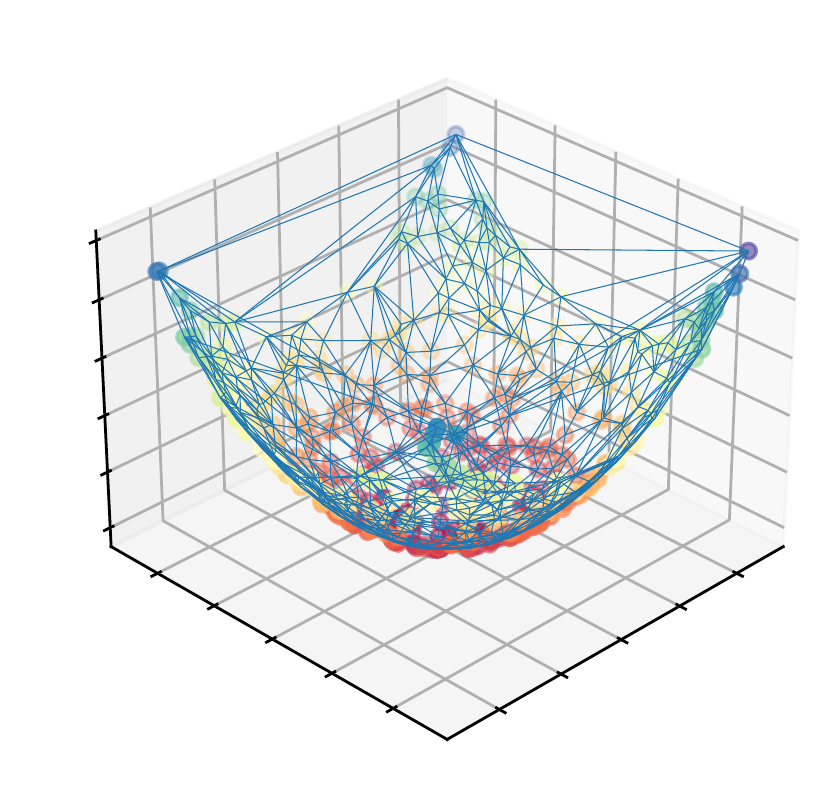}}\;\;\;
     \subfloat[]{\includegraphics[width =0.13\textwidth, height = 0.13\textwidth]{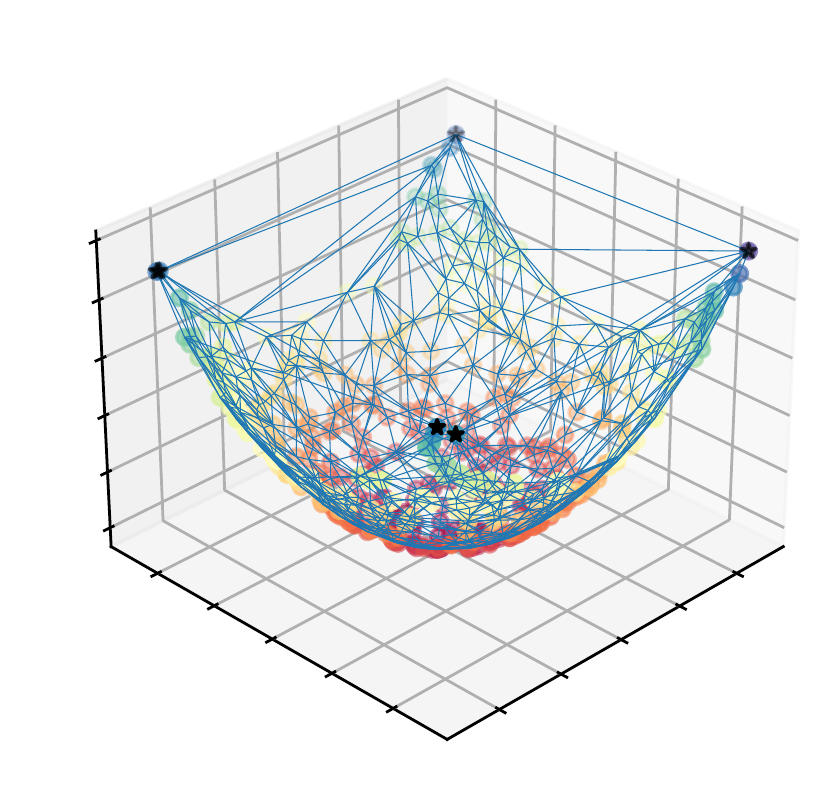}}\;\;\;
     \subfloat[]{\includegraphics[width =0.13\textwidth, height = 0.13\textwidth]{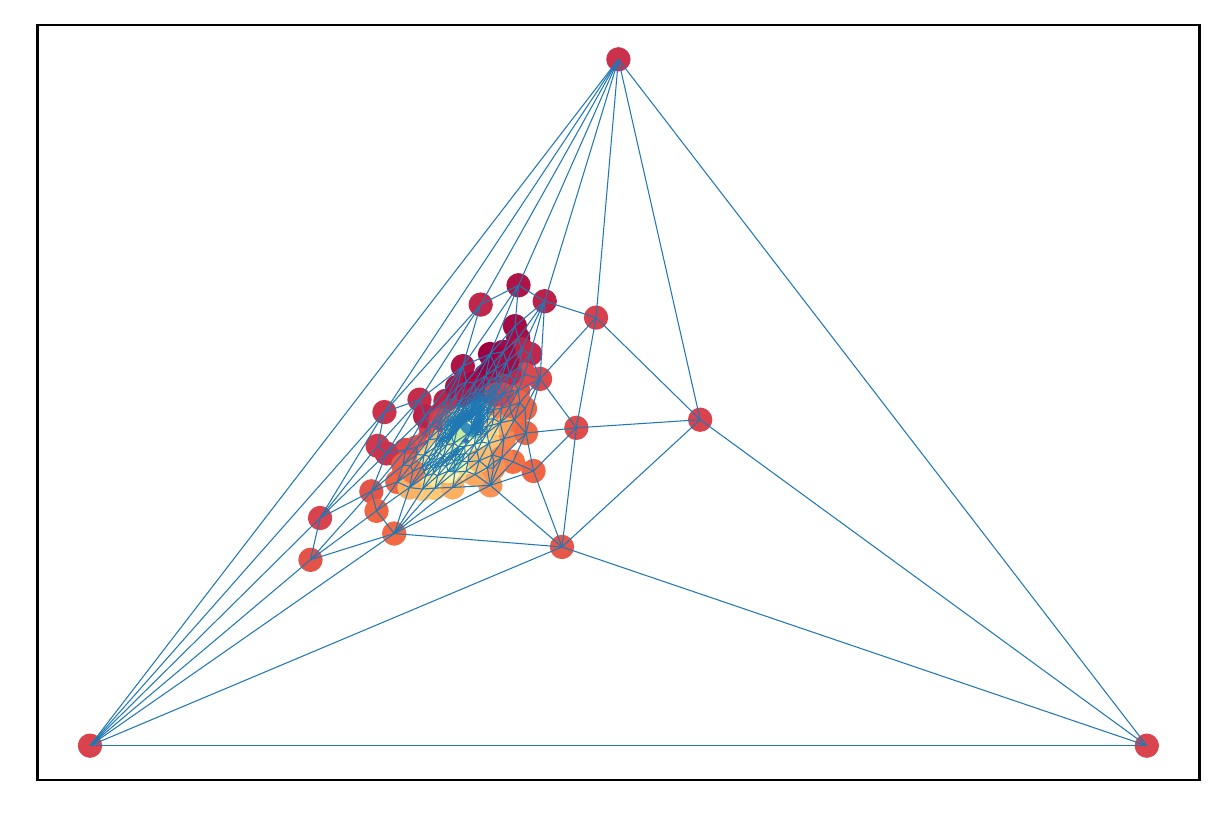}}\;\;\;
     \subfloat[]{\includegraphics[width =0.13\textwidth, height = 0.13\textwidth]{parabola_results/parabola_FLE_1step_bd.pdf}}\;\;\;
     \subfloat[]{\includegraphics[width =0.13\textwidth, height = 0.13\textwidth]{parabola_results/parabola_FLE_2step.pdf}}\;\;\;
\caption{FPLM on Paraboloid, (a) Manifold scatters,(b) Triangulation on manifold, (c) Boundary detection from triangulation result, (d) First round FPLM result, (e) Boundary detection of the first round FPLM, (f) Final result.}   
\label{FPLM_Parabola}
\end{figure}

\begin{figure}[H]
     \centering
     \subfloat[AE ]{\includegraphics[width = 0.11\textwidth, height = 0.13\textwidth]{ 
     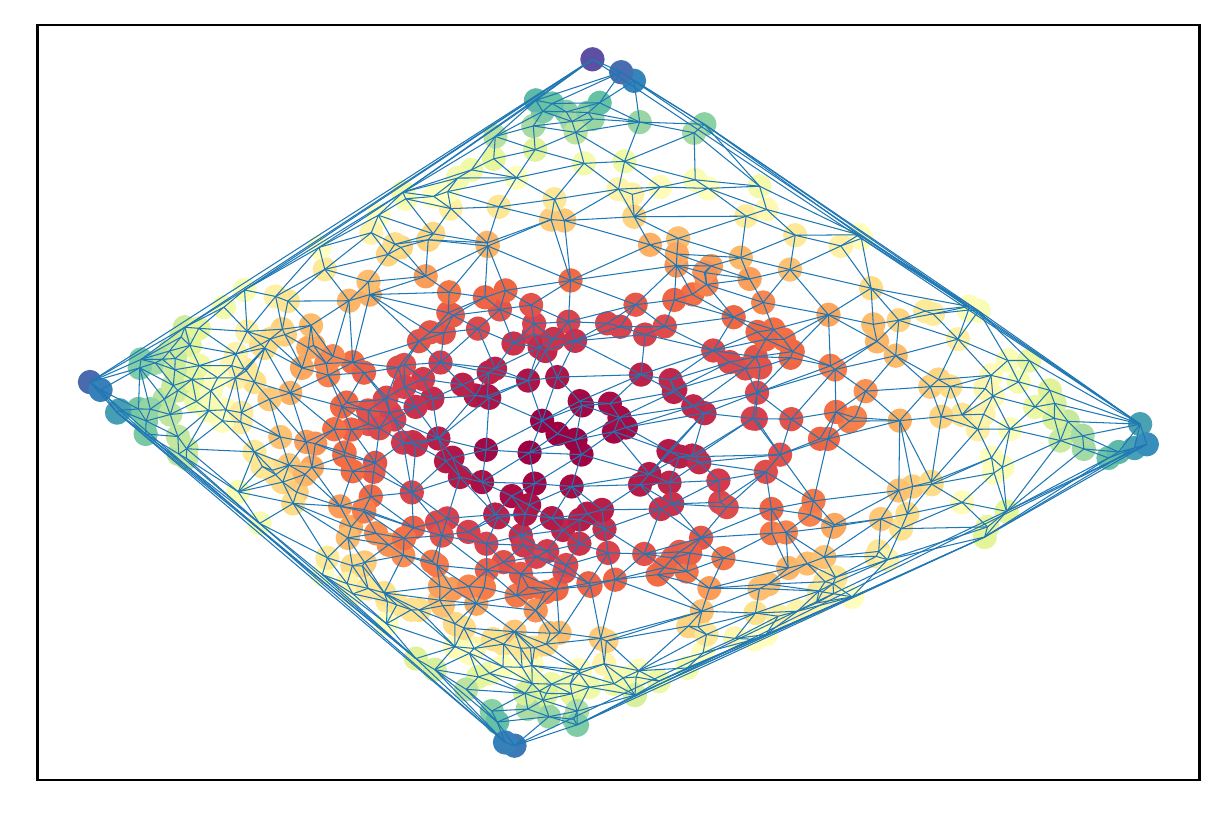}} \;\;\;
     \subfloat[Isomap ]{\includegraphics[width =0.11\textwidth, height = 0.13\textwidth]{ 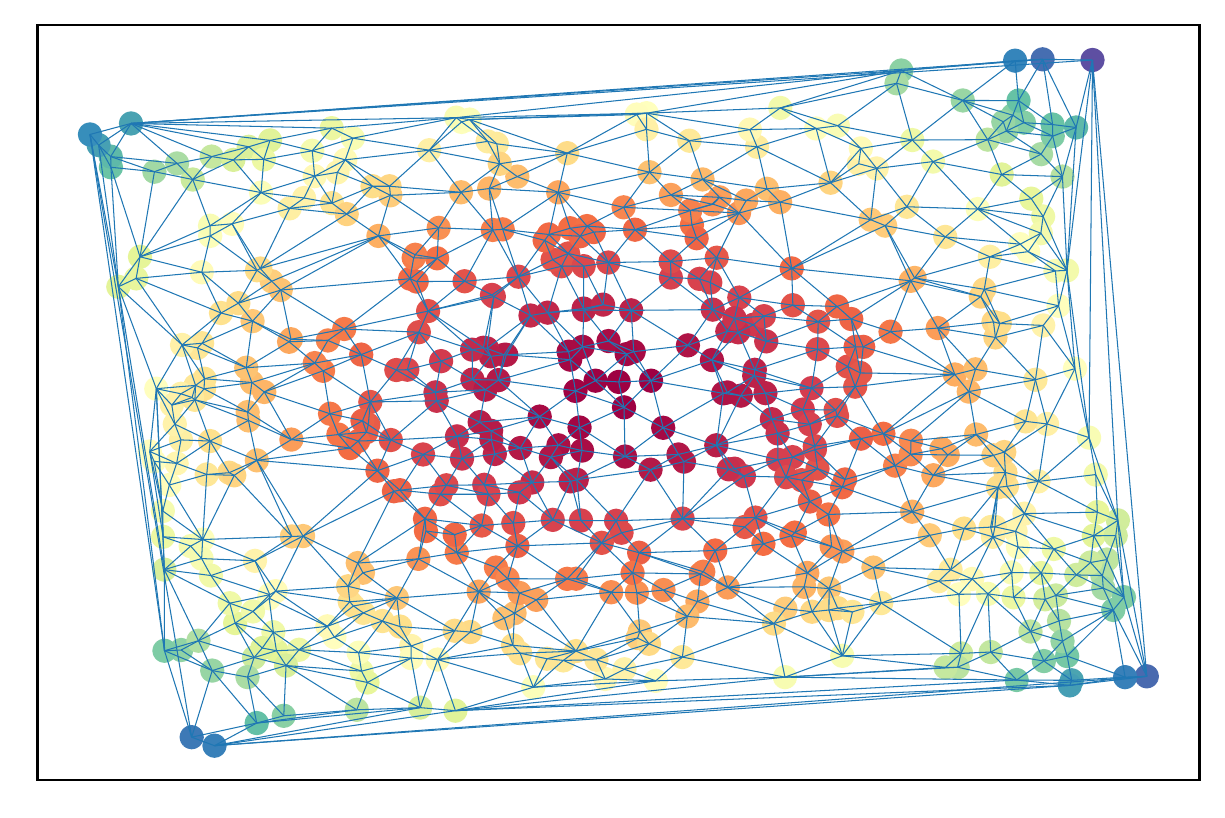}}\;\;\;
     \subfloat[LE ]{\includegraphics[width =0.11\textwidth, height = 0.13\textwidth]{ 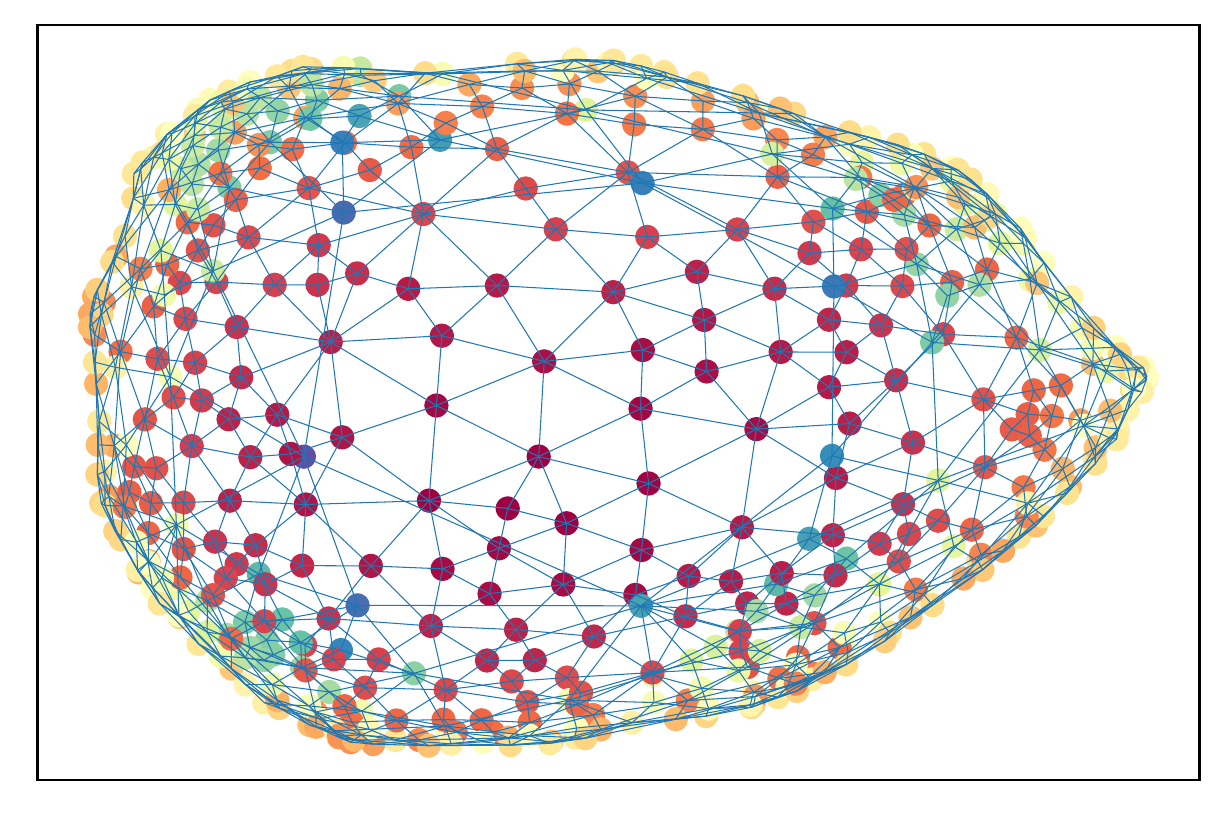}}\;\;\;
     \subfloat[LLE ]{\includegraphics[width =0.11\textwidth, height = 0.13\textwidth]{ 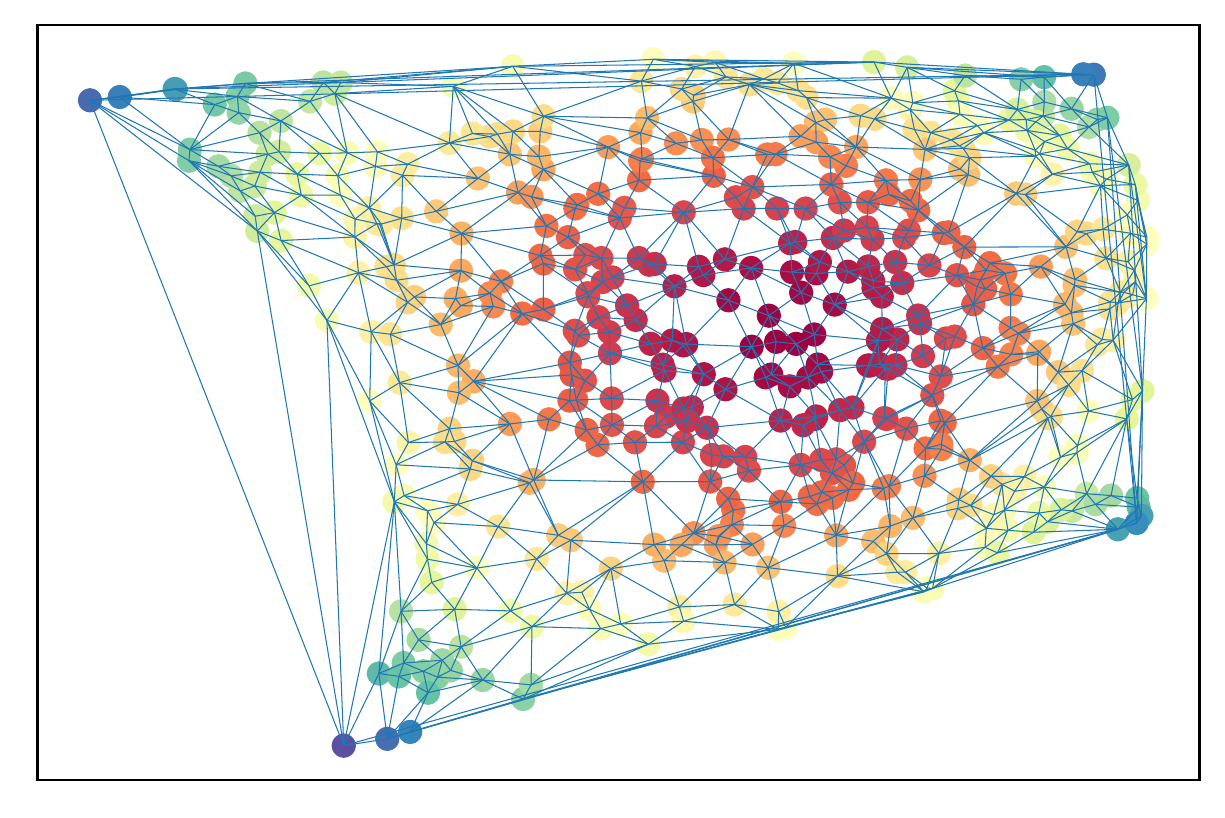}}\;\;\;
     \subfloat[LTSA ]{\includegraphics[width =0.11\textwidth, height = 0.13\textwidth]{ 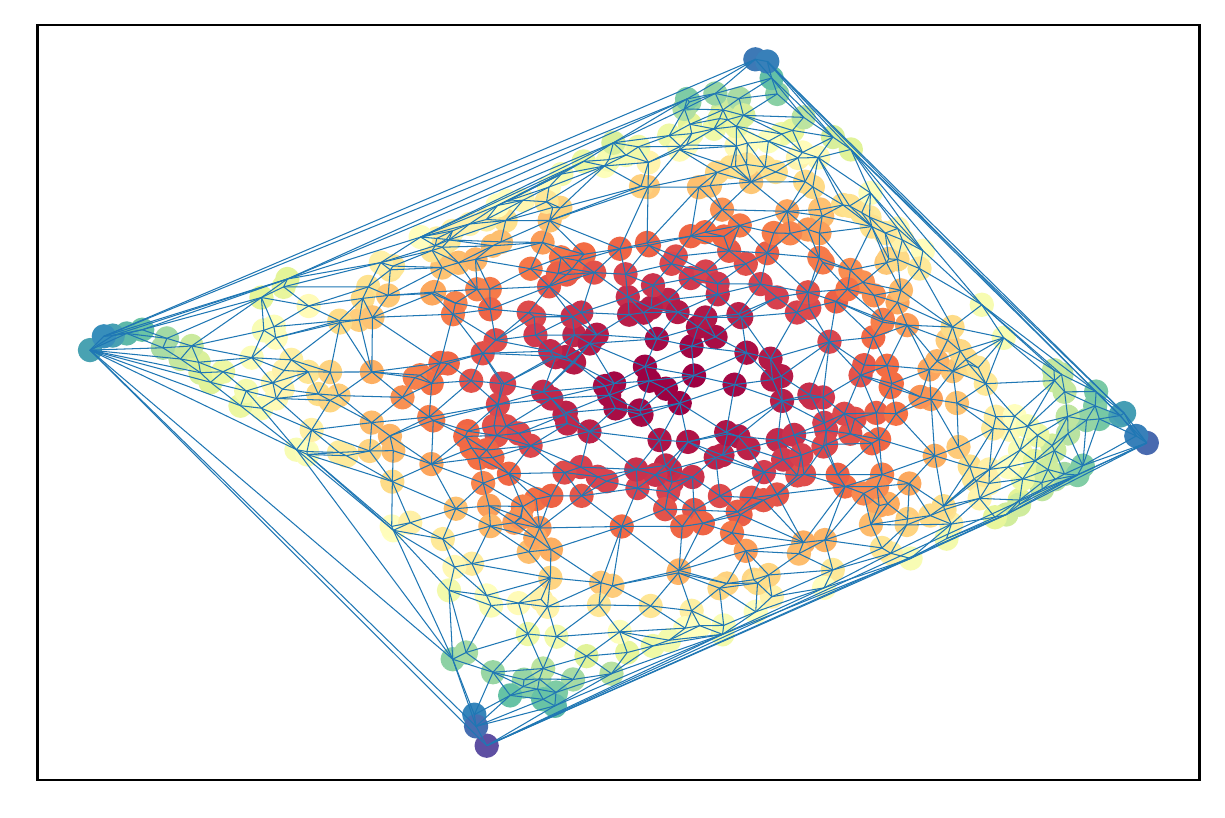}}\;\;\;
     \subfloat[MDS ]{\includegraphics[width =0.11\textwidth, height = 0.13\textwidth]{ 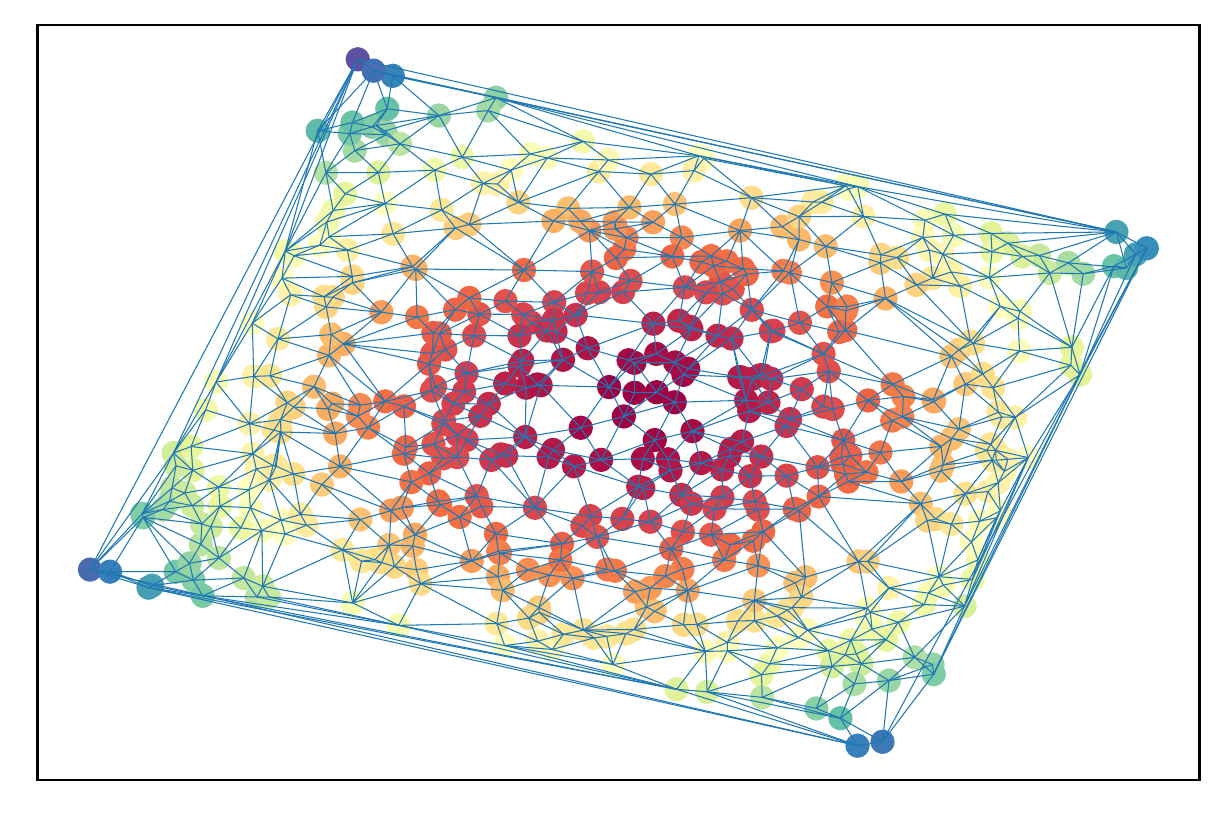}}\;\;\;
     \subfloat[t-SNE ]{\includegraphics[width =0.11\textwidth, height = 0.13\textwidth]{ 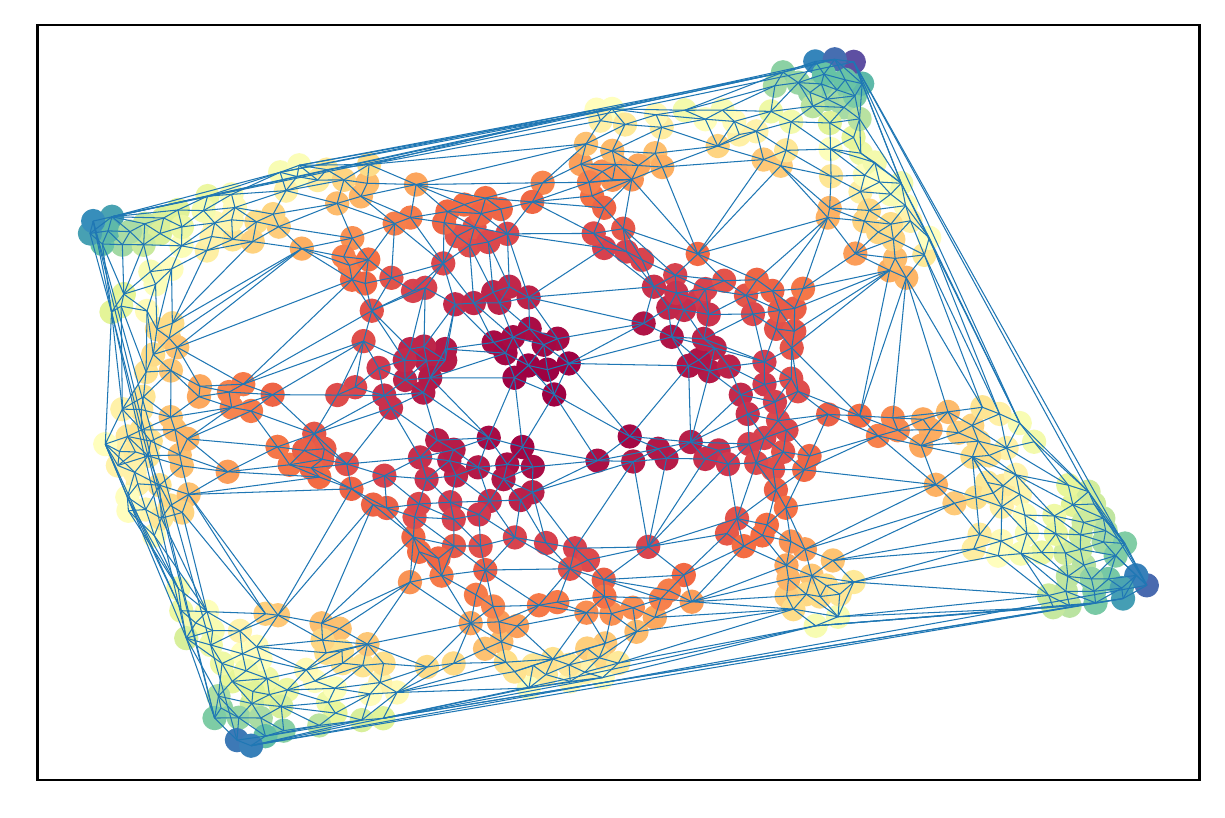}}\;\;\;
\caption{Other methods on Paraboloid: (a) 122 crosses,(b) 39 crosses, (c) 1468 crosses, (d) 309 cross, (e) 30 crosses, (f) 57 crosses, (g) 282 crosses}   
\label{other_methods_Paraboloid}
\end{figure}

Manifold: Twin peaks
\begin{figure}[H]
     \centering
     \subfloat[]{\includegraphics[width = 0.13\textwidth, height = 0.13\textwidth]{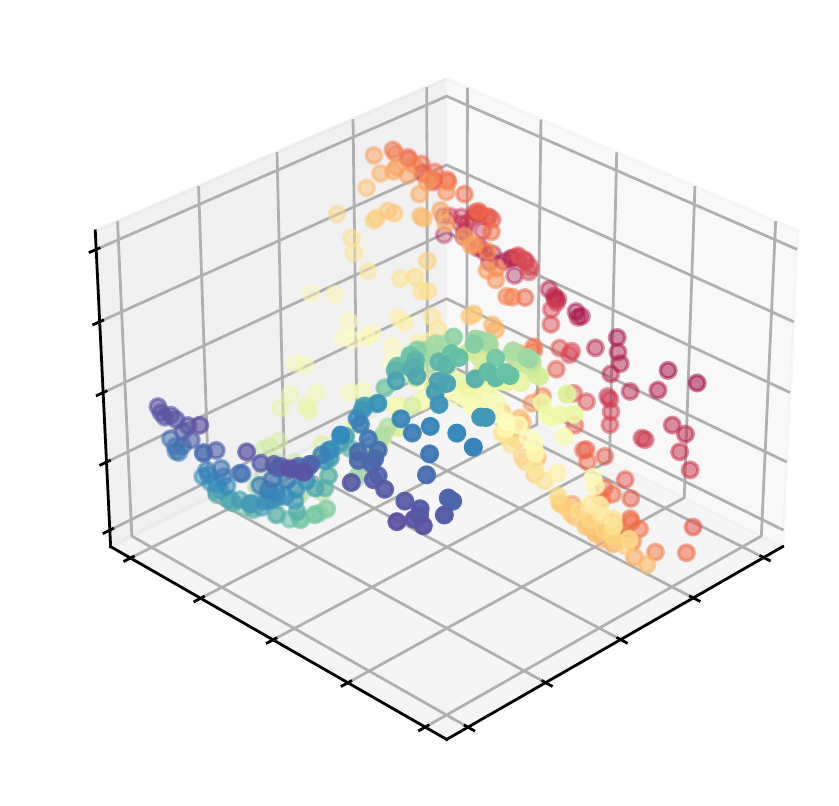}} \;\;\;
     \subfloat[]{\includegraphics[width =0.13\textwidth, height = 0.13\textwidth]{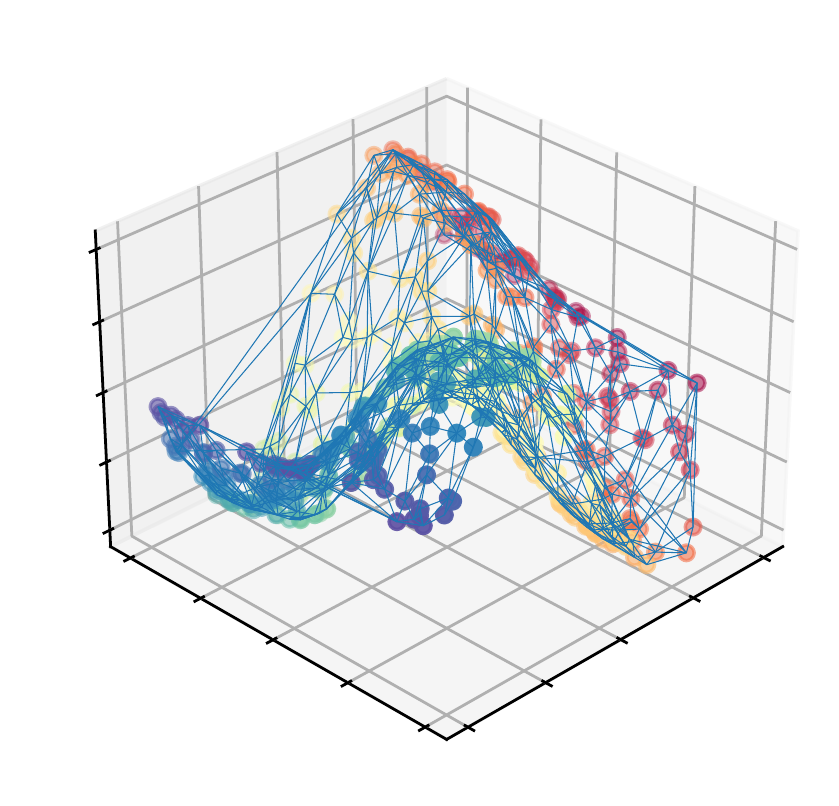}}\;\;\;
     \subfloat[]{\includegraphics[width =0.13\textwidth, height = 0.13\textwidth]{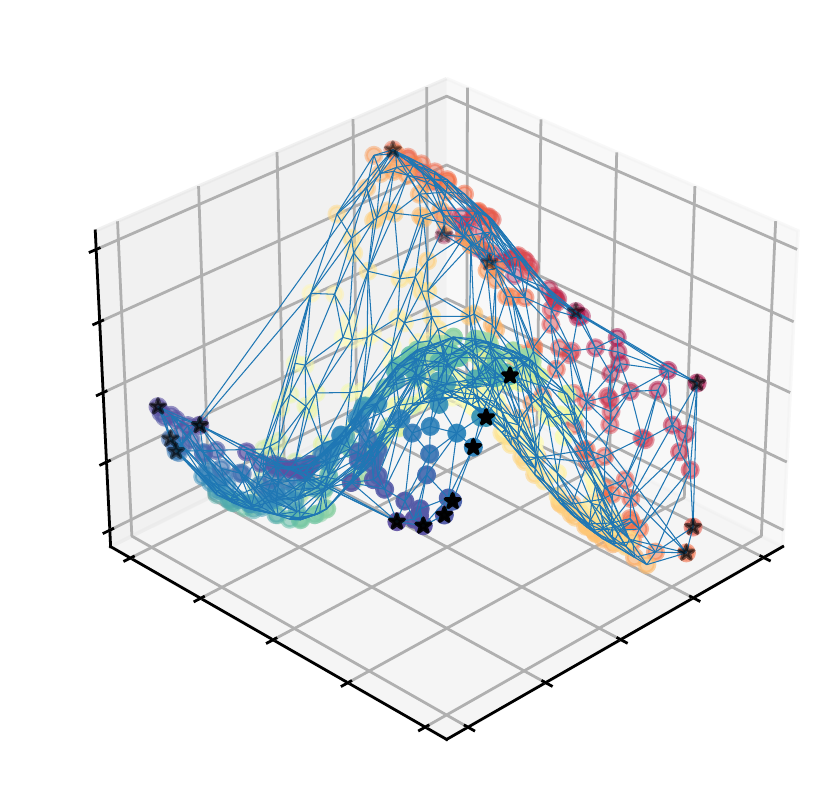}}\;\;\;
     \subfloat[]{\includegraphics[width =0.13\textwidth, height = 0.13\textwidth]{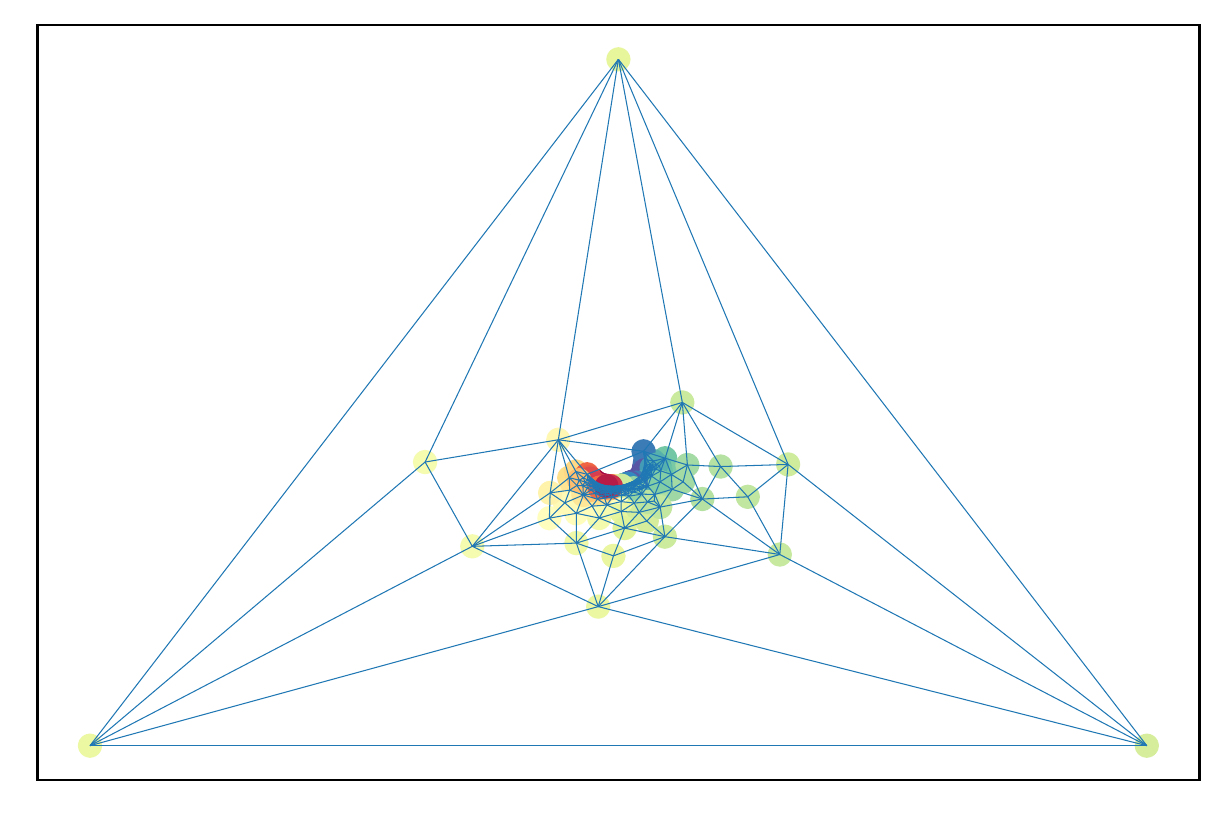}}\;\;\;
     \subfloat[]{\includegraphics[width =0.13\textwidth, height = 0.13\textwidth]{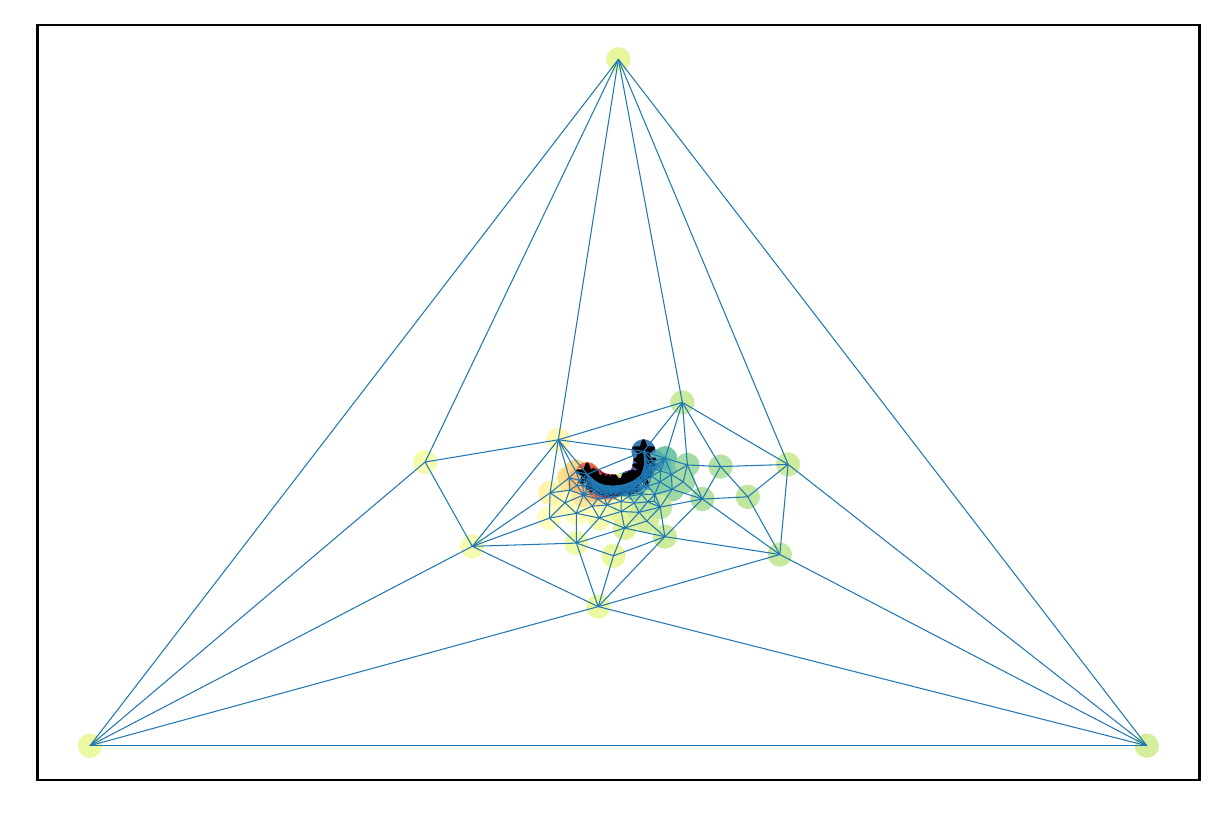}}\;\;\;
     \subfloat[]{\includegraphics[width =0.13\textwidth, height = 0.13\textwidth]{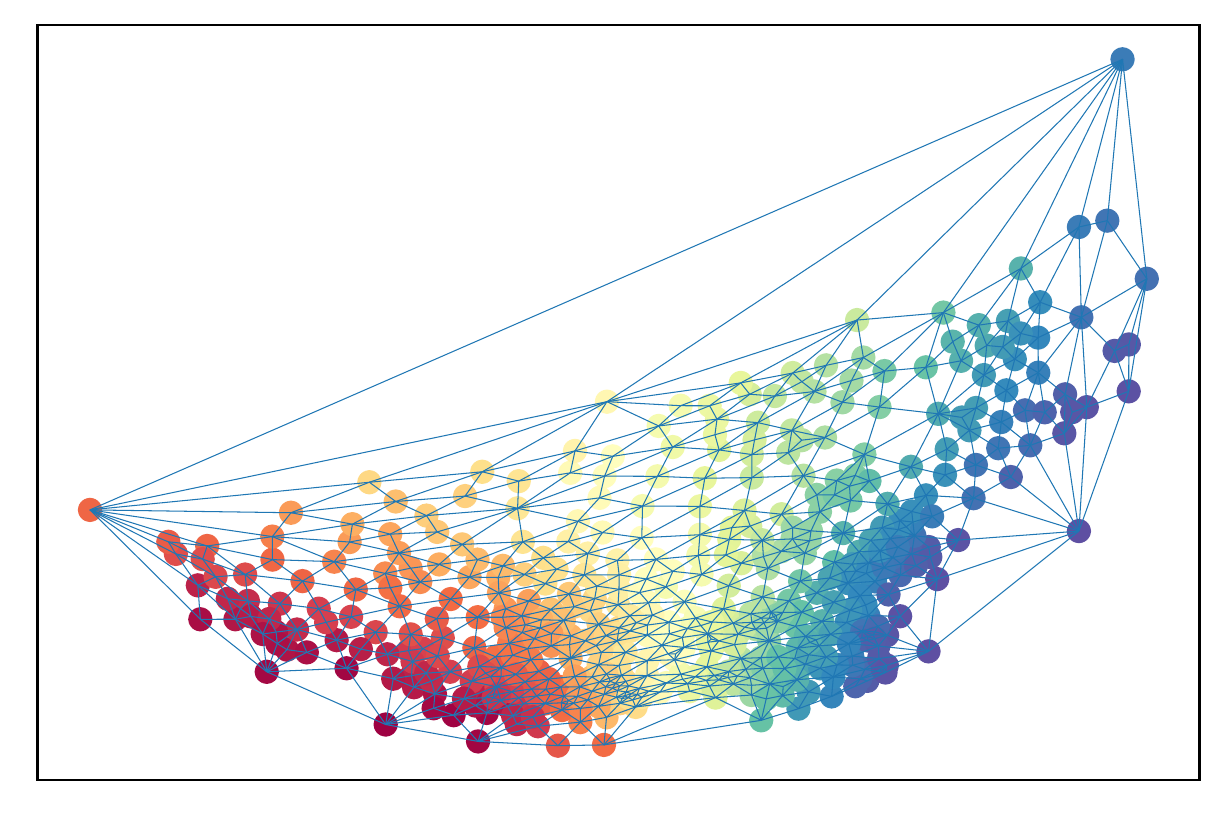}}\;\;\;
\caption{FPLM on Twinpeaks }   
\label{FPLM_twinpeaks}
\end{figure}

\begin{figure}[H]
     \centering
     \subfloat[AE ]{\includegraphics[width = 0.11\textwidth, height = 0.13\textwidth]{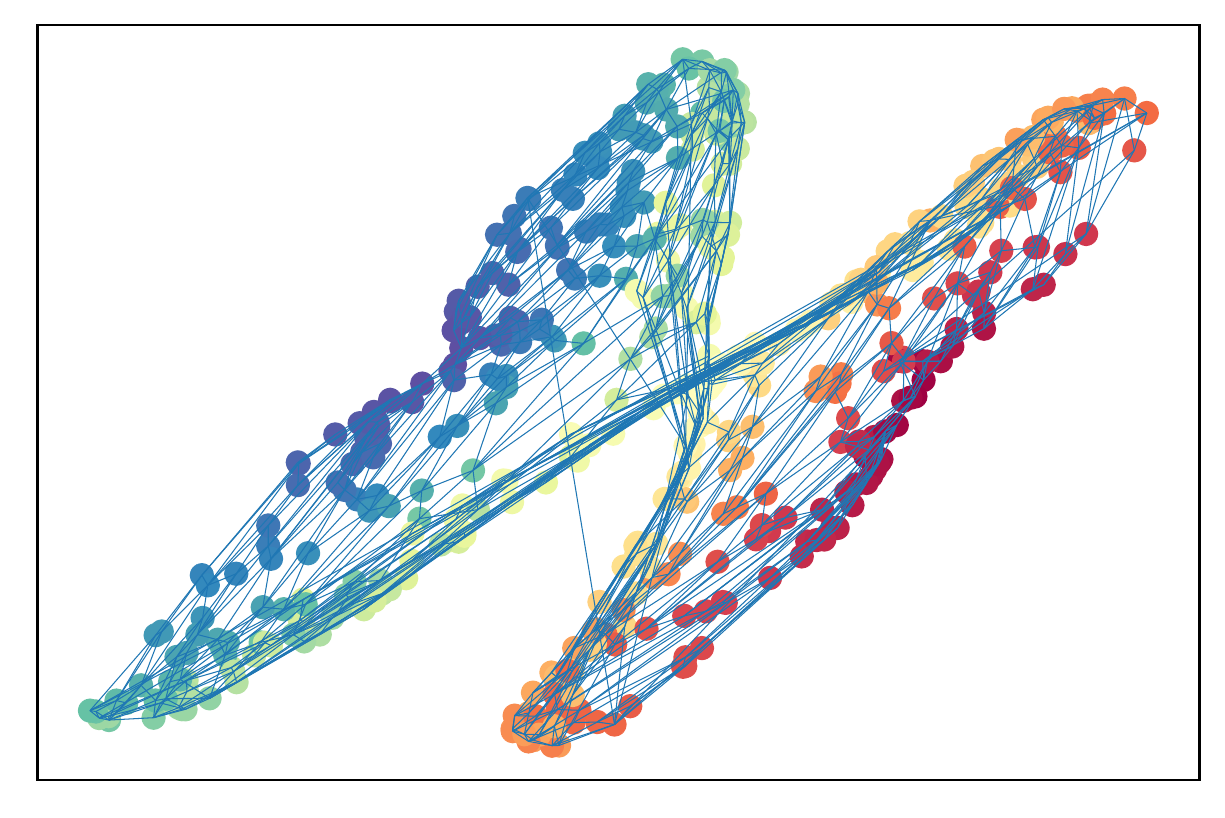}} \;\;\;
     \subfloat[Isomap ]{\includegraphics[width =0.11\textwidth, height = 0.13\textwidth]{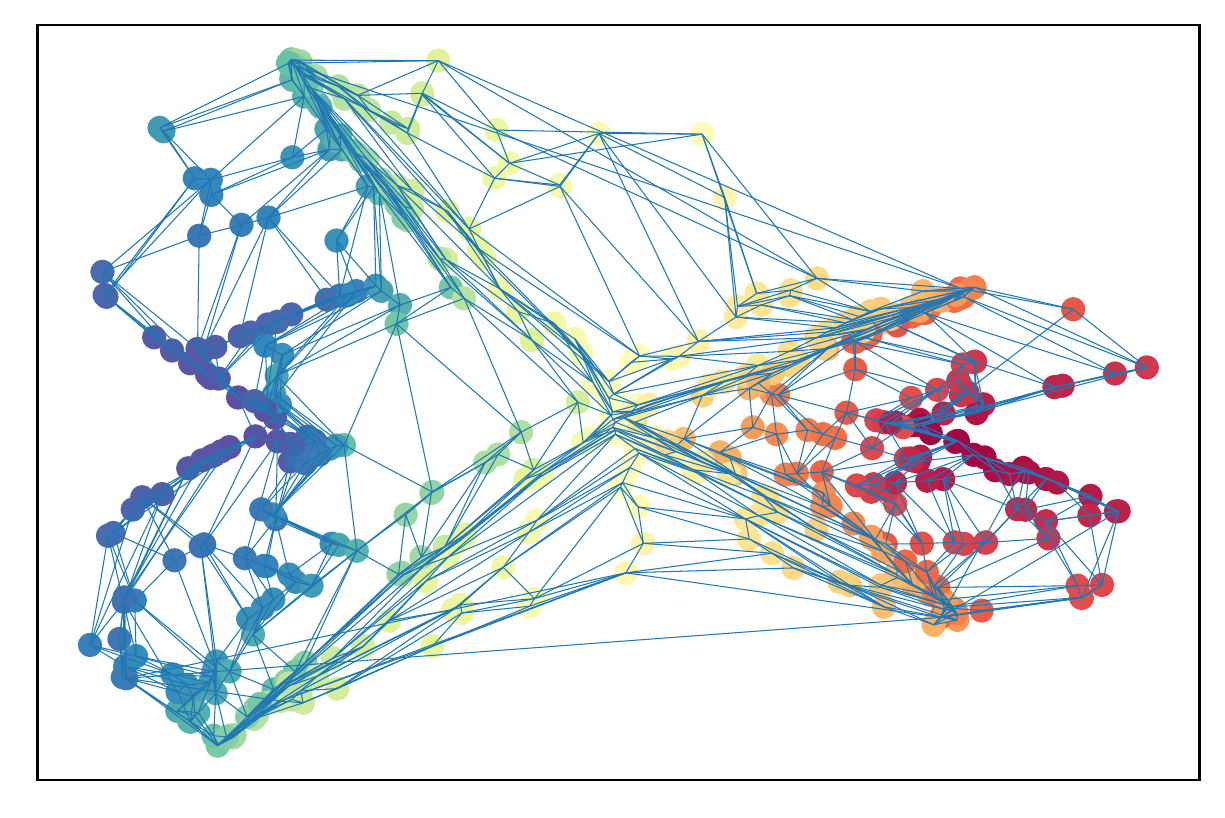}}\;\;\;
     \subfloat[LE ]{\includegraphics[width =0.11\textwidth, height = 0.13\textwidth]{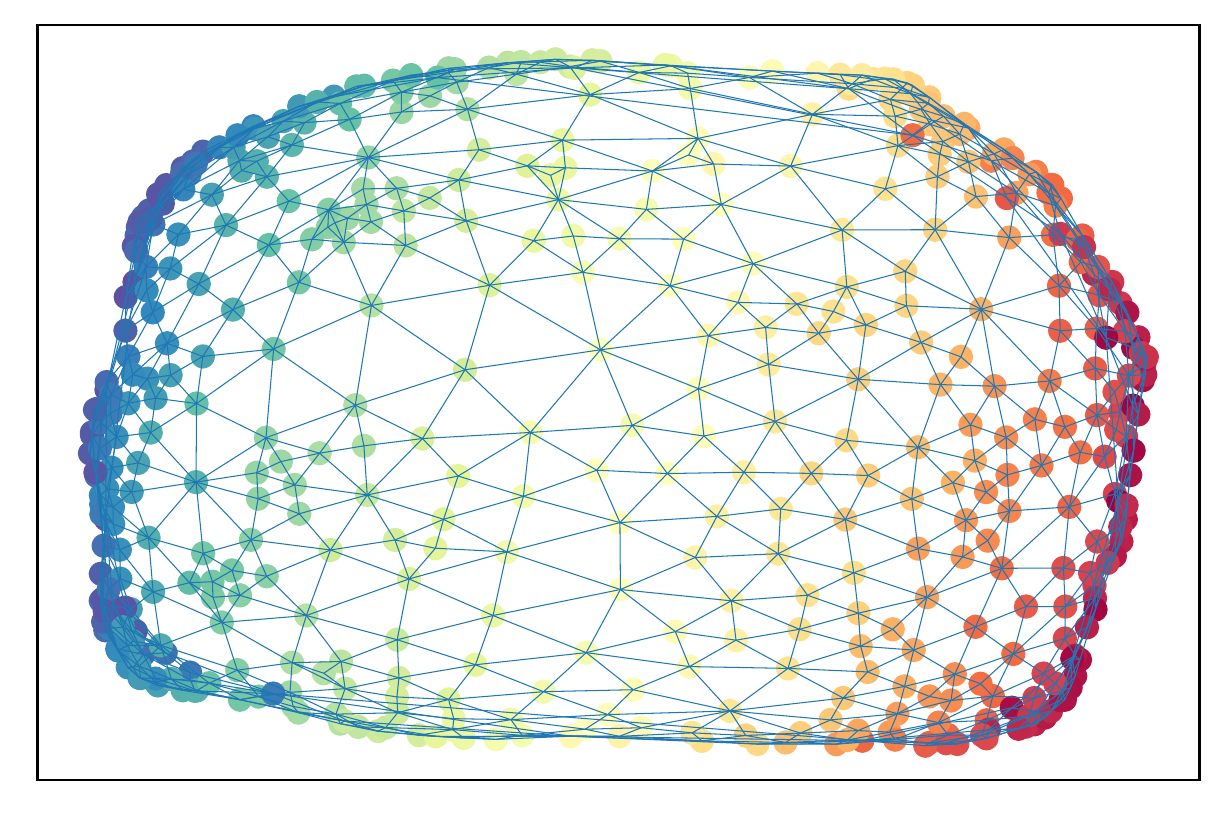}}\;\;\;
     \subfloat[LLE ]{\includegraphics[width =0.11\textwidth, height = 0.13\textwidth]{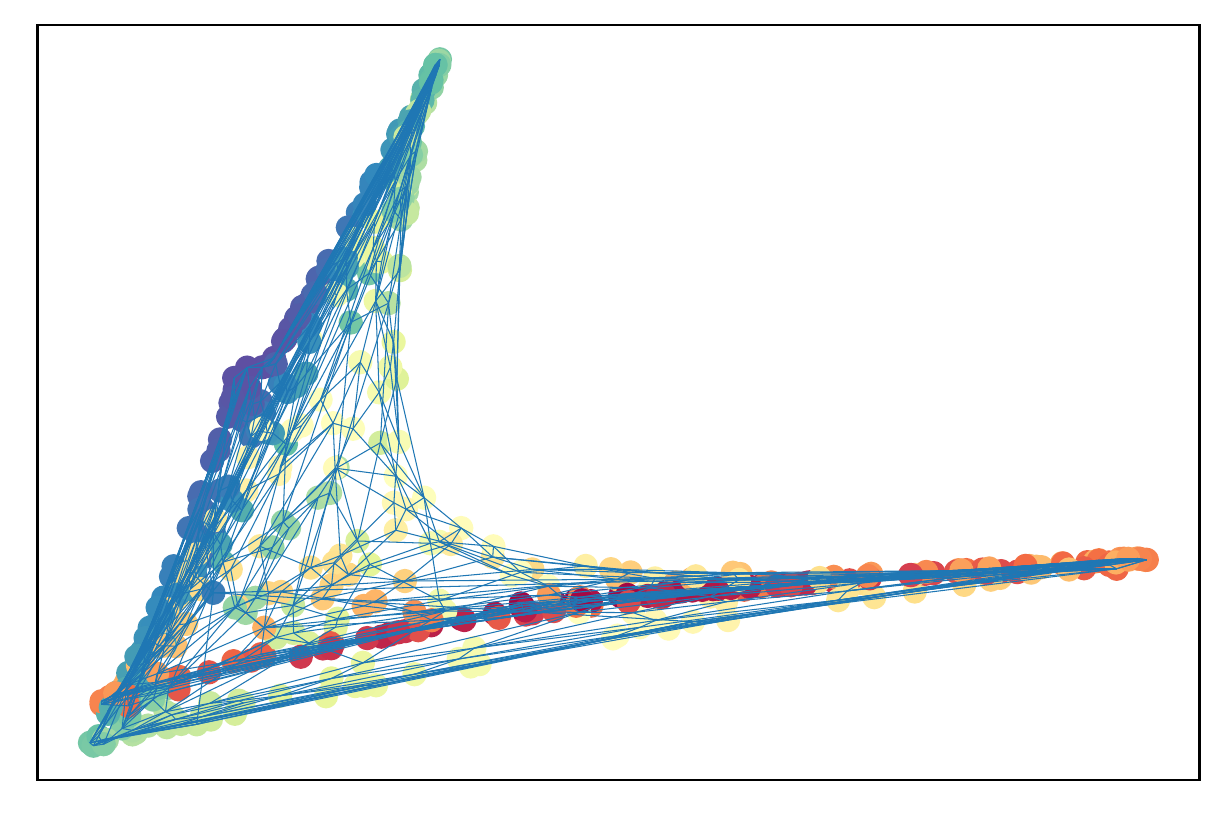}}\;\;\;
     \subfloat[LTSA ]{\includegraphics[width =0.11\textwidth, height = 0.13\textwidth]{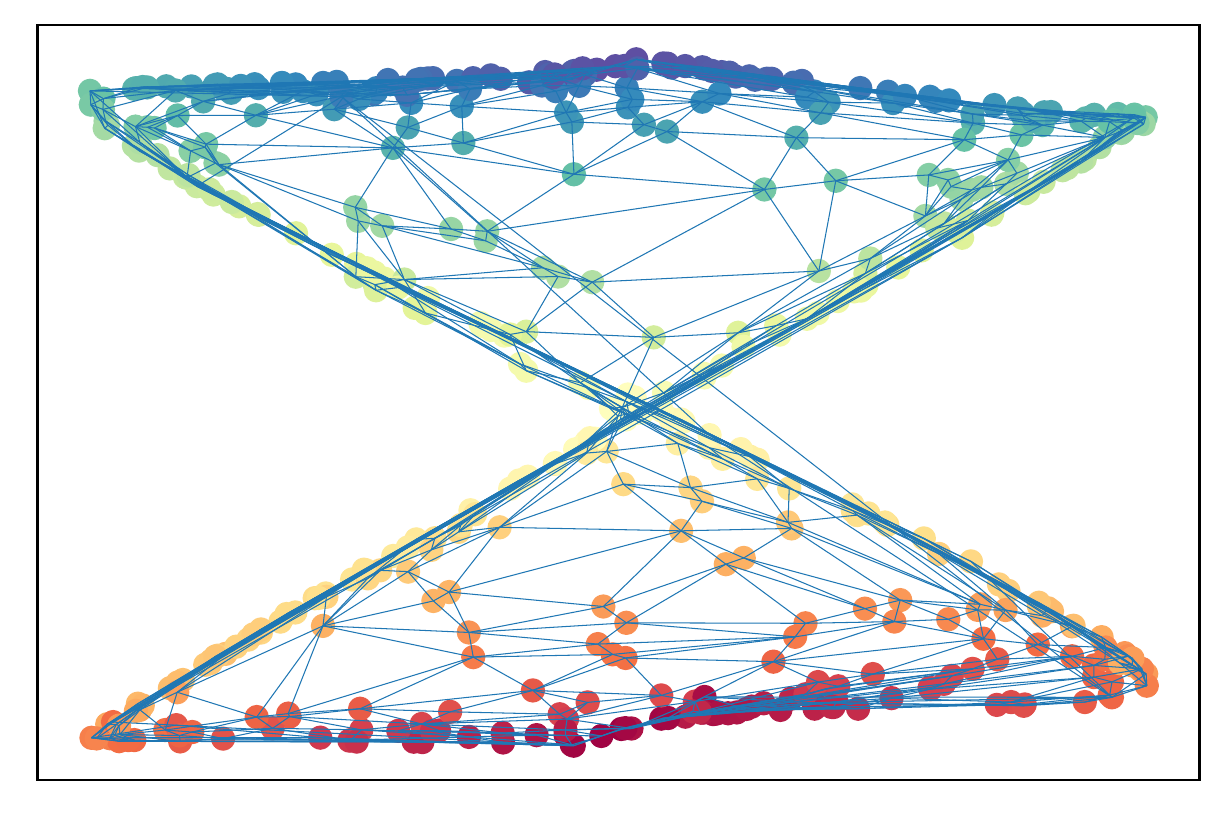}}\;\;\;
     \subfloat[MDS ]{\includegraphics[width =0.11\textwidth, height = 0.13\textwidth]{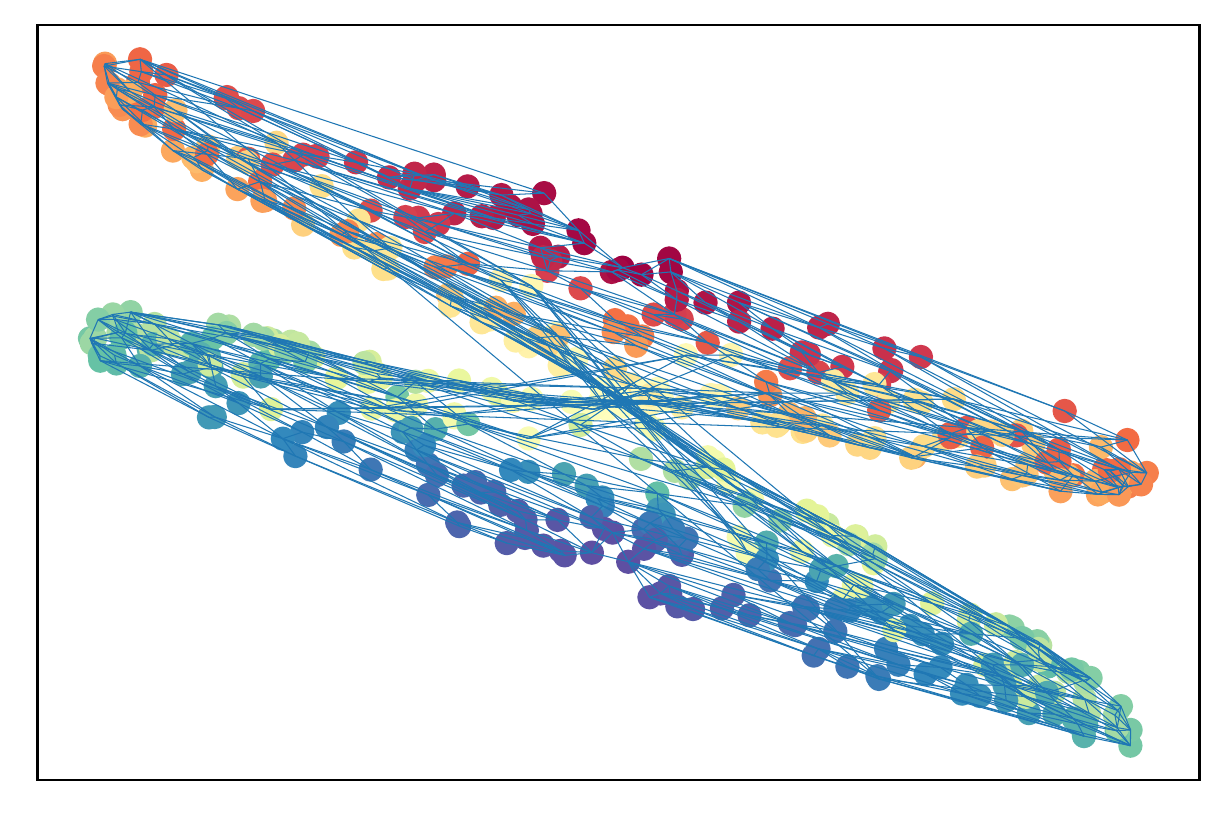}}\;\;\;
     \subfloat[t-SNE ]{\includegraphics[width =0.11\textwidth, height = 0.13\textwidth]{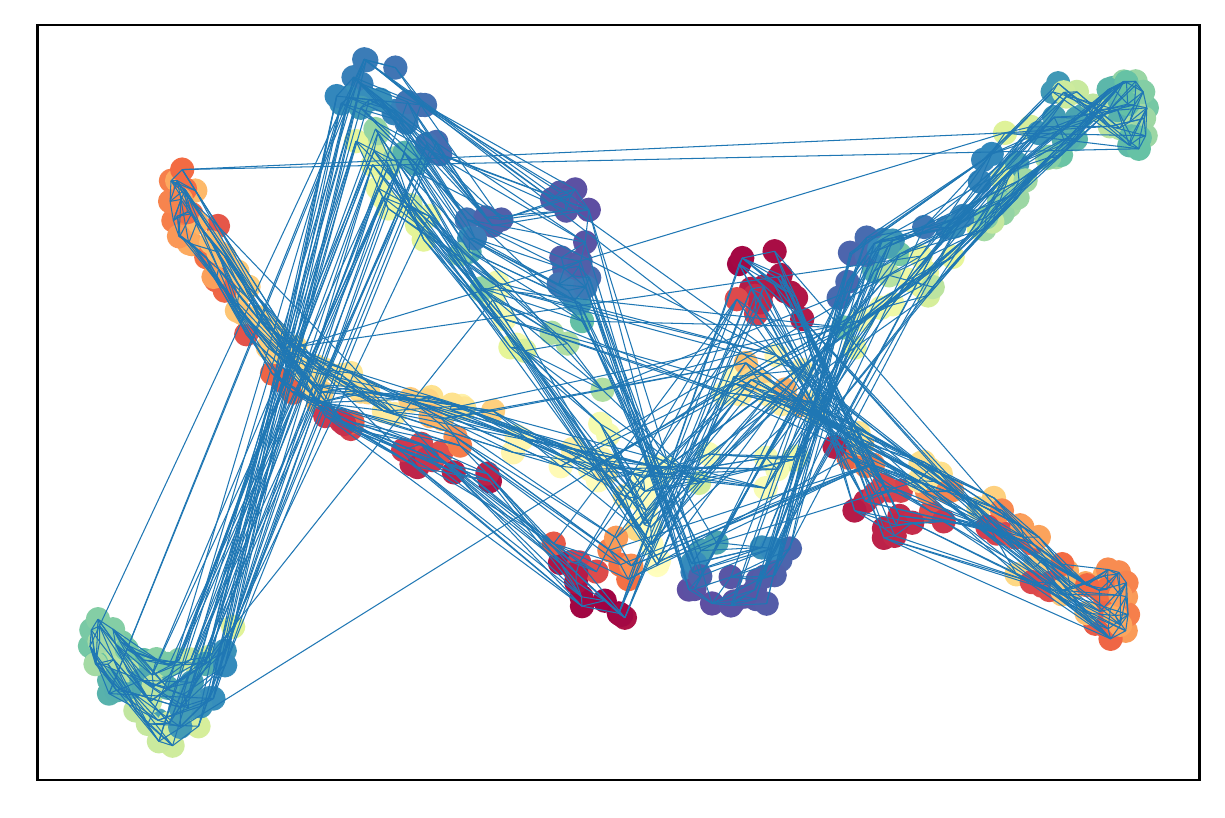}}\;\;\;
\caption{Other methods on Twinpeaks: (a) 3332 crosses,(b) 964 crosses, (c) 764 crosses, (d) 3751 cross, (e) 2976 crosses, (f) 3584 crosses, (g) 18282 crosses}   
\label{other_methods_twinpeaks}
\end{figure}


A summary of all manifolds included in the experiment and the number of line crosses generated from the methods other than FPLM are included in the following table: 
\begin{table}[H]
\centering
\begin{tabular}{|l|l|l|l|l|l|l|l|}
\hline
\textbf{Manifolds} &             &        & \multicolumn{3}{l|}{\textbf{Methods and Line   crosses}} &      &       \\ \hline
                  & Autoencoder & Isomap & LE                & LLE              & LTSA              & MDS  & TSNE  \\ \hline
Monkey Saddle      & 54          & 46     & 815               & 57               & 33                & 47   & 735   \\ \hline
Swiss Roll         & 4585        & 1942   & 937               & 3623             & 36773             & 3804 & 10088 \\ \hline
Sphere             & 770         & 1786   & 1683              & 1883             & 1795              & 1667 & 2329  \\ \hline
Twin Peaks         & 100         & 114    & 903               & 2806             & 696               & 91   & 235   \\ \hline
Paraboloid           & 56          & 39     & 1468              & 309              & 309               & 38   & 282   \\ \hline
\end{tabular}
\end{table}

\subsection{Additional result on tetrahedral meshes}
We additionally provide this result to show that FPLM can deal with large number of tetrahedral mesh in $\mathbb R^3$. Note that the boundary of manifold (i.e. in $\mathbb R^4$ or higher)will generally be different compared with the boundary detected in $\mathbb R^3$, since there are many types of embedding functions to map 3 dimensional tetrahedral mesh into $\mathbb R^4$. We select the famous SHARK tetrahedral mesh \cite{sullivan2019pyvista} that contains 17061 tetrahedrons to check the efficiency of FPLM. The results are as follows: 

\begin{figure}[H]
     \centering
     \subfloat[]{\includegraphics[width = 0.15\textwidth, height = 0.15\textwidth]{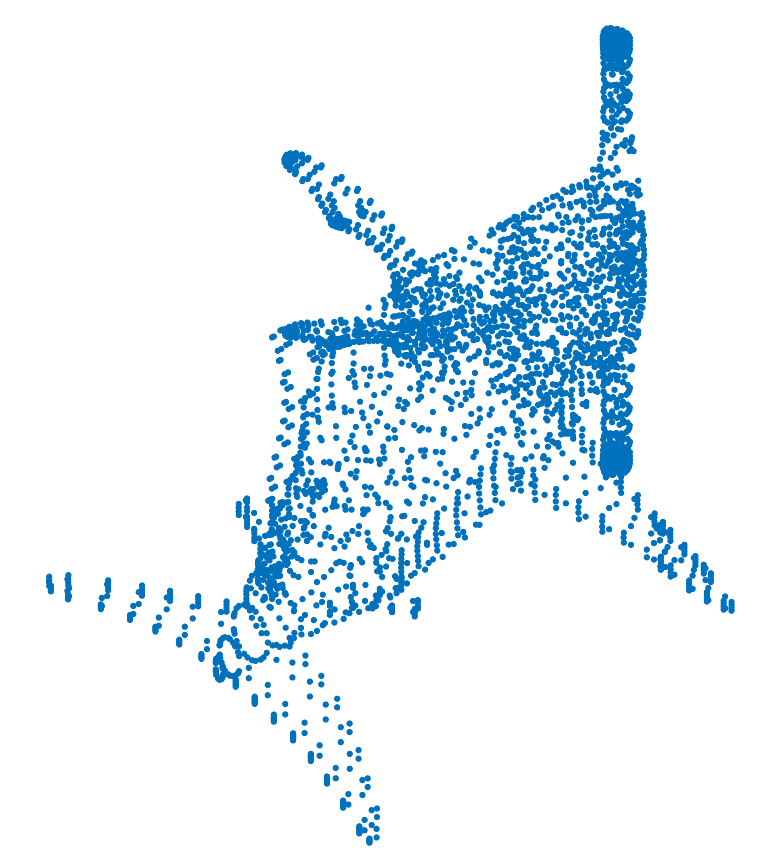}} \;\;\;
     \subfloat[]{\includegraphics[width =0.15\textwidth, height = 0.15\textwidth]{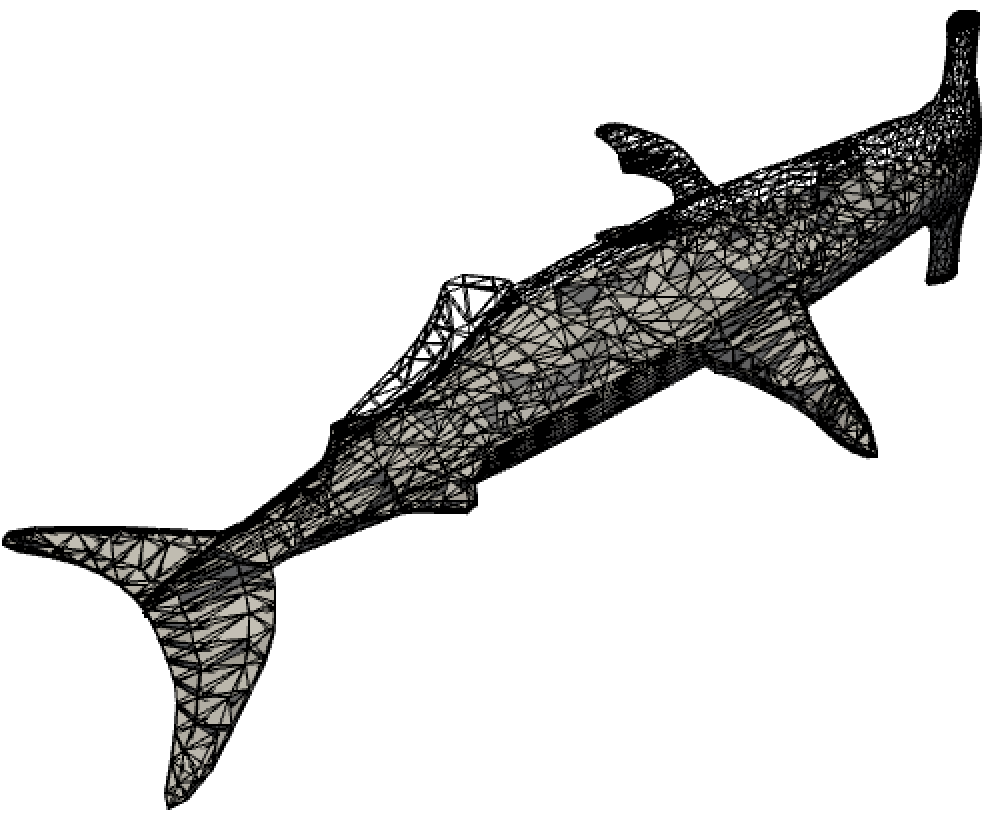}}\;\;\;
      \subfloat[]{\includegraphics[width =0.15\textwidth, height = 0.15\textwidth]{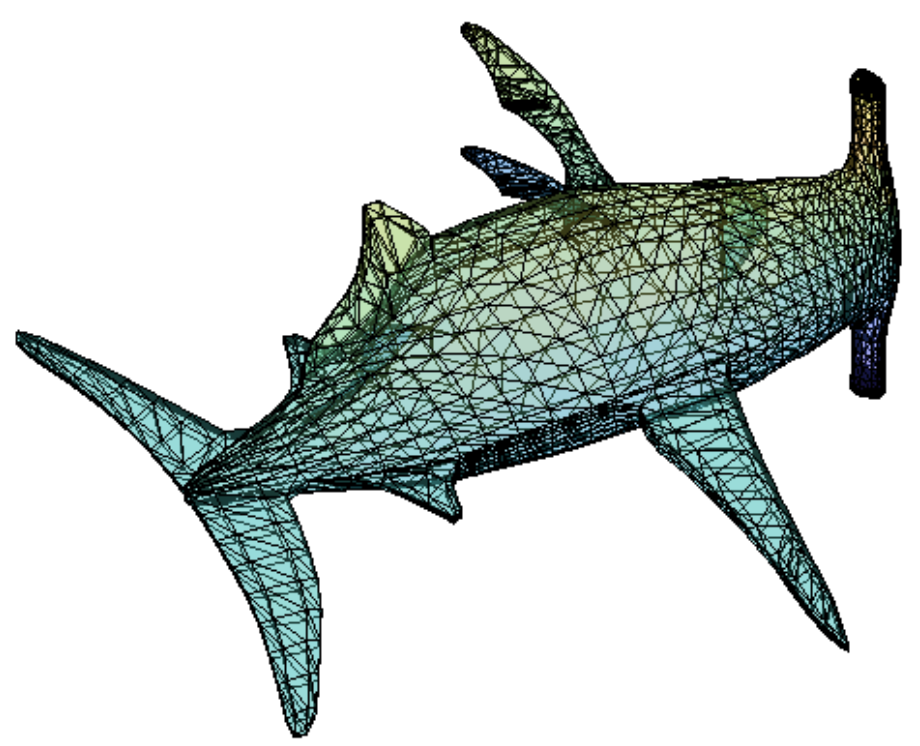}}\;\;\;
     \subfloat[]{\includegraphics[width =0.15\textwidth, height = 0.15\textwidth]{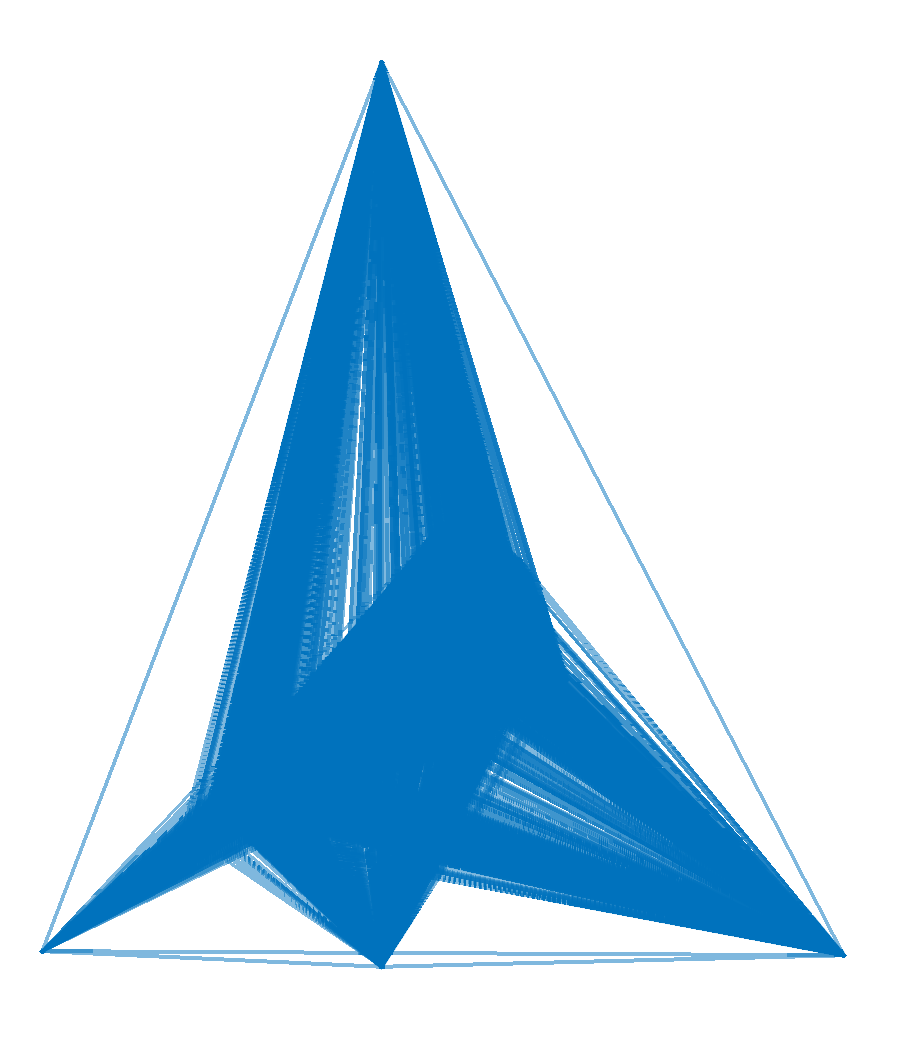}}\;\;\;
     \subfloat[]{\includegraphics[width =0.15\textwidth, height = 0.15\textwidth]{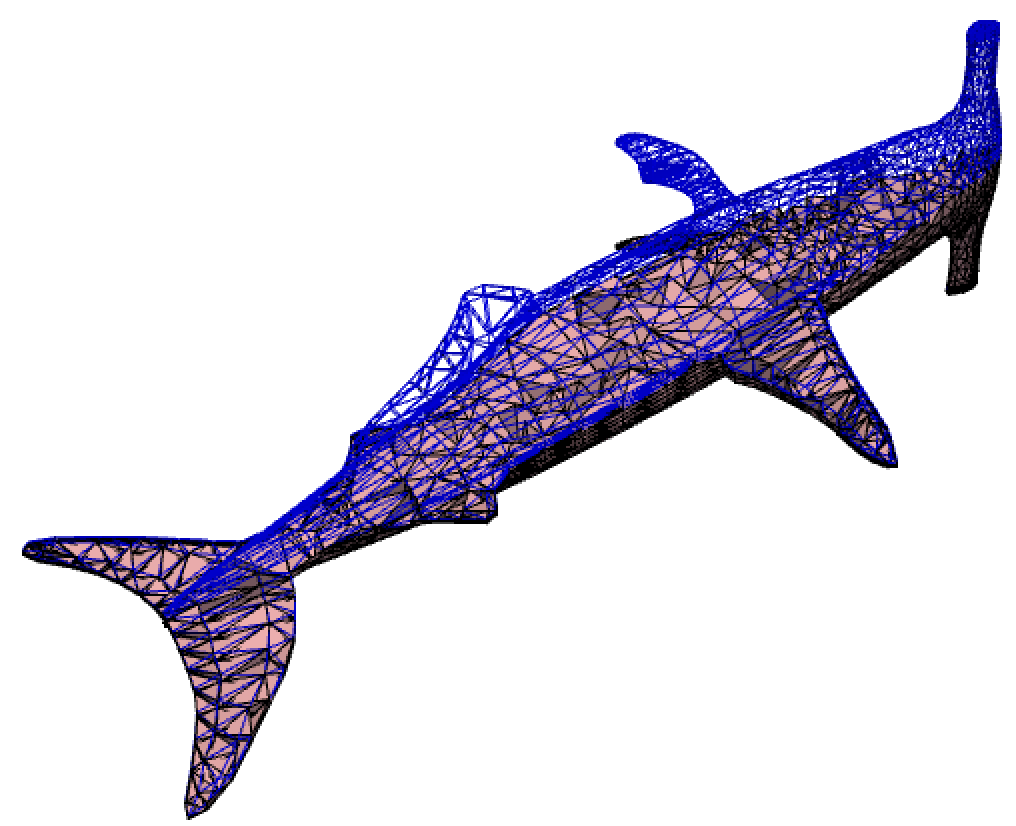}}\;\;\;
\caption{FPLM on shark sharp manifold example: 1. Point scatter 2. tetrahedralization on scatters 3. Boundary detection (faces) 4. First round FPLM 5. Second round FPLM. Total running time:38.5s}  
\label{FPLM_shark}
\end{figure}

\end{document}